\theoremstyle{plain}
\newtheorem{theorem}{Theorem}[section]
\newtheorem{proposition}[theorem]{Proposition}
\newtheorem{corollary}[theorem]{Corollary}
\theoremstyle{definition}
\newtheorem{definition}[theorem]{Definition}
\theoremstyle{remark}
\newtheorem{remark}[theorem]{Remark}
\newcommand{\PreserveBackslash}[1]{\let\temp=\\#1\let\\=\temp}
\newcolumntype{C}[1]{>{\PreserveBackslash\centering}p{#1}}
\newcolumntype{R}[1]{>{\PreserveBackslash\raggedleft}p{#1}}
\newcolumntype{L}[1]{>{\PreserveBackslash\raggedright}p{#1}}
\newcommand{\ip}[2]{{\langle #1, \, #2 \rangle}}
\def\eqref#1{eq.~(\ref{#1})}
\def\Eqref#1{Eq.~(\ref{#1})}
\def\1{\bm{1}}
\def\rd{{\mathrm{d}}}
\def\rmG{{\mathbf{G}}}
\def\rmK{{\mathbf{K}}}
\def\vx{{\bm{x}}}
\DeclareMathAlphabet{\mathsfit}{\encodingdefault}{\sfdefault}{m}{sl}
\SetMathAlphabet{\mathsfit}{bold}{\encodingdefault}{\sfdefault}{bx}{n}
\def\sR{{\mathbb{R}}}
\newcommand{\R}{\mathbb{R}}
\title{Dual Cone Gradient Descent for Training Physics-Informed Neural Networks}
\author{%
    Youngsik Hwang \\
    Artificial Intelligence Graduate School\\
    UNIST\\ 
   \texttt{hys3835@unist.ac.kr} \\
  \And
  Dong-Young Lim\thanks{Corresponding author.}\\
  Department of Industrial Engineering\\
  Artificial Intelligence Graduate School\\
    UNIST\\ 
   \texttt{dlim@unist.ac.kr} \\
}
\begin{document}

\maketitle

\begin{abstract}
Physics-informed neural networks (PINNs) have emerged as a prominent approach for solving partial differential equations (PDEs) by minimizing a combined loss function that incorporates both boundary loss and PDE residual loss. Despite their remarkable empirical performance in various scientific computing tasks, PINNs often fail to generate reasonable solutions, and such pathological behaviors remain difficult to explain and resolve. In this paper, we identify that PINNs can be adversely trained when gradients of each loss function exhibit a significant imbalance in their magnitudes and present a negative inner product value. To address these issues, we propose a novel framework for multi-objective optimization, \textit{Dual Cone Gradient Descent} (DCGD), which adjusts the direction of the updated gradient to ensure it falls within a dual cone region. This region is defined as a set of vectors where the inner products with both the gradients of the PDE residual loss and the boundary loss are non-negative. Theoretically, we analyze the convergence properties of DCGD algorithms in a non-convex setting. On a variety of benchmark equations, we demonstrate that DCGD outperforms other optimization algorithms in terms of various evaluation metrics. In particular, DCGD achieves superior predictive accuracy and enhances the stability of training for failure modes of PINNs and complex PDEs, compared to existing optimally tuned models. Moreover, DCGD can be further improved by combining it with popular strategies for PINNs, including learning rate annealing and the Neural Tangent Kernel (NTK). Codes are available at \texttt{ \textcolor{blue}{\url{https://github.com/youngsikhwang/Dual-Cone-Gradient-Descent}}}.
\end{abstract}

\section{Introduction}\label{sec:intro}

Physics-informed Neural Networks (PINNs) proposed in \citet{raissi2019physics} have created a new paradigm in deep learning for solving forward and inverse problems involving partial differential equations (PDEs). The key idea of PINNs is to integrate physical constraints, governed by PDEs, into the loss function of neural networks. This is in turn equivalent to finding optimal parameters for the neural network by minimizing a loss function that combines boundary loss and PDE residual loss. Thanks to their strong approximation ability and mesh-free advantage, PINNs have achieved great success in a wide range of applications~\citep{STRELOW2023gas, sharma2023review, Wiecha21nano, islam2021extraction, smith2022hyposvi, verma2024climode,ni2023sliced}.

Building upon this success, the applications of PINNs have been extended to solve other functional equations, including integro-differential equations \cite{yuan2022pinn}, fractional PDEs \cite{pang2019fpinn}, and stochastic PDEs \cite{zhang2020spde}. Moreover, numerous variants of PINNs have been developed to enhance their computational efficiency and accuracy via domain decomposition methods \citep{kharazmi2021hp, jagtap2021extended}, advanced neural network architectures \citep{wu2022physics,zhao2023pinnsformer,Cho2023Hyper,Cho2023Separable,han2023hierarchical}, modified loss functions \citep{yu2022gradient, son2023enhanced, NEURIPS2022_374050dc}, different sampling strategies \citep{wang2022respecting, wu2023comprehensive, Daw2023R3}, and probabilistic PINNs \citep{vadeboncoeur2023fully, vadeboncoeur23aRandomgrid}. 

Despite these achievements, several studies have reported that PINNs often fail to learn correct solutions for given problems ranging from highly complex to relatively simple PDEs \citep{Krishnapriyan2021failure, wang2021understanding, wang2022NTK}. Due to the unclear nature of pathologies in the training of PINNs, it has become a critical research topic to explain and mitigate these phenomena. For example, \citep{steger2022how, wong2022learning} observed that PINNs tends to get stuck at trivial solutions while violating given PDE constraints over collocation points. The imbalance between PDE residual loss and boundary loss was explored in \citet{wang2021understanding}, and a spectral bias of PINNs was studied in \citet{wang2022NTK}. \citet{pmlr-v202-yao23c} discussed the gap between the loss function and the actual performance. Even with the insights from the aforementioned studies, a comprehensive understanding of PINN's failure modes remains largely unexplored in various scenarios. 

In this paper, we explore these mysterious challenges from a novel perspective of multi-objective optimization. We first provide a geometric analysis showing that PINNs can be adversely trained when the gradients of each loss function exhibit a significant imbalance in their magnitudes, coupled with a negative inner product value. Based on this finding, we characterize a dual cone region where both PDE residual loss and boundary loss can be decreased harmoniously without the adverse training phenomenon. We then propose a novel optimization framework, \textit{Dual Cone Gradient Descent} (DCGD), for training PINNs which updates the gradient direction to be contained in the dual cone region at each iteration. Furthermore, we study the convergence properties of DCGD in a non-convex setting. In particular, we find that DCGD can converge to a Pareto-stationary point. We validate the superior empirical performance and universal applicability of DCGD through extensive experiments.

\section{Preliminaries}

\paragraph{Notation.} The Euclidean scalar product is denoted by $\ip{\cdot}{\cdot}$, with $\|\cdot\|$ standing for the Euclidean norm (where the dimension of the space may vary depending on the context). For a subspace $W$ of a vector space $V$, its orthogonal complement $W^\perp$ is defined as 
$$
W^\perp:= \{v\in V| \ip{u}{v}=0, \quad u\in W \}.
$$
For a vector $v\in V$, the projection of $v$ on a nontrivial subspace $W$ is denoted by $v_{\| W}$. Unless otherwise specified, $V$ represents $\R^d$ throughout the paper. 

\paragraph{Related Works.} Among various research directions in PINNs, we focus on reviewing optimization strategies for PINNs. These can be broadly categorized into three main approaches: adaptive loss balancing, gradient manipulation, and Multi-Task Learning (MTL). As an example of adaptive loss balancing algorithms, \citet{wang2021understanding} proposed a learning rate annealing (LRA) algorithm that balances the loss terms by utilizing gradient statistics. \citet{wang2022NTK} utilized the eigenvalues of the Neural Tangent Kernel (NTK) to address the disparity in convergence rates among different losses of PINNs. For gradient manipulation algorithms, the Dynamic Pulling Method (DPM) was proposed in \cite{kim2021dpm} to prioritize the reduction of the PDE residual loss. In \cite{bahmani2021training}, the authors used the PCGrad algorithm, proposed in \cite{Yu2020PCGrad}, for training PINNs to address multi-task learning challenges. \citet{li2023physics} developed an adaptive gradient descent algorithm (AGDA) that resolves the conflict by projecting boundary condition loss gradient to the normal plane of the PDE residual loss gradient. \citet{pmlr-v202-yao23c} recently developed MultiAdam, a scale-invariant optimizer, to mitigate the domain scaling effect in PINNs. Another important line of gradient manipulation involves Multi-Task Learning (MTL) algorithms, which optimize a single model to perform multiple tasks simultaneously \citep{sener2018mgda, desideri2012mgda, Yu2020PCGrad, Liu2021CAGrad, liu2021IMTL, navon2022nash, senushkin2023AlignmentMTL}. We will discuss that several MTL algorithms can be unified within the proposed DCGD framework. 
\paragraph{Physics-Informed Neural Networks.} Let $\Omega \subseteq \R^d $ be a domain and $\partial \Omega$ be the boundary of $\Omega$. We consider the following nonlinear PDEs:
\begin{equation}\label{eq:pde}
\begin{split}
    \mathcal{N}[u](\vx) &= f(\vx), \quad \vx \in \Omega \\
    \mathcal{B}[u](\vx) &= g(\vx), \quad \vx \in \partial \Omega
\end{split}
\end{equation}
where $\mathcal{N}$ and $\mathcal{B}$  denote a nonlinear differential operator and a boundary condition operator, respectively. We approximate $u(\vx)$ by a deep neural network $u(\vx;\theta)$ parameterized by $\theta$. To train the neural network, the framework of PINNs minimizes the total loss function $\cL (\theta)$, which is a weighted sum of PDE residual loss $\cL_r(\theta)$ and boundary condition loss $\cL_b(\theta)$, defined by:
\begin{align}
 \cL(\theta) := \omega_r \cL_r(\theta) &+ \omega_b \cL_b(\theta)\quad \mbox{with }  \label{eq:total_loss} \\
 \cL_r(\theta) := \frac{1}{N_r} \sum^{N_r}_{i=1}{|\cN[u(\cdot;\theta)](\vx_r^{i}) - f(\vx_r^{i})|^2}, &\quad 
 \cL_b(\theta) := \frac{1}{N_b} \sum^{N_b}_{i=1}{|\cB[u(\cdot;\theta)](\vx_b^i)-g(\vx_b^i)|^2}, \nonumber
\end{align}
where $\omega_r, \omega_b \geq 0$ are weights of each loss term, $\{\vx_r^i\}_{i=1}^{N_r}$ denotes a set of collocation points that are randomly sampled in $\Omega$, and $\{\vx_b^i\}_{i=1}^{N_b}$ the boundary sample points. Here, we set $\omega_r=\omega_u=1$ throughout the paper.  We note that the training of PINNs falls into the category of multi-objective learning due to its structure of the loss function $\cL(\theta)$ in \Eqref{eq:total_loss}. 
\vspace{-5pt}

\section{Empirical Observations and Issues in Training PINNs}\label{sec:motivation}
\begin{wrapfigure}{r}{.4\textwidth}
\vspace{-10pt}
    \centering
    \includegraphics[width=0.4\textwidth]{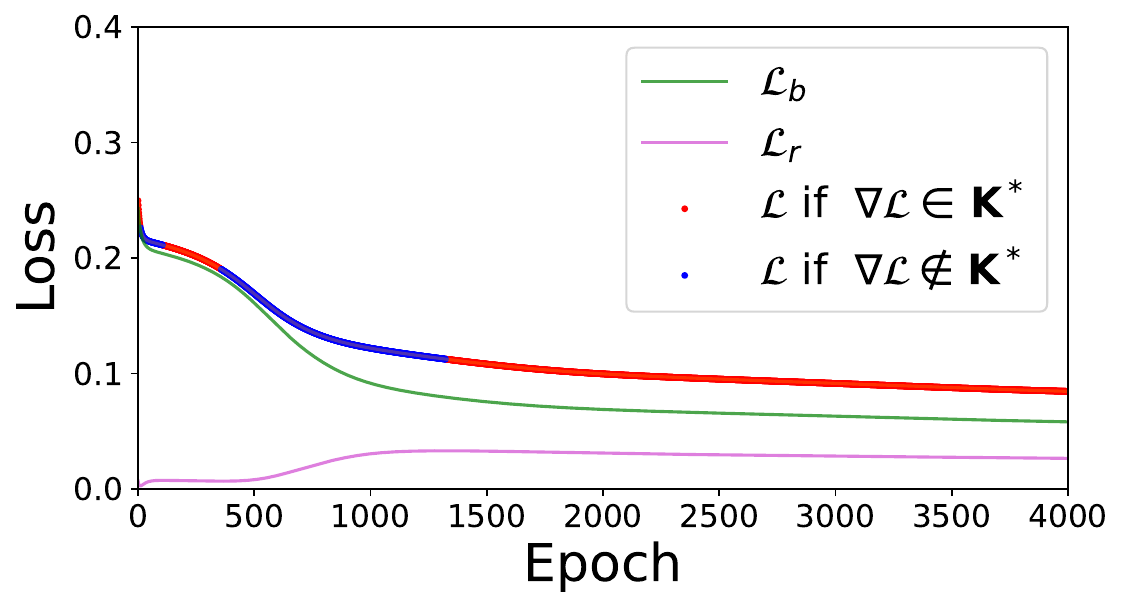} \caption{Training curves for the total loss $\cL$ $(:=\cL_r+\cL_b)$, PDE residual loss $\cL_r$, and boundary loss $\cL_b$ for viscous Burgers' equation.}\label{fig:adverse_training} 
\end{wrapfigure}

This section investigates issues that are frequently observed during the training of PINNs in the context of multi-objective learning. The parameter for the PINN solution $u(\vx;\theta)$ is typically estimated by minimizing the total loss function $\cL(\theta)$ with a (stochastic) gradient descent method\footnote{In practice, adaptive gradient descent algorithms such as ADAM~\cite{kingma2014adam} are widely employed.}:

\[
\theta_{t+1} = \theta_t - \lambda \nabla \cL(\theta_t), \quad t \in \mathbb N
\]

where $\nabla \cL(\theta)$ is the gradient of the total loss function $\cL(\theta)$ with respect to $\theta$. However, a careless adoption of standard gradient descent methods may lead to an incorrect solution, as reducing the total loss does not necessarily imply a decrease in both the PDE residual loss and boundary loss. This phenomenon is clearly illustrated in Figure~\ref{fig:adverse_training}, which displays the curves of the total loss, PDE residual loss, and boundary loss over epochs for solving the viscous Burger's equation. Notably, while the total loss consistently decreases throughout the training, the PDE loss adversely increases.

\paragraph{Conflicting and dominating gradients in PINNs.} This issue is highly related with discrepancies in the direction and magnitude between two gradients of the PDE residual and boundary loss. Specifically, we define two gradients to be \textit{conflicting} at the $t$-th iteration if they have a negative inner product value, i.e., $\frac{\pi}{2}<\phi_t\leq \pi$ where $\phi_t$ is the angle between $\nabla \cL_r(\theta_t)$ and $\nabla \cL_b(\theta_t)$. When there are conflicting gradients, parameter updates to minimize one loss function might increase the other, leading to an inefficient learning process such as oscillating between optimizing for the two loss functions and resulting in degraded solution quality \citep{yu2022gradient}. Another problem arises when one gradient is much larger than the other, i.e., $\|\nabla \cL_r (\theta_t)\| \ll \|\nabla \cL_b (\theta_t)\|$ or $\|\nabla \cL_r (\theta_t)\| \gg \|\nabla \cL_b (\theta_t)\|$. The significant differences\footnote{Our subsequent analysis in Section~\ref{subsec:dual_cone} will clearly identify the extent to which significant differences in gradient magnitude lead to a challenge.} in the magnitudes of gradients in PINNs might create a situation where the optimization algorithm primarily minimizes one loss function while neglecting the other. This often results in slow convergence and overshooting, as the smaller gradient, though neglected, may be more crucial in finding a better solution. To mitigate the imbalance in the gradients, loss balancing approaches to rescale the weights of each loss term have been proposed \citep{wang2021understanding, wang2022NTK}.

\begin{figure}
        \centering 
    \begin{subfigure}{0.25\textwidth}
        \includegraphics[width=\textwidth]{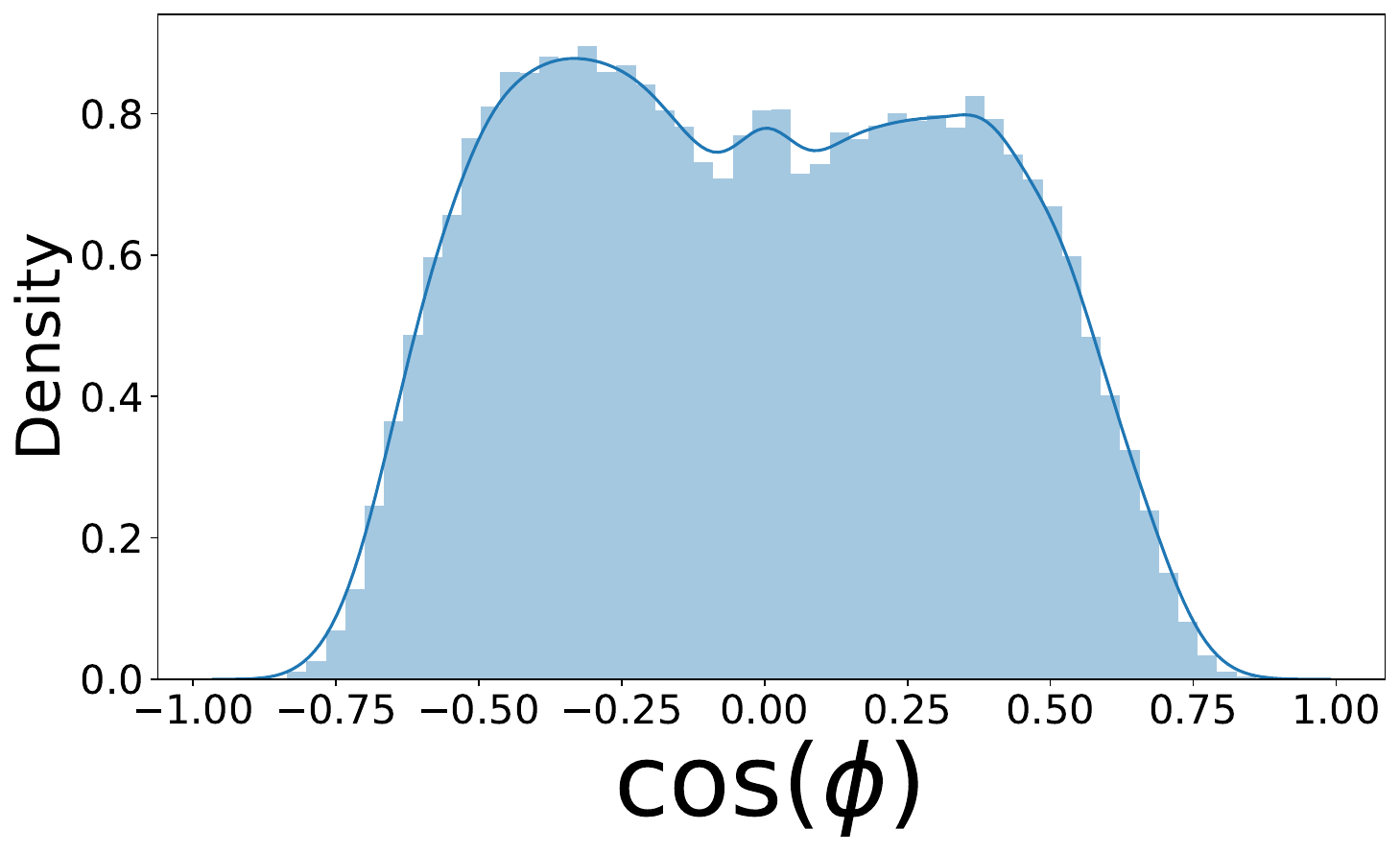}
        \caption{histogram of \(\cos (\phi)\)}
        \label{fig:conf_dom_grads(a)}
    \end{subfigure}
    \hspace{0.1\textwidth}
    \begin{subfigure}{0.25\textwidth}
        \includegraphics[width=\textwidth]{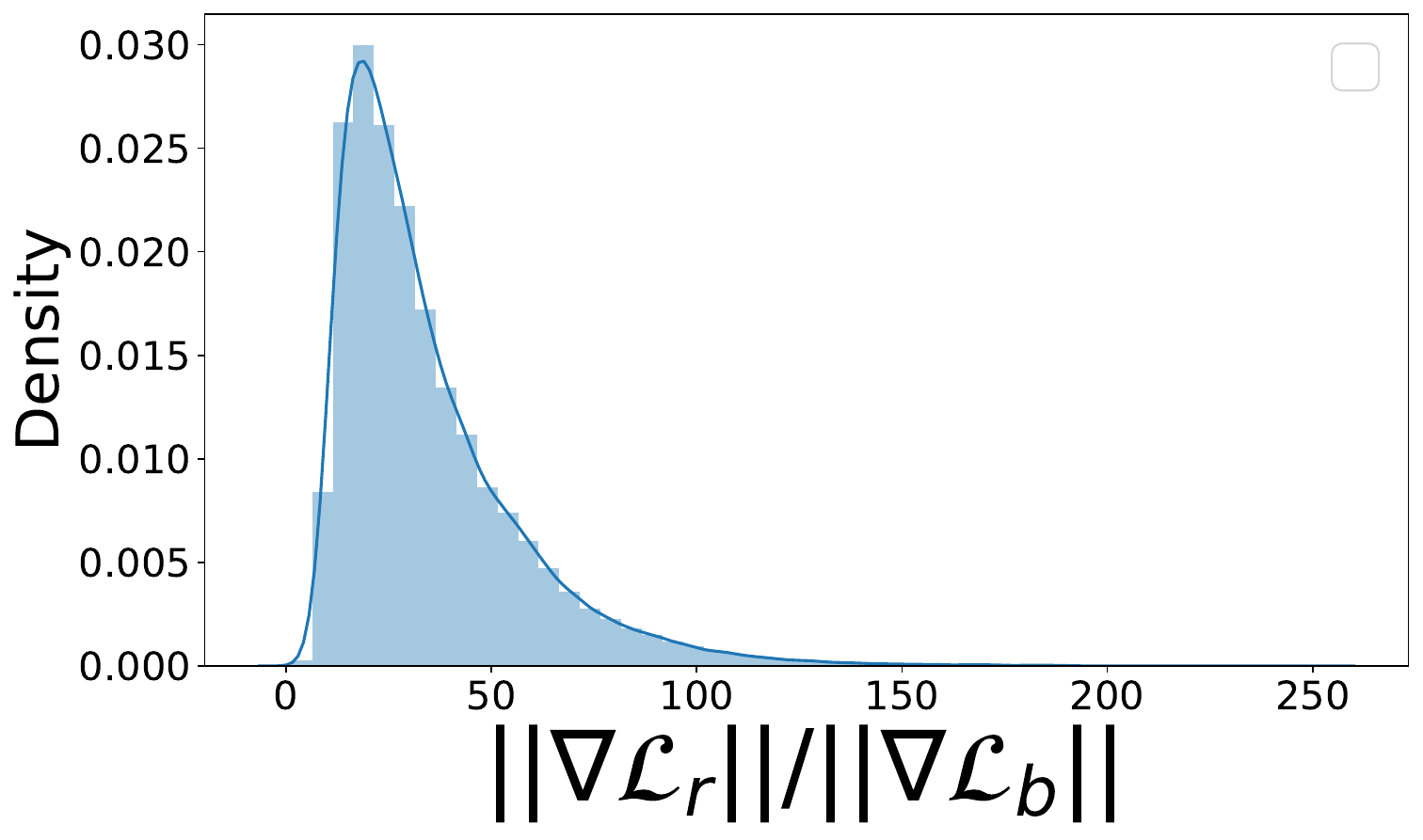}
        \caption{histogram of \(R\)}
        \label{fig:conf_dom_grads(b)}
    \end{subfigure}
    \caption{Conflicting and dominating gradients in PINNs. Here, $\phi$ is defined as the angle between \(\nabla \cL_r\) and \(\nabla \cL_b\), \(R = \frac{\|\nabla \cL_r\|}{ \|\nabla \cL_b\|} \) is the magnitude ratio between gradients.}
    \label{fig:conf_dom_grads}
\vspace{-10pt}    
\end{figure}
To examine these challenges in training PINNs, we record cosine value of the angle between $\nabla \cL_r$ and $\nabla \cL_b$, and the ratios of their magnitudes while training a PINN for the Helmholtz equation. Figure~\ref{fig:conf_dom_grads}(a) shows that conflicting gradients are observed in about half of the total iterations. Moreover, we observe that the magnitude of the gradient of the PDE residual is several tens to hundreds of times larger than that of the boundary loss (See Figure~\ref{fig:conf_dom_grads}(b)). That is, conflicting and dominating gradients are prevalent issues in the training of PINNs. 
\section{Methodology}\label{sec:methodology}
In this section, we provide a geometric analysis to identify a dual cone region where both the PDE residual loss and the boundary loss can decrease simultaneously. Subsequently, we introduce a general framework for DCGD algorithms, ensuring that the updated gradient falls within this region. We then propose three specifications of DCGD algorithms: projection, average, and center. All proofs for main results in this section can be found in Appendix~\ref{app:proofs}.

\subsection{Dual Cone Region}\label{subsec:dual_cone}
The concept of a dual cone plays a pivotal role in our DCGD algorithm. Formally, a dual cone is defined as a set of vectors that have nonnegative inner product values with a given cone. 

\begin{definition}\label{def:dual_cone} (Dual cone) Let $\rmK$ be a cone of $\sR^d$. Then, the set 
$$
\rmK^* = \{ y | \ip{x}{y} \geq 0 \quad \mbox{for all } x\in \rmK, y\in \sR^d\}
$$
is called the \textit{dual cone} of $\rmK$. 
\end{definition}
For each iteration $t$, consider a cone denoted by $\rmK_t$, which is generated by rays of two gradients, $\nabla \cL_r(\theta_t)$ and $\nabla \cL_b(\theta_t)$: 
$$
\rmK_t:=\left\{c x | c\geq 0, x \in \{\nabla \cL_r(\theta_t), \nabla \cL_b(\theta_t)\}\right\}.
$$
In the context of PINNs, the dual cone of $\rmK_t$, denoted by $\rmK_t^*$, represents the set of gradient vectors where each vector is neither conflicting with the gradient of the PDE loss nor with the gradient of the boundary loss, i.e., for $u\in \rmK_t^*$, $\ip{u}{\nabla \cL_r(\theta_t)} \geq 0$ and $\ip{u}{\nabla \cL_b(\theta_t)} \geq 0$.



In other words, when the total gradient $\nabla \cL(\theta_t)$ is in $\rmK_t^*$ (as depicted by the region of the red line in Figure~\ref{fig:adverse_training}), the standard gradient descent taking the direction $\nabla \cL(\theta_t)$ will decrease both the PDE and boundary losses for a suitable step size. On the other hand, if $\nabla \cL(\theta_t) \notin \rmK_t^*$ (the region indicated by the blue line in Figure~\ref{fig:adverse_training}), one of the two losses will adversely increase even with sufficiently small step sizes.

This indicates that the training process of PINNs can significantly vary  depending on whether the total gradient belongs to the dual cone region. The following theorem establishes the necessary and sufficient conditions under which the total gradient falls within the dual cone region in terms of the angle and relative magnitude between the gradients of the PDE residual and boundary loss.

\begin{theorem}\label{thm:dual_cone_region}
Suppose that $\nabla \cL_r(\theta_t)$ and $\nabla \cL_b(\theta_t)$ are given at each iteration $t$.  Let $\phi_t$ be the angle between $\nabla \cL_r(\theta_t)$ and $\nabla \cL_b(\theta_t)$, and $R = \frac{\|\nabla \cL_r(\theta_t)\|}{\|\nabla \cL_b(\theta_t)\|}$ be their relative magnitude. Then, $\nabla \cL(\theta_t)  \in \rmK_t^*$ if and only if
\begin{align*}
    &(i) \; \ip{\nabla \cL_b(\theta_t)}{\nabla \cL_r(\theta_t)} \geq 0 \; \text{, or} \\    
    &(ii) \; \ip{\nabla \cL_b(\theta_t)}{\nabla \cL_r(\theta_t)} < 0 \; \text{and} \; -\cos\phi_t \leq R \leq -\frac{1}{\cos\phi_t}.
\end{align*}
\end{theorem}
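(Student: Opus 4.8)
The plan is to work directly with the definition of the dual cone. Writing $g_r := \nabla \cL_r(\theta_t)$ and $g_b := \nabla \cL_b(\theta_t)$, the total gradient is $\nabla \cL(\theta_t) = g_r + g_b$ (since $\omega_r = \omega_b = 1$), and by Definition~\ref{def:dual_cone} the condition $\nabla \cL(\theta_t) \in \rmK_t^*$ is equivalent to the pair of scalar inequalities
\begin{align*}
\ip{g_r + g_b}{g_r} &\geq 0, \\
\ip{g_r + g_b}{g_b} &\geq 0,
\end{align*}
because $\rmK_t$ is generated by the two rays through $g_r$ and $g_b$, and a nonnegative inner product with each generating ray is equivalent to a nonnegative inner product with every element of the cone. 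Expanding these gives $\|g_r\|^2 + \ip{g_r}{g_b} \geq 0$ and $\|g_b\|^2 + \ip{g_r}{g_b} \geq 0$.

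Next I would substitute $\ip{g_r}{g_b} = \|g_r\|\|g_b\|\cos\phi_t$ and divide through by $\|g_b\|^2$ (assuming $g_b \neq 0$; the degenerate case where either gradient vanishes should be handled separately or excluded by the standing hypothesis that both gradients are ``given'' and nonzero). With $R = \|g_r\|/\|g_b\|$, the two inequalities become $R^2 + R\cos\phi_t \geq 0$ and $1 + R\cos\phi_t \geq 0$, i.e. $R(R + \cos\phi_t) \geq 0$ and $1 + R\cos\phi_t \geq 0$. Since $R > 0$, the first reduces to $R \geq -\cos\phi_t$. Now case on the sign of $\cos\phi_t$: if $\cos\phi_t \geq 0$ (case (i), $\ip{g_b}{g_r} \geq 0$), then $R \geq -\cos\phi_t$ holds automatically because the right side is nonpositive, and $1 + R\cos\phi_t \geq 1 > 0$ also holds automatically — so both inequalities are satisfied with no constraint on $R$, matching conclusion (i). If $\cos\phi_t < 0$ (case (ii)), then the second inequality $1 + R\cos\phi_t \geq 0$ rearranges to $R \leq -1/\cos\phi_t$ (dividing by the negative number $\cos\phi_t$ flips the inequality), and combining with $R \geq -\cos\phi_t$ yields exactly $-\cos\phi_t \leq R \leq -1/\cos\phi_t$, matching conclusion (ii). The ``if'' direction just runs the same equivalences backward.

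The only genuinely substantive point — and the place I'd be most careful — is justifying that membership in $\rmK_t^*$ reduces to nonnegative inner products against the two generators $g_r, g_b$ rather than against all of $\rmK_t$; this follows because any $x \in \rmK_t$ is of the form $cx'$ with $c \geq 0$ and $x' \in \{g_r, g_b\}$, so $\ip{x}{\nabla\cL(\theta_t)} = c\ip{x'}{\nabla\cL(\theta_t)}$, and this is nonnegative for all such $x$ iff it is nonnegative for the two generators. A secondary subtlety is consistency of the two displayed sub-cases: in case (ii) we need $-\cos\phi_t \leq -1/\cos\phi_t$ to be non-vacuous, which holds precisely because $\cos\phi_t \in [-1,0)$ forces $\cos^2\phi_t \leq 1$, hence $-\cos\phi_t \leq -1/\cos\phi_t$; this is worth a one-line remark so the interval is never empty. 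Everything else is elementary algebra, so the proof is short once the generator reduction is stated cleanly.
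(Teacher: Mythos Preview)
Your proof is correct and follows essentially the same approach as the paper: both reduce membership in $\rmK_t^*$ to the two scalar inequalities $\ip{\nabla\cL(\theta_t)}{\nabla\cL_r(\theta_t)} \geq 0$ and $\ip{\nabla\cL(\theta_t)}{\nabla\cL_b(\theta_t)} \geq 0$, expand them via $\ip{g_r}{g_b} = \|g_r\|\|g_b\|\cos\phi_t$, and case on the sign of $\cos\phi_t$. Your version is slightly more explicit about why checking the two generators suffices and about the non-degeneracy of the interval in case~(ii), but the algebra is the same.
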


Theorem~\ref{thm:dual_cone_region} provides a clear criterion for when conflicting and dominating gradients lead to adverse training in PINNs. For instance, the condition (ii) in Theorem~\ref{thm:dual_cone_region} implies that the larger $\phi_t$ (the more conflicting they are), even a slight difference in their magnitudes can result in adverse training. In particular, Theorem~\ref{thm:dual_cone_region} quantifies the extent of problematic relative magnitude between the two gradients, thereby clarifying the concept of dominating gradients, which has not been previously defined in the literature. 

\begin{figure}[htb!]
   \centering 
   \begin{tabular}{cc}
       \includegraphics[width=0.35\textwidth ]{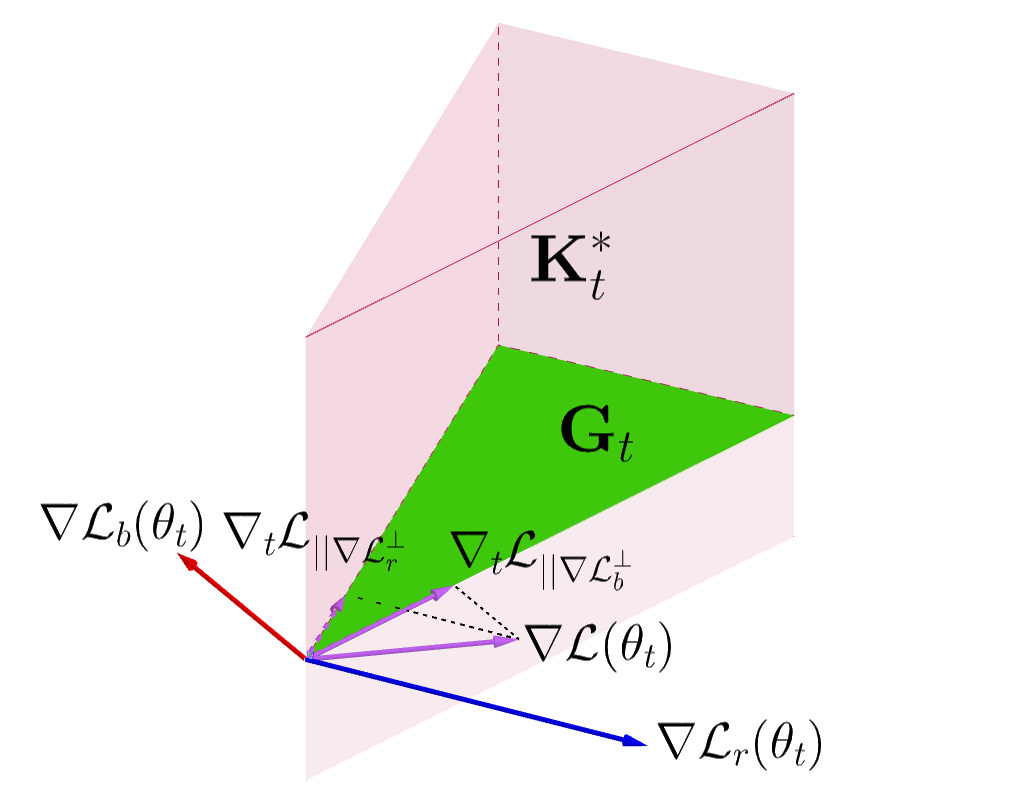} 
   \end{tabular}
\caption{Visualization of dual cone region $\rmK_t^*$ and its subspace $\rmG_t$}\label{fig:region_G}
\end{figure}

Thus, our strategy aims to devise an algorithm that chooses the updated gradient within the dual cone region at each gradient descent step. For notational simplicity, we write $\nabla_t \cL_{\|\nabla \cL_r^\perp}$ and $\nabla_t \cL_{\|\nabla \cL_b^\perp}$ to represent $\nabla \cL(\theta_t)_{\|\left(\nabla \cL_r(\theta_t)\right)^\perp}$ and $\nabla \cL(\theta_t)_{\|\left(\nabla \cL_b(\theta_t)\right)^\perp}$, respectively. In particular, we are interested in a simple and explicit subspace $\rmG_t$, defined as the set of conic combinations of $\nabla_t \cL_{\|\nabla \cL_r^\perp}$ and $\nabla_t \cL_{\|\nabla \cL_b^\perp}$:
\begin{align}\label{eq:G}
    \rmG_t := \left\{c_1 \nabla_t \cL_{\|\nabla \cL_r^\perp} + c_2 \nabla_t \cL_{\|\nabla \cL_b^\perp} \big| c_1, c_2\geq 0 \right\},
\end{align}
for two reasons. Firstly, all vectors in $\rmG_t$ are easily computable due to the explicit expression of $\rmG_t$, whereas the dual cone $\rmK^*$ is implicitly defined. Secondly, $\rmG_t$ contains two important components of $\rmK_t^*$, which are the projections of $\nabla \cL(\theta_t)$ onto $\nabla \cL_r(\theta_t)^\perp$ and $\nabla \cL_b(\theta_t)^\perp$ by its construction. The next proposition shows that $\rmG_t$ always belongs to the dual cone region as illustrated in Figure~\ref{fig:region_G}.

\begin{proposition}\label{prop:dual_cone_G}
Suppose that $\nabla \cL_r(\theta_t)$ and $\nabla \cL_b(\theta_t)$ are given at each iteration $t$. Consider $\rmG_t$, the set of conic combinations of  $\nabla_t \cL_{\|\nabla \cL_r^\perp}$ and $\nabla_t \cL_{\|\nabla \cL_b^\perp}$, defined in \Eqref{eq:G}. Then, $\rmG_t \subseteq \rmK_t^{*}$. 
\end{proposition}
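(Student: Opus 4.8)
The plan is to show that an arbitrary conic combination $u = c_1 \nabla_t \cL_{\|\nabla \cL_r^\perp} + c_2 \nabla_t \cL_{\|\nabla \cL_b^\perp}$ with $c_1, c_2 \geq 0$ satisfies both $\ip{u}{\nabla \cL_r(\theta_t)} \geq 0$ and $\ip{u}{\nabla \cL_b(\theta_t)} \geq 0$; by Definition~\ref{def:dual_cone} and the fact that $\rmK_t$ is generated by the rays of $\nabla \cL_r(\theta_t)$ and $\nabla \cL_b(\theta_t)$, this is exactly the statement $u \in \rmK_t^*$, hence $\rmG_t \subseteq \rmK_t^*$. (One should note that $\ip{x}{y}\geq 0$ for the two generators implies $\ip{cx}{y}\geq 0$ for all $c\geq 0$, so checking the two generators suffices.)

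The key computation uses the defining property of orthogonal projection. Write $\nabla \cL = \nabla \cL(\theta_t)$, $a = \nabla \cL_r(\theta_t)$, $b = \nabla \cL_b(\theta_t)$ for brevity. Since $\nabla_t \cL_{\|a^\perp}$ is the projection of $\nabla \cL$ onto the orthogonal complement of $\mathrm{span}(a)$, we have $\ip{\nabla_t \cL_{\|a^\perp}}{a} = 0$, and moreover $\nabla \cL = \nabla_t \cL_{\|a^\perp} + (\text{component along } a)$, so $\ip{\nabla_t \cL_{\|a^\perp}}{\nabla \cL} = \|\nabla_t \cL_{\|a^\perp}\|^2 \geq 0$. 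Here I would use that $\nabla \cL = a + b$ (since $\omega_r = \omega_b = 1$), which gives $\ip{\nabla_t \cL_{\|a^\perp}}{a + b} = \|\nabla_t \cL_{\|a^\perp}\|^2$, and combined with $\ip{\nabla_t \cL_{\|a^\perp}}{a} = 0$ this yields $\ip{\nabla_t \cL_{\|a^\perp}}{b} = \|\nabla_t \cL_{\|a^\perp}\|^2 \geq 0$. Symmetrically, $\ip{\nabla_t \cL_{\|b^\perp}}{a} = \|\nabla_t \cL_{\|b^\perp}\|^2 \geq 0$ and $\ip{\nabla_t \cL_{\|b^\perp}}{b} = 0$.

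Assembling these: $\ip{u}{a} = c_1 \cdot 0 + c_2 \|\nabla_t \cL_{\|b^\perp}\|^2 \geq 0$ and $\ip{u}{b} = c_1 \|\nabla_t \cL_{\|a^\perp}\|^2 + c_2 \cdot 0 \geq 0$, since $c_1, c_2 \geq 0$. This establishes $u \in \rmK_t^*$ and completes the argument. I should also handle the degenerate cases where $a$ or $b$ is zero (then the corresponding projection subspace is all of $V$ and the projection is $\nabla \cL$ itself, and the inner product claims still hold trivially or the cone $\rmK_t$ collapses), and the case where $a, b$ are parallel, which does not affect the inequalities above.

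I do not anticipate a serious obstacle here; the only subtlety is bookkeeping around the definition of $\nabla_t \cL_{\|\nabla \cL_r^\perp}$ as a projection of the \emph{total} gradient (not of $a$ or $b$ individually), and making sure the identity $\ip{v_{\|W}}{w} = \|v_{\|W}\|^2$ whenever $w - v \in W^\perp$... more precisely, that $\ip{v_{\|W^\perp}}{v} = \|v_{\|W^\perp}\|^2$ and $\ip{v_{\|W^\perp}}{w} = \ip{v_{\|W^\perp}}{v}$ for any $w$ with $v - w \in W$ — is invoked correctly. The main step I would flag is simply confirming the orthogonality relation $\ip{\nabla_t \cL_{\|\nabla \cL_r^\perp}}{\nabla \cL_r(\theta_t)} = 0$, which is immediate from the definition of projection onto $(\nabla \cL_r(\theta_t))^\perp$, and then propagating it through $\nabla \cL = \nabla \cL_r + \nabla \cL_b$.
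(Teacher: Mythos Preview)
Your proposal is correct and follows essentially the same approach as the paper: both show $\ip{u}{\nabla \cL_r(\theta_t)}\ge 0$ and $\ip{u}{\nabla \cL_b(\theta_t)}\ge 0$ by using the orthogonality $\ip{\nabla_t \cL_{\|\nabla \cL_r^\perp}}{\nabla \cL_r(\theta_t)}=0$ together with $\nabla \cL=\nabla \cL_r+\nabla \cL_b$. The only cosmetic difference is that the paper expands the explicit projection formula to obtain $c_2\|\nabla \cL_r(\theta_t)\|^2(1-\cos^2\phi_t)$, whereas you invoke the Pythagorean identity $\ip{\nabla_t \cL_{\|a^\perp}}{\nabla \cL}=\|\nabla_t \cL_{\|a^\perp}\|^2$ directly to get the equivalent $c_2\|\nabla_t \cL_{\|\nabla \cL_b^\perp}\|^2$.
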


Consequently, the DCGD algorithm defines the updated gradient denoted by $g_t^{\text{dual}}$ within $\rmG_t$ at each iteration $t$. A general framework for DCGD is presented in Algo~\ref{alg:DCGD_base}.


\begin{algorithm}[htb!]
   \caption{Dual Cone Gradient Descent (base)}
   \label{alg:DCGD_base}
\begin{algorithmic}
   \STATE {\bfseries Require:} learning rate $\lambda$, max epoch $T$, initial point $\theta_0$
   \FOR{$t=1$ {\bfseries to} $T$}
   \STATE Choose $g_t^{\text{dual}} \in \rmG_t^*$
   \STATE $\theta_t = \theta_{t-1} - \lambda g_t^{\text{dual}}$   
   \ENDFOR
\end{algorithmic}
\end{algorithm}

\subsection{Convergence Analysis}\label{subsec:convergence}

To discuss the convergence properties of DCGD, we introduce the concept of Pareto optimality (adapted to the PINN setting), which is a key in multi-objective optimization~\citep{hochman69pareto, desideri2012mgda}.

\begin{definition}(Pareto optimal and stationary)\label{def:pareto} A point $\theta \in \sR^d$ is said to be \textit{Pareto-optimal} if there does not exist $\theta'\in \sR^d$ such that 
\begin{align*}
  \cL_r(\theta') \leq \cL_r(\theta) \quad \mbox{and} \quad \cL_b(\theta') \leq \cL_b(\theta).
\end{align*}
In addition, a point $\theta \in \sR^d$ is said to be \textit{Pareto-stationary} if there exists $\alpha_1, \alpha_2$ such that 
$$
\alpha_1 \nabla \cL_r(\theta) + \alpha_2 \nabla \cL_b(\theta) = 0, \quad \alpha_1,\alpha_2\geq 0, \quad \alpha_1+\alpha_2=1.
$$    
\end{definition}

Intuitively, a Pareto-stationary point implies there is no feasible descent direction that would decrease all loss functions simultaneously. For example, consider a point $\theta_t$ at which the cosine of the angle $\phi_t$ between $\nabla \cL_r(\theta_t)$ and $\nabla \cL_b(\theta_t)$ is $-1$, i.e., $\cos (\phi_t)=-1$. Such a point is Pareto-stationary. 

The following theorem guarantees the convergence of the DCGD algorithm proposed in Algo~\ref{alg:DCGD_base} under some regularities in a non-convex setting. Assume $\theta^*:=\inf_{\theta \in \sR^d} \cL(\theta)>-\infty$. 

\begin{theorem}\label{thm:convergence_nonconvex}
 Assume that both loss functions, $\cL_b(\cdot)$ and $\cL_r(\cdot)$, are differentiable and the total gradient $\nabla \cL(\cdot)$ is $L$-Lipschitz continuous with $L > 0$. If $g_t^{\text{dual}}$ satisfies the following two conditions:
\begin{enumerate}
\item[(i)] $2\ip{\nabla \cL(\theta_t)}{g_t^{\text{dual}}} - \|g_t^{\text{dual}}\|^2 \geq 0$, 
\item[(ii)] There exists $M>0$ such that $\|g_t^{\text{dual}}\|\geq M\|\nabla \cL(\theta_t)\|$,
\end{enumerate}
then, for $\lambda \leq \frac{1}{2L}$, DCGD in Algo.~\ref{alg:DCGD_base} converges to a Pareto-stationary point, or converges as 
\begin{align}
    \frac{1}{T+1}\sum_{t=0}^T\|\nabla \cL(\theta_t)\|^2 \leq \frac{2\left(\cL(\theta_0)-\cL(\theta^*)\right)}{\lambda M(T+1)}.   
\end{align}
\end{theorem}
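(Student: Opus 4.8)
The plan is to follow the classical descent-lemma argument for non-convex gradient methods, adapted to the fact that the update direction $g_t^{\text{dual}}$ is not the gradient but a vector inside $\rmG_t \subseteq \rmK_t^*$. First I would invoke $L$-Lipschitz continuity of $\nabla \cL$ to obtain the standard quadratic upper bound
\[
\cL(\theta_{t+1}) \leq \cL(\theta_t) - \lambda \ip{\nabla \cL(\theta_t)}{g_t^{\text{dual}}} + \frac{L\lambda^2}{2}\|g_t^{\text{dual}}\|^2.
\]
The goal of the next step is to show the right-hand side is genuinely a decrease, i.e. that $\lambda \ip{\nabla \cL(\theta_t)}{g_t^{\text{dual}}} - \frac{L\lambda^2}{2}\|g_t^{\text{dual}}\|^2 \geq 0$ and in fact bounded below by a positive multiple of $\lambda\|\nabla \cL(\theta_t)\|^2$. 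This is exactly where conditions (i) and (ii) enter: using $\lambda \leq \frac{1}{2L}$ one has $\frac{L\lambda^2}{2}\|g_t^{\text{dual}}\|^2 \leq \frac{\lambda}{4}\|g_t^{\text{dual}}\|^2$, so
\[
\lambda\ip{\nabla \cL(\theta_t)}{g_t^{\text{dual}}} - \tfrac{L\lambda^2}{2}\|g_t^{\text{dual}}\|^2 \geq \tfrac{\lambda}{2}\left(2\ip{\nabla \cL(\theta_t)}{g_t^{\text{dual}}} - \tfrac{1}{2}\|g_t^{\text{dual}}\|^2\right),
\]
and I would argue $2\ip{\nabla \cL(\theta_t)}{g_t^{\text{dual}}} - \frac{1}{2}\|g_t^{\text{dual}}\|^2 \geq \frac{1}{2}(2\ip{\nabla \cL(\theta_t)}{g_t^{\text{dual}}} - \|g_t^{\text{dual}}\|^2) + \frac{1}{2}\ip{\nabla \cL(\theta_t)}{g_t^{\text{dual}}}$; the first bracket is nonnegative by (i), and I need $\ip{\nabla \cL(\theta_t)}{g_t^{\text{dual}}} \geq 0$, which follows because $g_t^{\text{dual}} \in \rmK_t^*$ so it has nonnegative inner product with $\nabla\cL_r$ and $\nabla\cL_b$, hence with their sum $\nabla\cL(\theta_t)$. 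Combining with (ii) to get $\ip{\nabla\cL(\theta_t)}{g_t^{\text{dual}}} \geq M\|\nabla\cL(\theta_t)\|^2$ — which again needs care since (ii) only bounds norms; here I would use that $g_t^{\text{dual}}$ makes a non-obtuse angle with $\nabla\cL(\theta_t)$ together with the norm bound, or appeal to the construction of $\rmG_t$ to control the angle — yields $\cL(\theta_{t+1}) \leq \cL(\theta_t) - \frac{\lambda M}{2}\|\nabla\cL(\theta_t)\|^2$.

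From this per-step decrease the proof concludes by telescoping: summing over $t = 0,\dots,T$ gives $\frac{\lambda M}{2}\sum_{t=0}^T \|\nabla\cL(\theta_t)\|^2 \leq \cL(\theta_0) - \cL(\theta_{T+1}) \leq \cL(\theta_0) - \cL(\theta^*)$, and dividing by $\frac{\lambda M}{2}(T+1)$ produces the stated bound. The dichotomy in the statement — "converges to a Pareto-stationary point, or converges as [the averaged-gradient bound]" — I would handle by observing that if at some iteration $g_t^{\text{dual}} = 0$ while $\nabla\cL(\theta_t) \neq 0$, then the algorithm has effectively stalled; but $g_t^{\text{dual}} \in \rmG_t$ and $g_t^{\text{dual}}=0$ forces the two projections $\nabla_t\cL_{\|\nabla\cL_r^\perp}$ and $\nabla_t\cL_{\|\nabla\cL_b^\perp}$ to be antiparallel or zero, which (tracing through the geometry, as in the $\cos\phi_t = -1$ example after Definition~\ref{def:pareto}) means $\nabla\cL_r(\theta_t)$ and $\nabla\cL_b(\theta_t)$ are antiparallel, i.e. $\theta_t$ is Pareto-stationary. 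Otherwise the per-step inequality holds with $M$ as in (ii) and the telescoping bound applies.

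I expect the main obstacle to be the rigorous treatment of the two alternatives packaged in condition (ii) and the dichotomy: specifically, bridging from the norm-ratio hypothesis $\|g_t^{\text{dual}}\| \geq M\|\nabla\cL(\theta_t)\|$ to the inner-product lower bound $\ip{\nabla\cL(\theta_t)}{g_t^{\text{dual}}} \geq c\,M\|\nabla\cL(\theta_t)\|^2$ that the telescoping argument actually needs. This step is not automatic from norms alone and requires either an additional angle control coming from the explicit form of $\rmG_t$ (the conic hull of the two orthogonal projections of $\nabla\cL(\theta_t)$), or a more careful bookkeeping that keeps condition (i) in play throughout. A secondary subtlety is making precise what "converges to a Pareto-stationary point" means in the degenerate case and checking that the geometric claim $g_t^{\text{dual}} = 0 \Rightarrow \theta_t$ Pareto-stationary is genuinely forced by $g_t^{\text{dual}} \in \rmG_t$; I would verify this using Proposition~\ref{prop:dual_cone_G} and the characterization in Theorem~\ref{thm:dual_cone_region}. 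The Lipschitz/descent-lemma part and the telescoping are routine and I would present them briskly.
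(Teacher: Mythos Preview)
Your approach matches the paper's: descent lemma, use $\lambda \le \tfrac{1}{2L}$ to replace $\tfrac{L\lambda^2}{2}\|g_t^{\text{dual}}\|^2$ by $\tfrac{\lambda}{4}\|g_t^{\text{dual}}\|^2$, split via condition (i), and telescope; the Pareto-stationary alternative is handled exactly as you say, by the $\cos\phi_t=-1$ case. The paper's algebraic split is marginally cleaner than yours---it writes
\[
-\lambda\ip{\nabla\cL(\theta_t)}{g_t^{\text{dual}}} + \tfrac{\lambda}{4}\|g_t^{\text{dual}}\|^2
= -\tfrac{\lambda}{4}\Bigl[\bigl(2\ip{\nabla\cL(\theta_t)}{g_t^{\text{dual}}} - \|g_t^{\text{dual}}\|^2\bigr) + 2\ip{\nabla\cL(\theta_t)}{g_t^{\text{dual}}}\Bigr]
\]
and drops the first bracket by (i), reducing immediately to a lower bound on $\ip{\nabla\cL(\theta_t)}{g_t^{\text{dual}}}$. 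On the step you flag as the main obstacle (passing from the norm condition (ii) to an inner-product lower bound), the paper's own justification is brief; the clean way through is to reuse (i) once more: $\ip{\nabla\cL(\theta_t)}{g_t^{\text{dual}}} \ge \tfrac{1}{2}\|g_t^{\text{dual}}\|^2 \ge \tfrac{M^2}{2}\|\nabla\cL(\theta_t)\|^2$, so no separate angle control from the geometry of $\rmG_t$ is required.
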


Theorem~\ref{thm:convergence_nonconvex} states that DCGD converges to either a Pareto-stationary point, characterized by $\phi_t$ such that $\cos(\phi_t)=-1$, or a stationary point at a rate of $\cO(1/\sqrt T)$ in the nonconvex setting. Unlike single-objective (nonconvex) optimization where the goal is to pursue a stationary point, in multi-objective optimization, it is ideal to find a Pareto-stationary point that balances all loss functions. Thus, DCGD offers significant theoretical and empirical advantages over popular optimization algorithms like SGD and ADAM, which are only guaranteed to converge to a stationary point. The convergence of DCGD to a Pareto-stationary point is empirically verified in Section~\ref{subsec:benefit}.

\subsection{Dual Cone Gradient Descent: Projection, Average, and Center}\label{subsec:algorithms}

\begin{figure*}[htb!]
    \centering 
    \begin{subfigure}[b]{0.32\textwidth}
        \includegraphics[width=\textwidth]{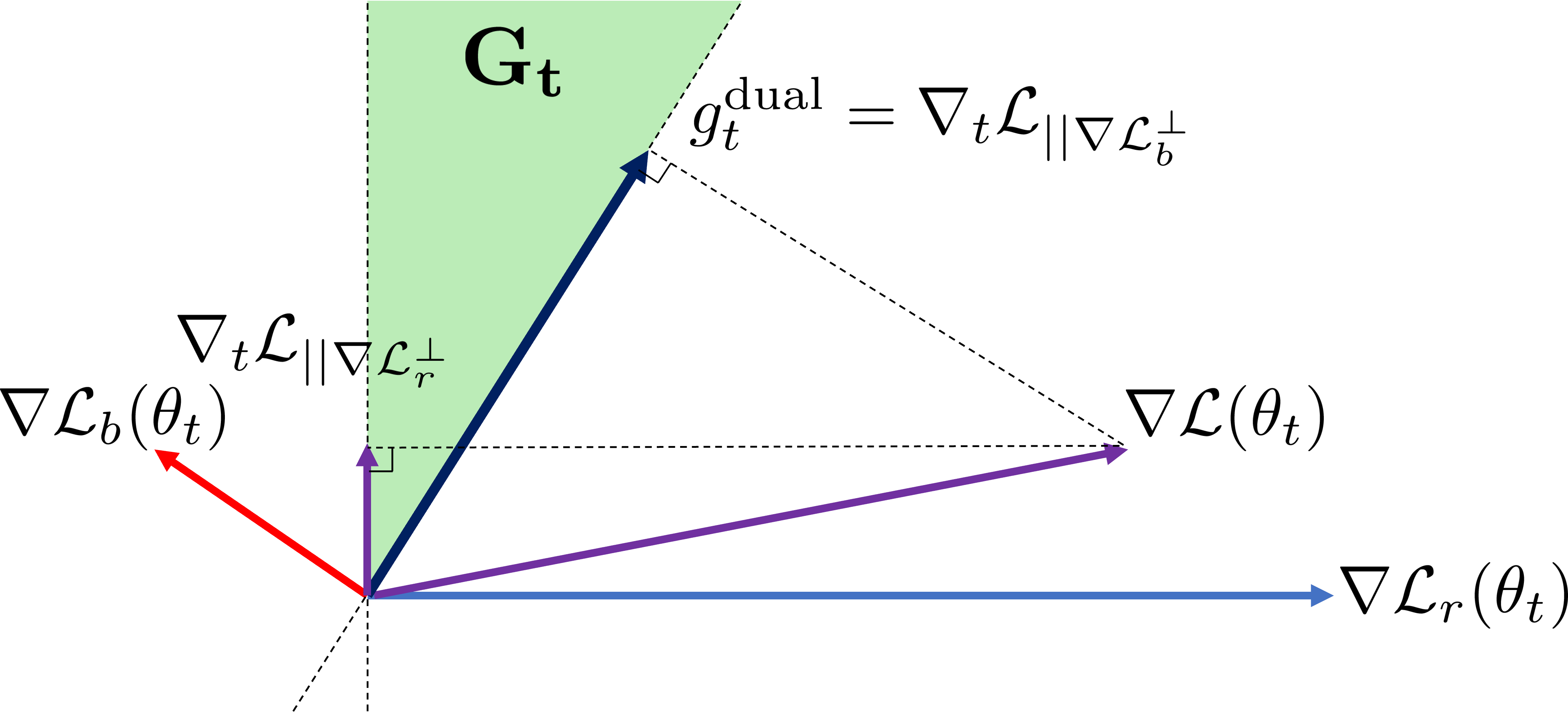}
        \caption{DCGD (Projection)}
        \label{fig:vis_proj}
    \end{subfigure}
    \begin{subfigure}[b]{0.32\textwidth}
        \includegraphics[width=\textwidth]{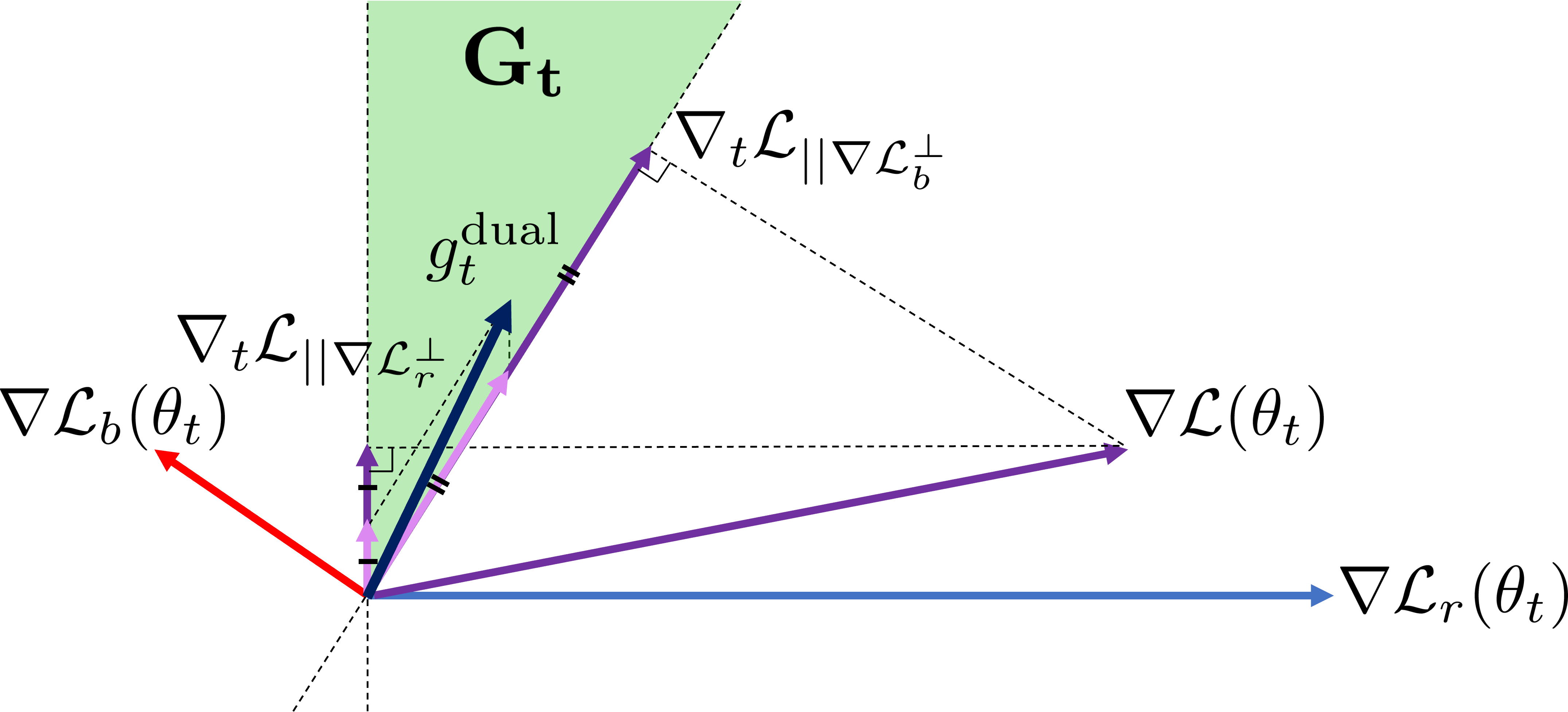}
        \caption{DCGD (Average)}
        \label{fig:vis_avg}
    \end{subfigure}
    \begin{subfigure}[b]{0.32\textwidth}
        \includegraphics[width=\textwidth]{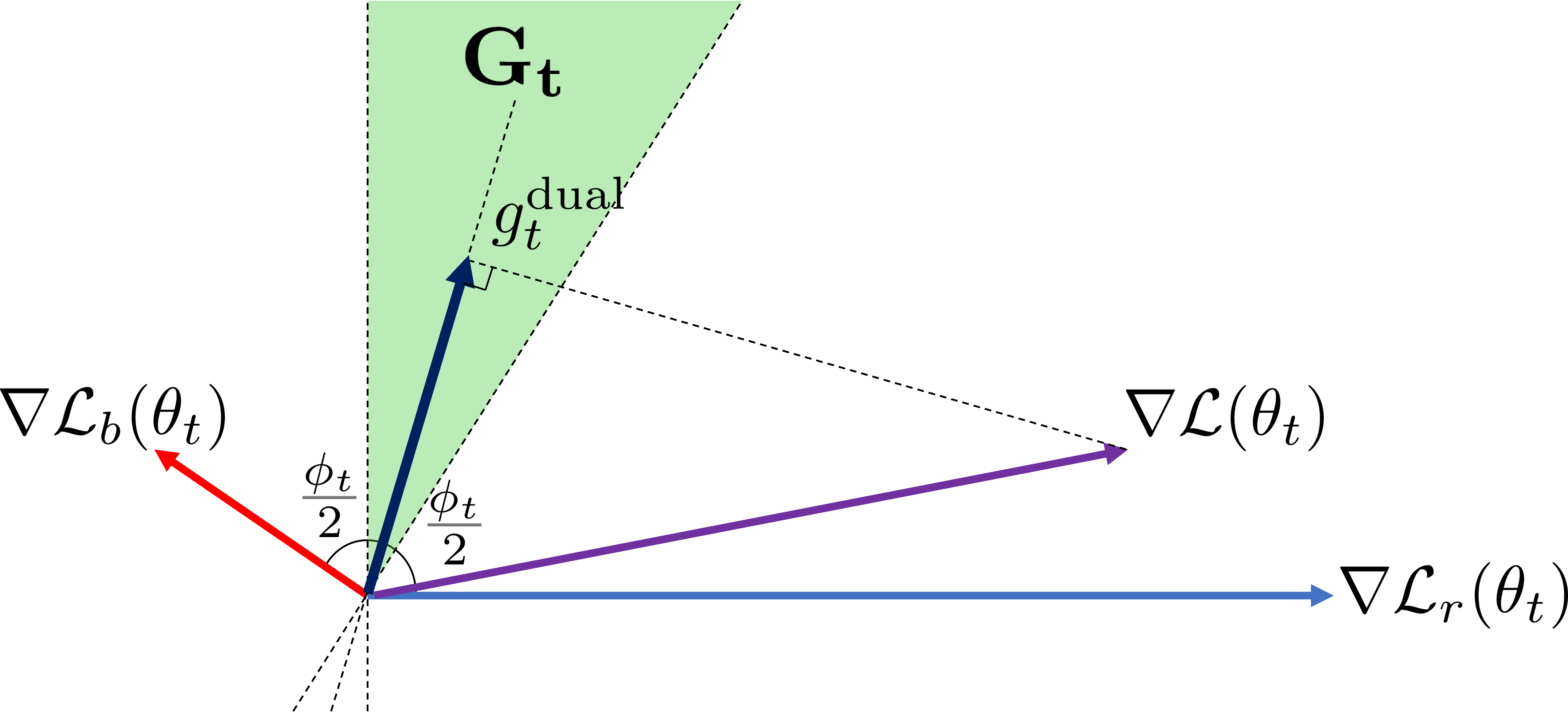}
        \caption{DCGD (Center)}
        \label{fig:vis_center}
    \end{subfigure}    
    \caption{The updated gradient \(g_t^{\text{dual}}\) of three DCGD algorithms.}
    \label{fig:dcgdalgo}
\end{figure*}

Different variants of DCGD can be designed by properly choosing the updated gradient $g_t^{\text{dual}}$ in $\rmG_t$ satisfying the conditions (i), (ii) of Theorem~\ref{thm:convergence_nonconvex}. We presents three specific algorithms: projection, average, and center. 

The first algorithm, named DCGD (Projection), uses the projection of the total gradient $\nabla \cL(\theta_t)$ onto $\rmG_t$ when $\nabla \cL(\theta_t) \notin \rmK_t^*$, which is the closest vector within $\rmG_t$ to $\nabla \cL(\theta_t)$. Specifically, the DCGD (Projection) algorithm specifies $g_t^{\text{dual}}$ as follows: \textbf{(i)} $\nabla \cL(\theta_t)$ if $\nabla \cL(\theta_t)\in \rmK_t^*$, \textbf{(ii)} $\nabla_t \cL_{\| \nabla \cL_r^\perp}$ ($c_1=1,c_2=0$) if $\nabla \cL(\theta_t)\notin \rmK_t^*$ and $\ip{\nabla \cL(\theta_t)}{\nabla \cL_r(\theta_t)}<0$, \textbf{(iii)} $\nabla_t \cL_{\| \nabla \cL_b^\perp}$ ($c_1=0, c_2=1$) if $\nabla \cL(\theta_t)\notin \rmK_t^*$ and $\ip{\nabla \cL(\theta_t)}{\nabla \cL_b(\theta_t)}<0$. See also \Eqref{eq:gt_projection} and Algo.~\ref{alg:proj}.

DCGD (Average) algorithm takes the average of $\nabla_t \cL_{\|\nabla \cL_r^{\perp}}$ and $\nabla_t \cL_{\|\nabla \cL_b^{\perp}}$ when the total gradient is outside $\rmK_t^*$, i.e., $c_1=c_2=\frac{1}{2}$ if $\nabla \cL(\theta_t) \notin \rmK_t^*$. See \Eqref{eq:gt_average} and Algo.~\ref{alg:avg}.

We note that both DCGD (Projection) and DCGD (Average) use $\nabla \cL(\theta_t)$ as $g_t^{\text{dual}}$ without any manipulation when $\nabla \cL(\theta_t)\in \rmK_t^*$. Moreover, they require determining if the total gradient is contained in the dual cone at each iteration, which may incur additional computational costs. On the other hand, $g_t^{\text{dual}}$ of DCGD (Center) is given by
\begin{align}\label{eq:gt_center}
    g_t^{\text{dual}} := \frac{\ip{g_t^c}{\nabla \cL(\theta_t)}}{\|g_t^c\|^2}g_t^c
\end{align}
where $g_t^c = \frac{\nabla \cL_b(\theta_t)}{\|\nabla \cL_b(\theta_t)\|}+\frac{\nabla \cL_r(\theta_t)}{\|\nabla \cL_r(\theta_t)\|}$, which is geometrically interpreted as the projection of $\nabla \cL(\theta_t)$ onto the angle bisector $g_t^c$ of $\nabla \cL_r(\theta_t)$ and $\nabla \cL_b(\theta_t)$. The following proposition shows that $g_t^{\text{dual}}$ of DCGD (Center) resides within $\rmG_t$

\begin{proposition}\label{prop: DCGD(center)} Consider the updated gradient $g_t^{\text{dual}}$ of DCGD (Center) defined in \Eqref{eq:gt_center}. Then, $g_t^{\text{dual}} \in \rmG_t$. 
\end{proposition}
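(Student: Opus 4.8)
The goal is to show $g_t^{\text{dual}} \in \rmG_t$, where $g_t^{\text{dual}}$ is the scalar multiple $\tfrac{\ip{g_t^c}{\nabla \cL(\theta_t)}}{\|g_t^c\|^2} g_t^c$ of the normalized angle bisector $g_t^c = \tfrac{\nabla \cL_b(\theta_t)}{\|\nabla \cL_b(\theta_t)\|} + \tfrac{\nabla \cL_r(\theta_t)}{\|\nabla \cL_r(\theta_t)\|}$. The plan is to first establish that $g_t^c$ itself lies in $\rmG_t$ — i.e., that the bisector can be written as a \emph{nonnegative} combination of the two projections $\nabla_t \cL_{\|\nabla \cL_r^\perp}$ and $\nabla_t \cL_{\|\nabla \cL_b^\perp}$ — and then check that the scalar multiplying $g_t^c$ in \Eqref{eq:gt_center} is nonnegative, so that $g_t^{\text{dual}}$ remains on the same ray.

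\textbf{Step 1 (decompose the bisector in the plane spanned by the two gradients).} Write $a = \nabla \cL_r(\theta_t)$, $b = \nabla \cL_b(\theta_t)$, and let $\hat a = a/\|a\|$, $\hat b = b/\|b\|$. Since $\nabla \cL(\theta_t) = a + b$, the projection onto $a^\perp$ is $\nabla_t \cL_{\|\nabla \cL_r^\perp} = b - \tfrac{\ip{a}{b}}{\|a\|^2} a = b - \ip{\hat a}{\hat b}\,\tfrac{\|b\|}{\|a\|}\,a$, and symmetrically $\nabla_t \cL_{\|\nabla \cL_b^\perp} = a - \ip{\hat a}{\hat b}\,\tfrac{\|a\|}{\|b\|}\,b$. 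Everything lives in $\mathrm{span}\{a,b\}$ (the generic case; the degenerate collinear case is handled separately). Solve the linear system: find $c_1, c_2 \geq 0$ with $c_1 \nabla_t \cL_{\|\nabla \cL_r^\perp} + c_2 \nabla_t \cL_{\|\nabla \cL_b^\perp} = g_t^c = \hat a + \hat b$. Matching coefficients of $a$ and $b$ gives a $2\times 2$ system in $c_1, c_2$; I expect the solution to come out as something like $c_1 = \tfrac{1}{\|a\|(1 - \ip{\hat a}{\hat b})}$ and $c_2 = \tfrac{1}{\|b\|(1 - \ip{\hat a}{\hat b})}$ up to the exact algebra, which are manifestly positive because $\ip{\hat a}{\hat b} = \cos\phi_t < 1$ whenever $a, b$ are not positively collinear. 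This shows $g_t^c \in \rmG_t$.

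\textbf{Step 2 (the scalar is nonnegative).} By Step 1, $g_t^c \in \rmG_t$, and $\rmG_t$ is a convex cone, so any nonnegative multiple of $g_t^c$ is also in $\rmG_t$. It therefore suffices to show $\ip{g_t^c}{\nabla \cL(\theta_t)} \geq 0$ (the denominator $\|g_t^c\|^2$ is positive as long as $g_t^c \neq 0$). Compute $\ip{g_t^c}{\nabla \cL(\theta_t)} = \ip{\hat a + \hat b}{\,a + b\,} = \|a\| + \|b\| + \ip{\hat a}{b} + \ip{\hat b}{a} = \|a\| + \|b\| + (\|a\| + \|b\|)\cos\phi_t = (\|a\| + \|b\|)(1 + \cos\phi_t) \geq 0$. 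Hence the scalar is nonnegative, and $g_t^{\text{dual}}$ lies on the ray $\mathbb{R}_{\geq 0}\, g_t^c \subseteq \rmG_t$.

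\textbf{Step 3 (degenerate cases).} Two edge cases need a remark. If $\hat a = -\hat b$ (i.e., $\cos\phi_t = -1$), then $g_t^c = 0$ and $g_t^{\text{dual}} = 0 \in \rmG_t$ trivially (this is exactly the Pareto-stationary situation flagged after Theorem~\ref{thm:convergence_nonconvex}). If $\hat a = \hat b$, then $\rmG_t$ is a ray and $g_t^c = 2\hat a$ lies on it; again the nonnegative scalar multiple stays in $\rmG_t$. The main obstacle is really just Step 1 — carefully inverting the $2\times 2$ coefficient system and confirming the signs of $c_1, c_2$ — but since $1 - \cos\phi_t > 0$ in the non-degenerate regime, the positivity is automatic; the rest is routine inner-product bookkeeping.
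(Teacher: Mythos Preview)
Your approach is essentially the paper's: Step~1 --- writing $g_t^c$ as a conic combination of the two projections with explicit coefficients --- is exactly what the paper does (the paper obtains $c_1 = \tfrac{1}{\|b\|(1-\cos\phi_t)}$, $c_2 = \tfrac{1}{\|a\|(1-\cos\phi_t)}$, so your guessed form has $\|a\|$ and $\|b\|$ swapped, but you flagged this as ``up to the exact algebra''). Your Step~2 is actually more careful than the paper: the paper simply asserts ``it is enough to show that $g_t^c \in \rmG_t$'' without checking that the scalar $\ip{g_t^c}{\nabla\cL(\theta_t)}/\|g_t^c\|^2$ is nonnegative, which is needed because $\rmG_t$ is only a cone, not a subspace; your computation $(\|a\|+\|b\|)(1+\cos\phi_t)\ge 0$ fills that gap. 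One small slip in Step~3: when $\hat a = \hat b$, both projections $\nabla_t\cL_{\|\nabla\cL_r^\perp}$ and $\nabla_t\cL_{\|\nabla\cL_b^\perp}$ vanish, so $\rmG_t = \{0\}$ rather than a ray, and $g_t^{\text{dual}}\ne 0$ would fall outside it --- the paper does not address this edge case either, so it is presumably excluded by convention.
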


The visualization of these three algorithms can be found in Figure~\ref{fig:dcgdalgo} and their pseudocodes are provided in Appendix~\ref{app:algorithms}. Moreover, the proposed DCGD algorithms satisfy the conditions (i) and (ii) of Theorem~\ref{thm:convergence_nonconvex}. Consequently, the following Corollary summarizes the convergence of the proposed DCGD algorithms. 
\begin{corollary}\label{cor:convergence_algorithm} We impose the same assumptions as in Theorem~\ref{thm:convergence_nonconvex}. Then, DCGD (Projection), DCGD (Average), and DCGD (Center) converge to either a Pareto-stationary point or a stationary point. 
\end{corollary}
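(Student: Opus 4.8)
\textbf{Proof proposal for Corollary~\ref{cor:convergence_algorithm}.}

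The plan is to reduce the corollary to Theorem~\ref{thm:convergence_nonconvex} by verifying that each of the three algorithms produces an updated gradient $g_t^{\text{dual}}$ satisfying the two structural conditions (i) $2\ip{\nabla \cL(\theta_t)}{g_t^{\text{dual}}} - \|g_t^{\text{dual}}\|^2 \geq 0$ and (ii) $\|g_t^{\text{dual}}\| \geq M\|\nabla \cL(\theta_t)\|$ for some fixed $M>0$ independent of $t$. Since the Lipschitz and differentiability hypotheses are already assumed, once both conditions hold the conclusion of Theorem~\ref{thm:convergence_nonconvex} applies verbatim, yielding convergence to a Pareto-stationary point or, with the $\cO(1/\sqrt{T})$ rate on the averaged squared gradient norm, to a stationary point. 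So the entire task is the case analysis of conditions (i) and (ii).

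First I would handle condition (i) in a unified way. Observe that whenever $g_t^{\text{dual}}$ is the orthogonal projection of $\nabla \cL(\theta_t)$ onto some subspace (or onto the ray spanned by a fixed vector), we have $\ip{\nabla \cL(\theta_t)}{g_t^{\text{dual}}} = \|g_t^{\text{dual}}\|^2$, so the left side of (i) equals $\|g_t^{\text{dual}}\|^2 \geq 0$. This immediately covers DCGD (Center), whose $g_t^{\text{dual}}$ in \Eqref{eq:gt_center} is by definition the projection of $\nabla \cL(\theta_t)$ onto the ray through $g_t^c$; it also covers the branches of DCGD (Projection) where $g_t^{\text{dual}} = \nabla \cL(\theta_t)$ (projection onto the whole space) and where $g_t^{\text{dual}} = \nabla_t\cL_{\|\nabla\cL_r^\perp}$ or $\nabla_t\cL_{\|\nabla\cL_b^\perp}$ (projections onto hyperplanes). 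For DCGD (Average), when $\nabla\cL(\theta_t)\notin\rmK_t^*$ the vector $g_t^{\text{dual}} = \tfrac12(\nabla_t\cL_{\|\nabla\cL_r^\perp} + \nabla_t\cL_{\|\nabla\cL_b^\perp})$ is not a projection, so I would compute $\ip{\nabla\cL(\theta_t)}{g_t^{\text{dual}}}$ directly using $\ip{\nabla\cL(\theta_t)}{\nabla_t\cL_{\|\nabla\cL_r^\perp}} = \|\nabla_t\cL_{\|\nabla\cL_r^\perp}\|^2$ (a vector has the same inner product with its own projection as the projection's squared norm), and similarly for the other term; combining these with the triangle-type expansion of $\|g_t^{\text{dual}}\|^2$ and the non-negativity of the cross term $\ip{\nabla_t\cL_{\|\nabla\cL_r^\perp}}{\nabla_t\cL_{\|\nabla\cL_b^\perp}}$ (which holds on the region where the algorithm activates, since by Proposition~\ref{prop:dual_cone_G} both lie in $\rmG_t\subseteq\rmK_t^*$ and are moreover conic generators of $\rmG_t$) should give condition (i).

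The harder part is condition (ii), the uniform lower bound $\|g_t^{\text{dual}}\| \geq M\|\nabla\cL(\theta_t)\|$, because $g_t^{\text{dual}}$ can in principle be much shorter than $\nabla\cL(\theta_t)$ when the two loss gradients are nearly antiparallel — exactly the degenerate $\cos\phi_t = -1$ situation. Here the dichotomy in Theorem~\ref{thm:convergence_nonconvex} is essential: I would argue that either $\cos\phi_t$ is bounded away from $-1$ uniformly along the trajectory, in which case a quantitative lower bound on the length of the projections $\nabla_t\cL_{\|\nabla\cL_r^\perp}$, $\nabla_t\cL_{\|\nabla\cL_b^\perp}$, and of $g_t^c$ in terms of $\|\nabla\cL(\theta_t)\|$ and $\sin\phi_t$ yields a fixed $M$; or else $\cos\phi_t \to -1$ along a subsequence, which is precisely the Pareto-stationary alternative and requires no further bound. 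Concretely, for DCGD (Center) one writes $g_t^c$'s squared norm as $2(1+\cos\phi_t)$ and $\ip{g_t^c}{\nabla\cL(\theta_t)}$ in terms of the components of $\nabla\cL(\theta_t) = \nabla\cL_r(\theta_t) + \nabla\cL_b(\theta_t)$ along $g_t^c$, showing $\|g_t^{\text{dual}}\|/\|\nabla\cL(\theta_t)\|$ is controlled below by a function of $\phi_t$ alone; for DCGD (Projection) and (Average) the branches where $g_t^{\text{dual}} = \nabla\cL(\theta_t)$ are trivial ($M=1$ there), and on the manipulated branches the projection lengths $\|\nabla_t\cL_{\|\nabla\cL_r^\perp}\|$ etc. are bounded below using the geometry that forces $\nabla\cL(\theta_t)$ to make a controlled angle with the respective hyperplane. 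I expect this uniform-constant extraction, together with cleanly packaging the "or converges to a Pareto-stationary point" escape clause so that the constant $M$ only needs to exist on the complement, to be the main technical obstacle; the rest is bookkeeping over the three algorithms' branch definitions.
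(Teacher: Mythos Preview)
Your reduction to Theorem~\ref{thm:convergence_nonconvex} is the right strategy, and your handling of condition (i) via the projection identity $\ip{\nabla\cL(\theta_t)}{g_t^{\text{dual}}}=\|g_t^{\text{dual}}\|^2$ is clean and essentially correct. (One small slip: for DCGD (Average), a non-negative cross term $\ip{\nabla_t\cL_{\|\nabla\cL_r^\perp}}{\nabla_t\cL_{\|\nabla\cL_b^\perp}}$ would make $\|g_t^{\text{dual}}\|^2$ \emph{larger}, so it works against you, not for you; the inequality still goes through by Cauchy--Schwarz on the cross term regardless of sign.) The paper does (i) differently, via the reflection identity $\|2\nabla_t\cL_{\|\nabla\cL_r^\perp}-\nabla\cL(\theta_t)\|=\|\nabla\cL(\theta_t)\|$ and a triangle-inequality argument in the coefficients $c_1,c_2$, but both routes are fine.

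The genuine gap is in condition (ii). Your dichotomy --- either $\cos\phi_t$ is uniformly bounded away from $-1$, or a subsequence has $\cos\phi_{t_k}\to -1$ --- does not plug into Theorem~\ref{thm:convergence_nonconvex} the way you suggest. The theorem requires a \emph{fixed} $M>0$ valid for \emph{all} iterations; if no such $M$ exists you cannot invoke the rate bound at all, not even restricted to a ``complement'' of bad times. And in the second branch, having $\cos\phi_{t_k}\to -1$ does not by itself give convergence of the iterates to a Pareto-stationary point: you would need boundedness of $\{\theta_t\}$ (not assumed) to extract an accumulation point, and even then only a subsequence would accumulate there. The paper sidesteps all of this by building an explicit \emph{conflict threshold} $\alpha>0$ into each algorithm as a stopping rule (see the pseudocode in Appendix~\ref{app:algorithms}): whenever $\phi_t>\pi-\alpha$ the algorithm halts and declares (approximate) Pareto-stationarity, and otherwise $\phi_t\le\pi-\alpha$, which yields the explicit constants $M=\sin\alpha$ for Projection, $M=\tfrac12\sin\alpha$ for Average, and $M=\cos\!\big(\tfrac{\pi-\alpha}{2}\big)$ for Center. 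This threshold is the missing ingredient that turns your heuristic dichotomy into an actual uniform lower bound.
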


In addition to the theoretical result in Corollary~\ref{cor:convergence_algorithm}, Appendix~\ref{app:ablation study} provides an ablation study on the empirical performance of three specific algorithms for solving benchmark PDEs.

\begin{figure}[htb!]
    \centering 
    \begin{subfigure}[b]{0.23\textwidth}
        \includegraphics[width=\textwidth]{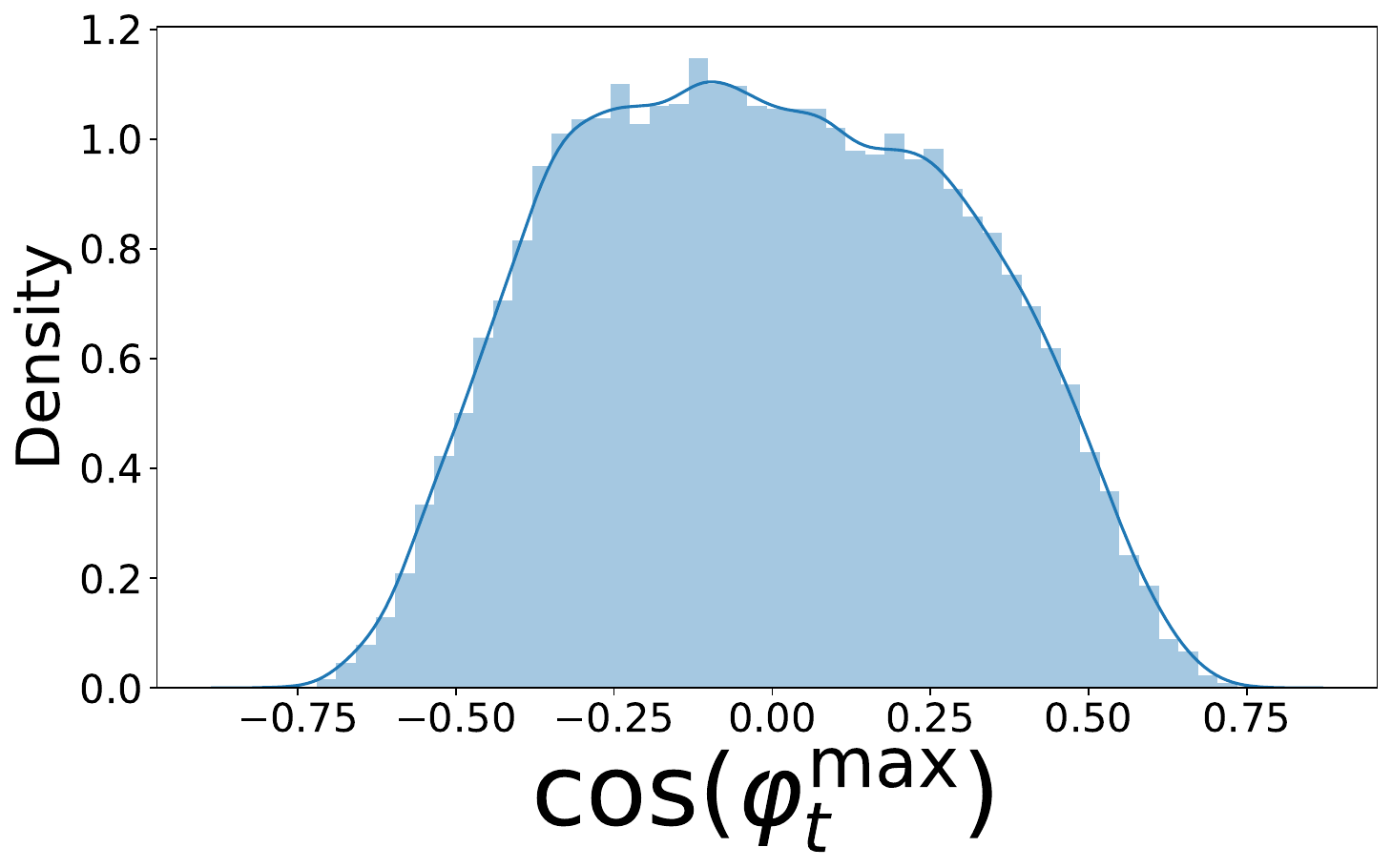}
        \caption{Adam}
    \end{subfigure}
    \begin{subfigure}[b]{0.23\textwidth}
        \includegraphics[width=\textwidth]{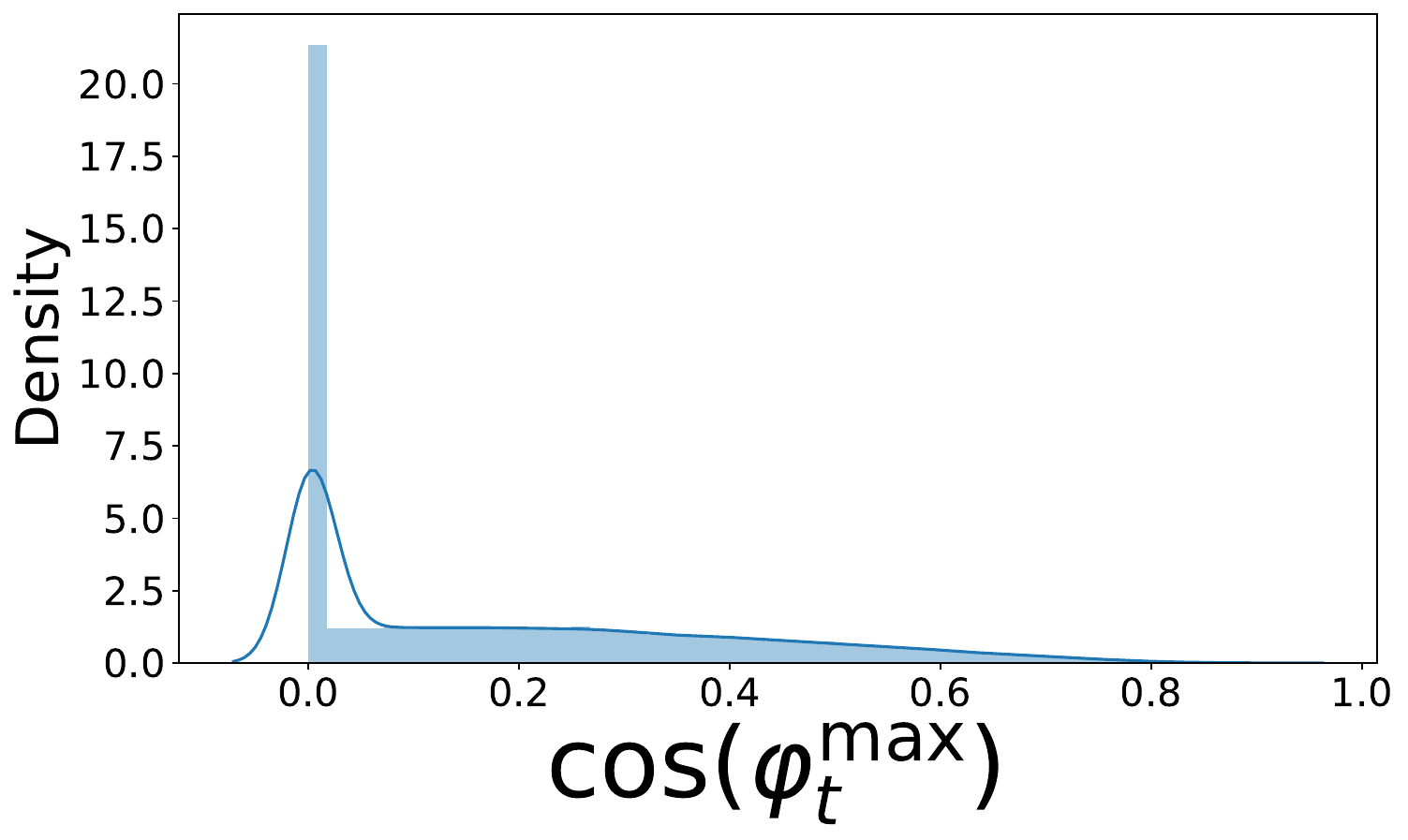}
        \caption{DCGD (Projection)}
    \end{subfigure}
    \begin{subfigure}[b]{0.23\textwidth}
        \includegraphics[width=\textwidth]{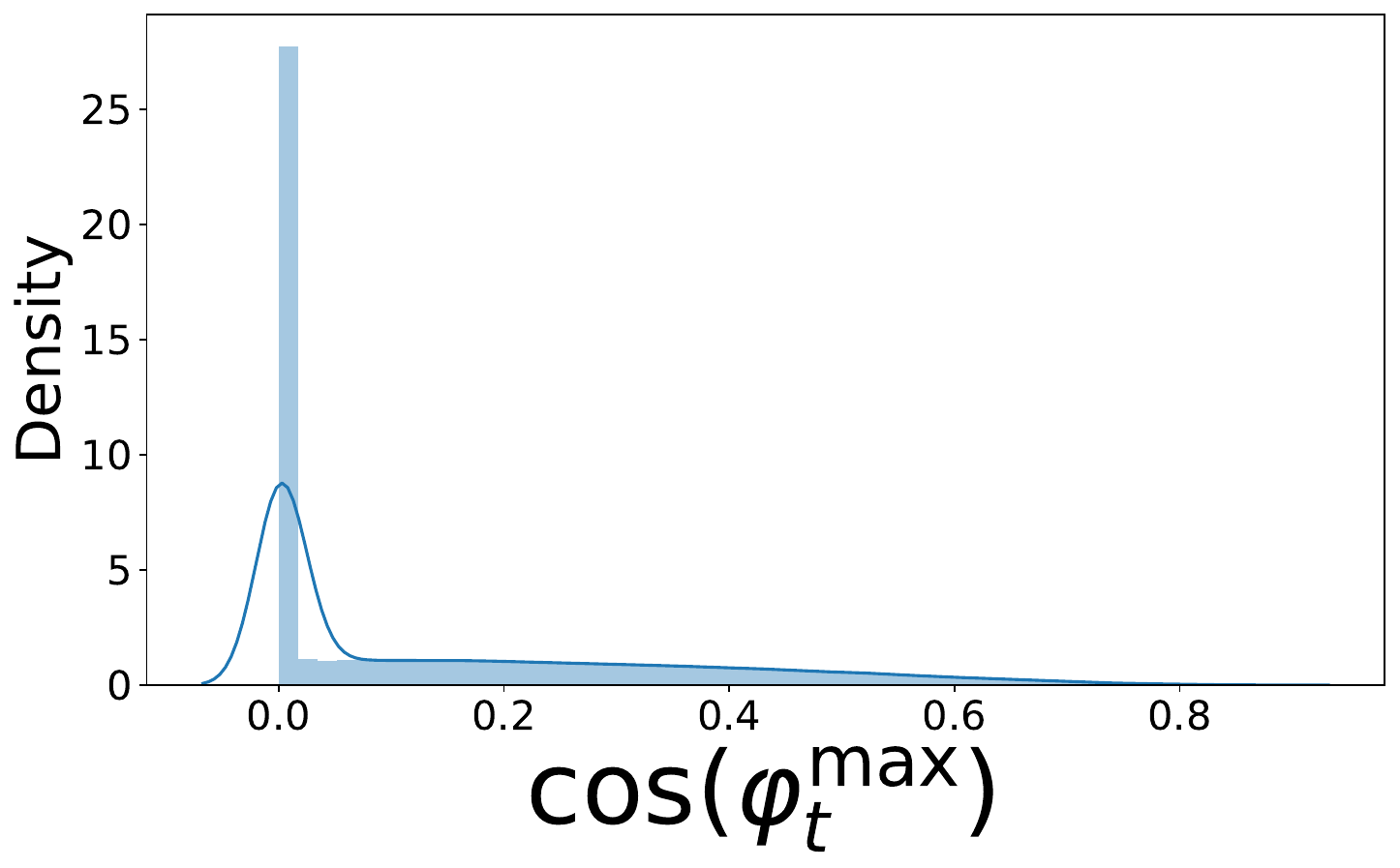}
        \caption{DCGD (Average)}
    \end{subfigure}
    \begin{subfigure}[b]{0.23\textwidth}
        \includegraphics[width=\textwidth]{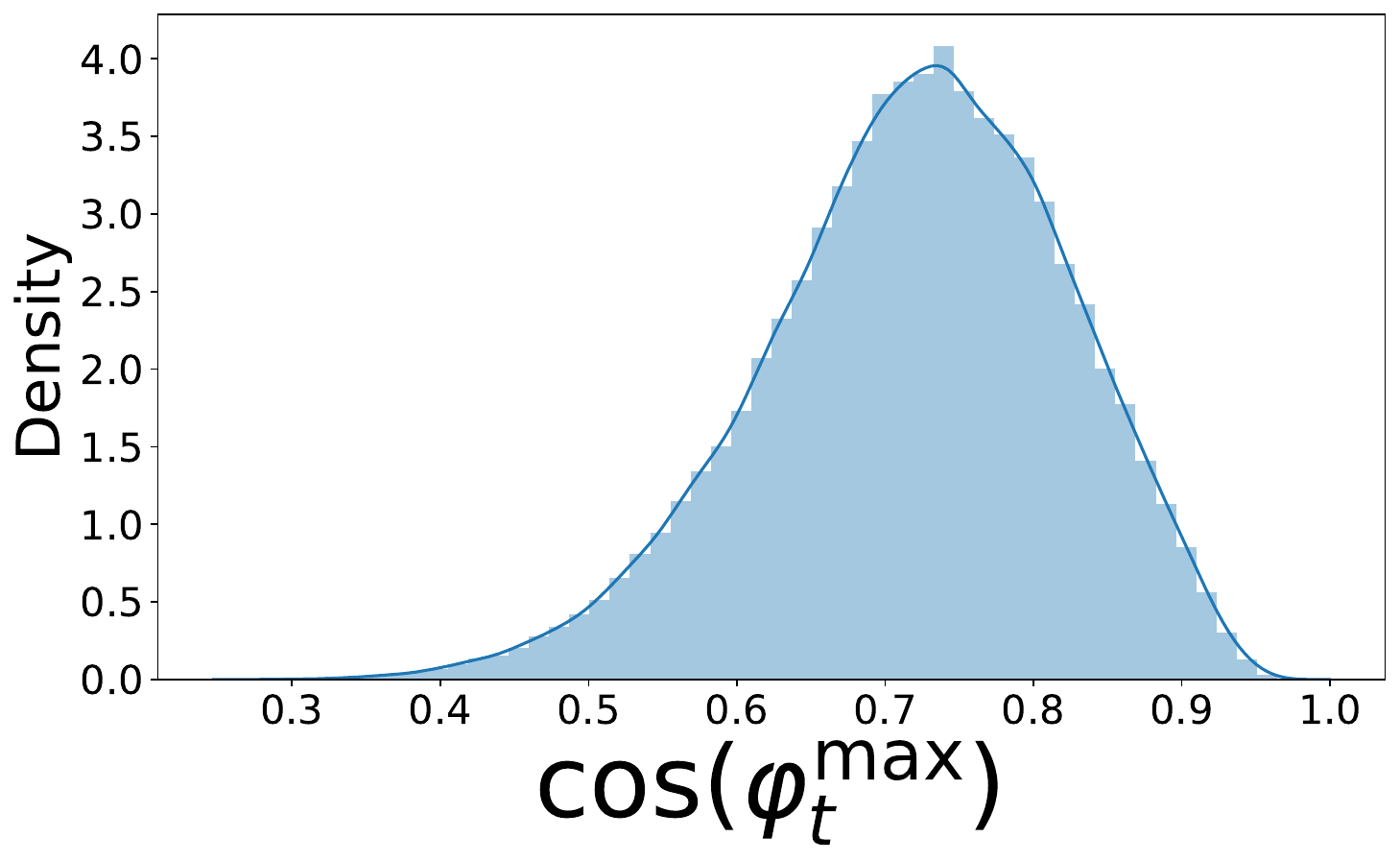}
        \caption{DCGD (Center)}
    \end{subfigure}
    \caption{Distribution of $\cos(\varphi_t^{\max})$ for each algorithm with $\varphi_t^{\text{max}} = \max\{\varphi_t^{r}, \varphi_t^{b}\}$ where $\varphi_t^{r}$ is the angle between the updated vector and $\nabla \cL_r(\theta_t)$, and $\varphi_t^{b}$ is the angle between the updated vector and $\nabla \cL_b(\theta_t)$.}
    \label{fig:cos_dist}
\end{figure}

\subsection{Benefits of the DCGD framework}\label{subsec:benefit}

This subsection discusses benefits of DCGD through illustrative examples. We first investigate how the proposed DCGD algorithms resolve the conflicting gradient issue discussed in Section~\ref{sec:motivation}. Given each algorithm, at each iteration $t$, we define $\varphi_t^{r}$ as the angle between the updated vector and $\nabla \cL_r(\theta_t)$, and $\varphi_t^{b}$ as the angle between the updated vector and $\nabla \cL_b(\theta_t)$. Also, let $\varphi_t^{\text{max}} = \max\{\varphi_t^{r}, \varphi_t^{b}\}$. We highlight that both $\varphi_t^r$ and $\varphi_t^b$ are less than $\pi/2$ under DCGD algorithms, as they ensure that the updated vectors always belong to the dual cone. Figure~\ref{fig:cos_dist} plots the distributions of $\cos(\varphi_t^{\text{max}})$ for four different optimization algorithms: Adam, DCGD (Projection), DCGD (Average), and DCGD (Center) during the training of PINNs for solving the Helmholtz equation. It shows that three DCGD algorithms completely eliminate conflicting gradients in contrast to ADAM. Moreover, we observe that the distributions of $\cos(\varphi_t^{\text{max}})$ for DCGD (Projection) and DCGD (Average) are highly skewed toward zero, which implies that one of the two losses is unlikely to significantly improve. On the contrary, DCGD (Center) has a bell-shaped distribution with a mean of about 0.719, indicating that the two gradients are more aligned. This leads to a continuous reduction in both losses in a harmonious manner. Consistent with this observation, DCGD (Center) consistently outperforms DCGD (Projection) and DCGD (Average) in our experiments. Please refer to the ablation study~\ref{app:ablation study} for further comparisons.

\begin{wrapfigure}{r}{.42\textwidth}
\vspace{-10pt}
    \centering 
    \begin{subfigure}[b]{0.20\textwidth}
        \includegraphics[width=\textwidth]{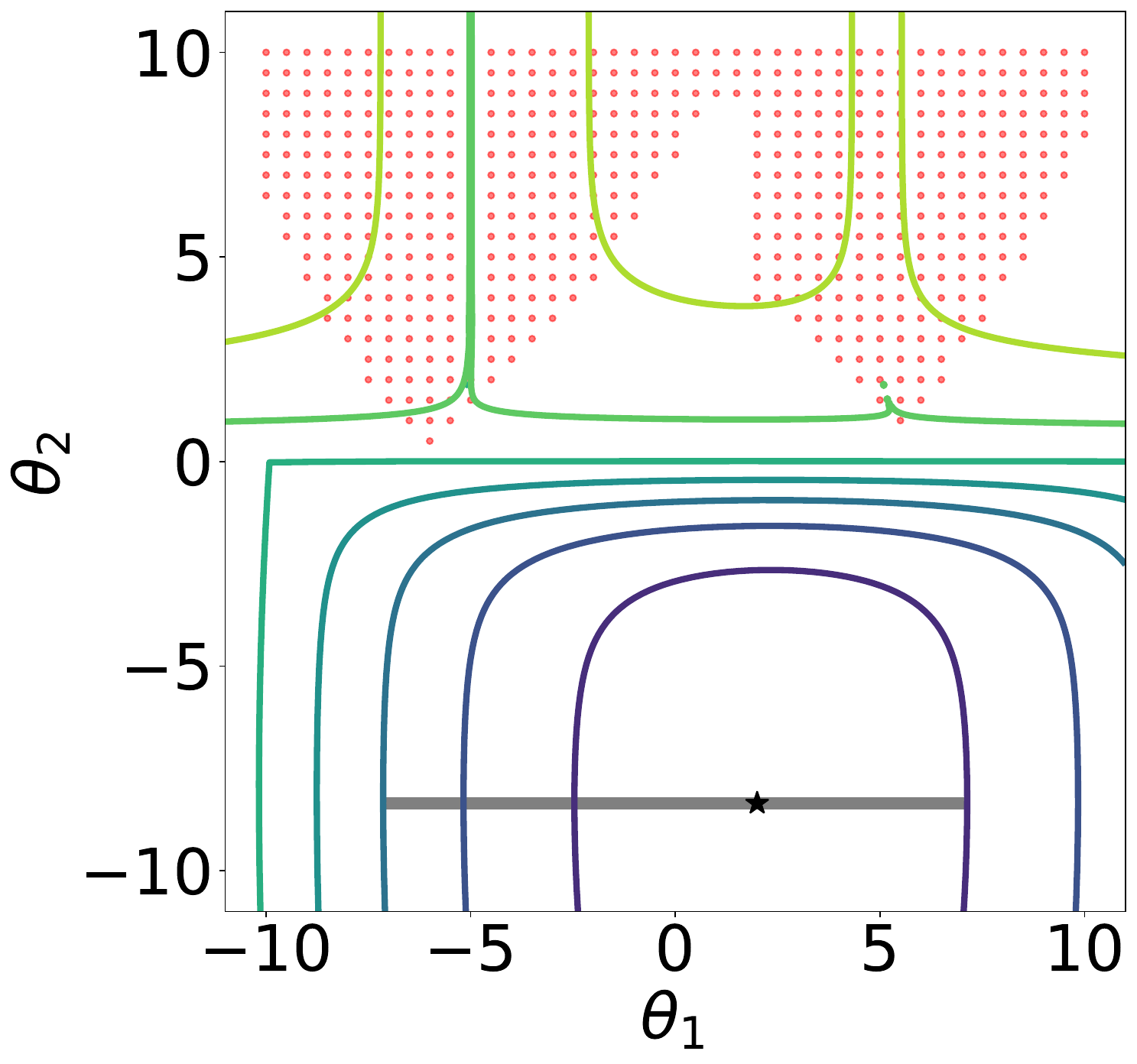}
        \caption{ADAM}
    \end{subfigure}
    \begin{subfigure}[b]{0.20\textwidth}
        \includegraphics[width=\textwidth]{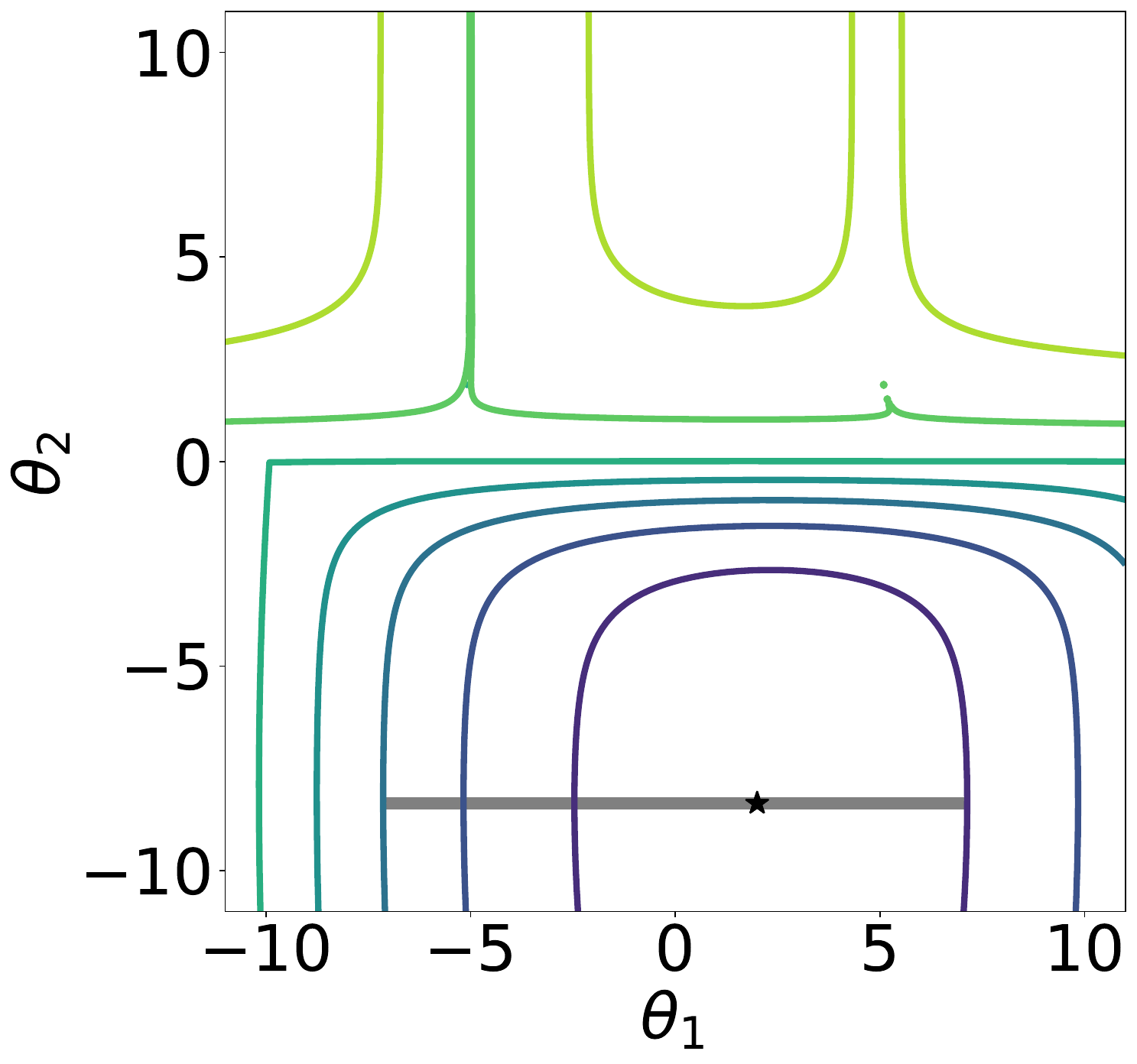}
        \caption{DCGD}
    \end{subfigure}
    \caption{Toy example: the region where the algorithm fails to reach a Pareto-stationary point in multi-objective optimization}
    \label{fig:toyexample}
\vspace{-10pt}
\end{wrapfigure}


We empirically demonstrate that DCGD can converge to a Pareto-stationary point. Consider a (slightly modified) toy example shown in \citep{Yu2020PCGrad, Liu2021CAGrad}, which has two objective functions; see Appendix~\ref{app:toyexample} for more details. We solve the problem with 1,600 uniformly sampled initial points using Adam, DCGD (Projection), DCGD (Average), and DCGD (Center). Then, we mark with a red dot the point at which the algorithm fails to reach a Pareto-stationary point. Figure~\ref{fig:toyexample} shows that while ADAM does not reach a Pareto-stationary point across many areas, all DCGD algorithms achieve convergence to Pareto-stationary points throughout the entire space. 

Several MTL algorithms, such as PCGrad~\cite{Yu2020PCGrad}, MGDA~\cite{desideri2012mgda}, CAGrad~\cite{Liu2021CAGrad}, Aligned-MTL~\cite{senushkin2023AlignmentMTL}, and Nash-MTL~\cite{navon2022nash} have been developed based on different and independent approaches. In contrast, the proposed DCGD framework provides a principled solution to the problem of conflicting gradients by directly characterizing the dual cone. As a result, our framework unifies many of these MTL algorithms as special cases, offering significant contributions not only to PINNs but also to the MTL domain. Proofs for the unification of MTL algorithms within the DCGD framework can be found in Appendix~\ref{app:uni}.

\section{Numerical Experiment}\label{sec:experiment}
This section demonstrates the superiority of DCGD through three distinct perspectives. In Section~\ref{subsec:benchmark}, we compare the performance of DCGD on five benchmark equations with that of a range of methods, including Adam \cite{kingma2014adam}, Learning Rate Annealing (LRA) \cite{wang2021understanding}, Neural Tangent Kernel (NTK) \cite{wang2022NTK}, PCGrad \cite{Yu2020PCGrad}, MGDA \cite{desideri2012mgda}, CAGrad \cite{Liu2021CAGrad}, Aligned-MTL \cite{senushkin2023AlignmentMTL}, MultiAdam \cite{pmlr-v202-yao23c}, and DPM \cite{kim2021dpm}. Section~\ref{subsec:complex PDEs} shows that DCGD can provide more accurate solutions for failure modes of PINNs and complex PDEs where vanilla PINNs fail. In Section~\ref{subsec:comb_dcgd}, we explore the compatibility of DCGD with existing loss balancing schemes such as LRA and NTK. 


To compare the effectiveness of DCGD with other optimization algorithms, we measure the accuracy of the PINN solution trained by each optimizer using the relative $L^2$-error. Then, we run each experiment across $10$ independent trials and report the mean, standard deviation, max, and min of the best accuracy. 


\subsection{Comparison on benchmark equations}\label{subsec:benchmark}
We solve three popular benchmark equations (the Helmholtz equation, the viscous Burgers' equation, and the Klein-Gordon equation) and two high-dimensional PDEs (5D-Heat equation and 3D-Helmholtz equation) using vanilla PINNs with different optimization techniques. For DCGD, we employ an adaptive gradient version of the DCGD (Center) algorithm, the DCGD (Center) combined with Adam (see Algo~\ref{alg:dcgd_adam}) by default for all experiments, provided in Appendix~\ref{app:ablation study}. For other methods, we perform careful hyperparameter tuning based on the recommendations in their papers. The PDE equations and detailed experimental setting are provided in Appendix~\ref{app:benchmark}. However, we do not report the performance of DPM because it is not only highly sensitive to hyperparameters but also exhibit poor performance, consistently observed in \cite{fesser2023understanding}. 

\begin{table*}[htb!]
\small\centering
\setlength{\tabcolsep}{4pt}
\begin{center}
\caption{\label{tab:benchmark}
Average of relative $L^2$ errors in 10 independent trials for each algorithm on three benchmark PDEs (3 independent trials for two high-dimensional PDEs). The value within the parenthesis indicates the standard deviation. `-' denotes that the optimizer failed to converge.}
\begin{tabular}{lccccc} 
\toprule
             & \multicolumn{5}{c}{PDE equation} \\ \midrule
Optimizer    & Helmholtz        & Burgers'         & Klein-Gordon      & Heat (5D)        & Helmholtz (3D)   \\ \midrule
Adam         & 0.0609 (0.0231)  & 0.0683 (0.0285)  & 0.0792 (0.0386)   & 0.0097 (0.0072)  & 0.6109 (0.2096)  \\
LRA          & \underline{0.0066 (0.0025)} & 0.0180 (0.0094)  & \underline{0.0069 (0.0037)} & 0.0052 (0.0056) & \underline{0.0831 (0.0123)}  \\
NTK          & 0.0358 (0.0107)  & 0.0224 (0.0061)  & 0.0223 (0.0151)   & 0.0027 (0.0012) & 0.4037 (0.2620)   \\
PCGrad       & 0.0109 (0.0031)  & \underline{0.0159 (0.0061)} & 0.0286 (0.0064)  & 0.0083 (0.0049) & 0.2532 (0.0476)  \\ 
MGDA         & 0.7590 (0.1180)  & 0.9780 (0.0462)  & 0.6690 (0.2790)   & -               & 0.9883 (0.0217)  \\
CAGrad       & 0.0735 (0.0390)  & 0.0321 (0.0063)  & 0.1850 (0.0301)   & 0.0043 (0.0016) & 0.5854 (0.3032)  \\
Aligned-MTL  & 0.6570 (0.0805)  & 0.0294 (0.0129)  & 0.5571 (0.1824)   & 0.0013 (0.0004) & 0.9138 (0.0645)  \\
MultiAdam    & 0.0211 (0.0032)  & 0.0875 (0.0303)  & 0.0228 (0.0038)   & \underline{0.0009 (0.0007)}  & 0.7809 (0.0031)  \\
DCGD         & \textbf{0.0029 (0.0005)}  & \textbf{0.0124 (0.0046)} & \textbf{0.0069 (0.0027)} & \textbf{0.0008 (0.0003)} & \textbf{0.0774 (0.0250)}  \\ \midrule\midrule
DCGD+LRA     & \textcolor{red}{0.0023 (0.0007)} & \textcolor{red}{0.0104 (0.0021)} & \textcolor{red}{0.0050 (0.0013)} & 0.0012 (0.0005)  & 0.1045 (0.0485)  \\
DCGD+NTK     & 0.0057 (0.0035)  & 0.0113 (0.0040)  & 0.0055 (0.0014)   & 0.0009 (0.0004) & 0.3525 (0.2659)  \\ \bottomrule
\end{tabular}
\end{center}
\end{table*}

Table~\ref{tab:benchmark} displays the mean and standard deviation of the relative $L^2$ errors for each optimization algorithm applied to the three PDE equations. The error plots of approximated PINN solutions and other statistics of relative $L^2$ errors are summarized in Appendix~\ref{app:benchmark}. In the result tables, we highlight \textbf{the best} and \underline{the second-best} methods. While the second best methods vary across experiments, the proposed method consistently outperforms other algorithms achieving the lowest $L^2$ errors. This result underscores the robustness and adaptability of our method for solving various PDEs.

\begin{figure}[htb!]
    \centering 
    \begin{subfigure}[b]{0.3\textwidth}
        \includegraphics[width=\textwidth]{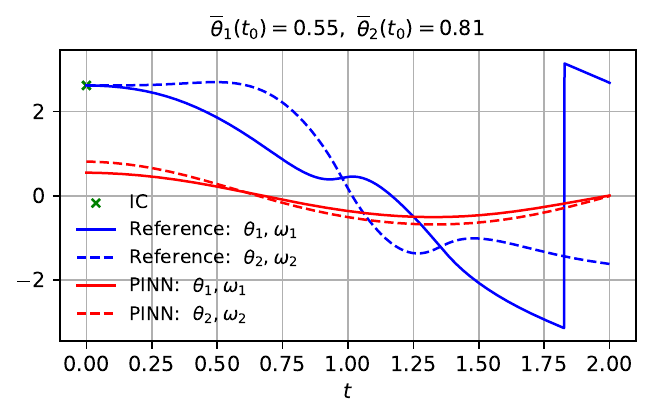}
        \subcaption{SGD}
        \label{fig:doublependulum_sgd}
    \end{subfigure}    
    \begin{subfigure}[b]{0.3\textwidth}
        \includegraphics[width=\textwidth]{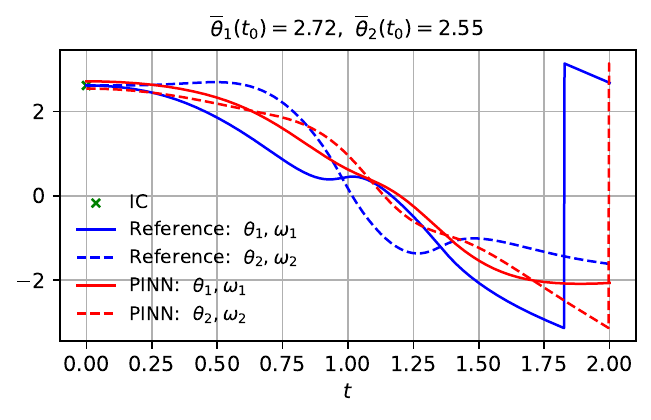}
        \subcaption{ADAM}
        \label{fig:doublependulum_adam}
    \end{subfigure}
    \begin{subfigure}[b]{0.3\textwidth}
        \includegraphics[width=\textwidth]{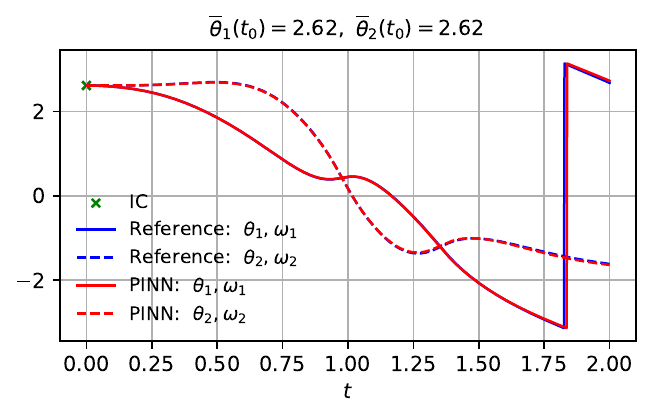}
        \subcaption{DCGD}
        \label{fig:doublependulum_dcgd}
    \end{subfigure}
    \caption{Double pendulum problem: prediction of each method. SGD and ADAM find shifted solutions, but DCGD successfully approximates the reference solution.}
    \label{fig:sol_doublependulum}
\end{figure}

\subsection{Failure model of PINNs and Complex P(I)DEs}\label{subsec:complex PDEs}
\begin{wraptable}{r}{0.42\textwidth}
\vspace{-10pt}
\small
\centering
\caption{\label{tab:complexPDE}
Relative $L^2$ errors for DCGD (Center) on Chaotic KS equation, Convection equation and Volterra IDEs. }
\begin{tabular}{lcc} \toprule
Equation        & Baseline    & DCGD              \\ \midrule
Chaotic KS      & 0.0687      & \textbf{0.0376}   \\  
Convection      & 0.4880      & \textbf{0.0246}   \\  
Volterra IDEs   & 0.0068      & \textbf{0.0011}   \\
\bottomrule
\end{tabular}
\vspace{-10pt}
\end{wraptable}

We explore more challenging problems, including failure modes of PINNs and complex PDEs, where vanilla PINNs fail to approximate solutions, and highlight the universal applicability of DCGD. We refer to Appendix~\ref{app:complex PDEs} for detailed experimental settings.

First, we revisit the problem of a double pendulum in \citet{steger2022how}, which is highly sensitive to initial conditions. The goal is to solve the trajectory of $\{(\theta_1(t), \theta_2(t))\}_{t\geq t_0}$, governed by the nonlinear differential equation as discussed in \Eqref{eq:double_pendulum}. The reference solution and its first-order derivative are represented by the blue solid and dotted lines, respectively, in Figure~\ref{fig:sol_doublependulum}. We train PINNs with SGD and ADAM to solve the double pendulum problem, where their solutions are depicted by the red solid and dotted lines in Figure~\ref{fig:doublependulum_sgd} and Figure~\ref{fig:doublependulum_adam}, respectively. The PINN solutions trained with SGD and ADAM fail to accurately approximate the reference solution. In contrast, the reference solution is successfully recovered by our DCGD algorithm (see Figure~\ref{fig:doublependulum_dcgd}). Second, we present the performance of DCGD for two challenging PDEs: the chaotic Kuramoto-Sivashinsky (KS) equation and the convection equation. For the chaotic KS equation, we combine DCGD with the causal training scheme of \cite{wang2022respecting}, the current state-of-the art result. For the convection equation, DCGD is applied to PINNsFormer of \cite{zhao2023pinnsformer}. As shown in Table~\ref{tab:complexPDE}, DCGD achieves the lowest relative $L^2$ errors for the complex PDEs compared to the existing optimally tuned strategies, demonstrating its effectiveness in overcoming failure modes of PINNs. Third, the universal applicability of DCGD is not limited to specific architectures, sampling techniques, and training schemes. For example, A-PINN, designed for solving integral equations and integro-differential equations, achieves state-of-the art results in nonlinear Volterra IDEs \cite{yuan2022pinn}. DCGD significantly improves the performance of A-PINN for solving Volterra IDEs, as shown in Table~\ref{tab:complexPDE}. Moreover, Table~\ref{tab:SPINN} shows that the performance of SPINN can be highly improved by applying DCGD for solving multi-dimensional PDEs.

\subsection{Compatibility of DCGD with existing methods}\label{subsec:comb_dcgd}
The proposed DCGD framework can be easily combined with existing PINN training strategies, including loss balancing methods. To illustrate this advantage, we have designed DCGD algorithms that integrate with LRA and NTK, named DCGD (Center) + LRA and DCGD (Center) + NTK, respectively. Please refer to Algo.~\ref{alg:DCGD_lossbalancing} for the detailed implementation.

We apply DCGD (Center) + LRA and DCGD (Center) + NTK to the same experiments described in Section~\ref{subsec:benchmark}. Tables~\ref{tab:benchmark} and \ref{tab:MaxMinBenchmarks} demonstrate that the performance of DCGD algorithms can be further enhanced across all the experiments in terms of the mean, maximum, and minimum of relative $L^2$ errors by integrating existing ideas from the literature.





\section{Conclusion and Discussion}\label{sec:conclusion}
In this work, we provided a clear criterion for when PINNs might be adversely trained, in terms of the angle and relative magnitude ratio of the gradients of the PDE residual and boundary loss, through a geometric analysis. Based on this theoretical insight, we characterized a dual cone region where both losses can decrease simultaneously without gradient pathologies. We then proposed a general framework for DCGD, which ensures that the updated gradient falls within the dual cone region, and provided a convergence analysis. Within this general framework, we introduced three specific DCGD algorithms and conduct extensive empirical experiments. Our experimental results demonstrate that the proposed DCGD algorithms outperform other optimization algorithms. In particular, DCGD is efficient in solving challenging problems such as failure modes of PINNs and complex PDEs compared to the current state-of-the art approaches. Furthermore, DCGD can be easily combined with other strategies and applied to variants of PINNs.

Although we have presented a novel optimization algorithm, DCGD, to address challenging issues in PINNs, there still remain some interesting and important questions. For instance, one could design a more powerful DCGD specification within the dual cone region that goes beyond the projection, average, and center techniques. Also, while we mainly consider multi-objective optimization for PINNs, future work can focus on more general and complex types of multi-task learning problems.

\subsubsection*{Acknowledgement}
This work was supported by the National Research Foundation of Korea (NRF) grant funded by the Korea government (MSIT) (No.RS-2023-00253002), the Institute of Information \& communications Technology Planning \& Evaluation (IITP) grant funded by the Korea government (MSIT) (No.2020-0-01336, Artificial Intelligence Graduate School Program (UNIST)), and Startup Research Fund (1.220132.01) of UNIST (Ulsan National Institute of Science \& Technology).

{\small
\bibliographystyle{unsrtnat}
\bibliography{reference}
}




\newpage
\section*{NeurIPS Paper Checklist}

\begin{enumerate}

\item {\bf Claims}
    \item[] Question: Do the main claims made in the abstract and introduction accurately reflect the paper's contributions and scope?
    \item[] Answer: \answerYes{} 
    Our abstract and introduction accurately reflect the paper's contributions and scope. 
    \item[] Guidelines:
    \begin{itemize}
        \item The answer NA means that the abstract and introduction do not include the claims made in the paper.
        \item The abstract and/or introduction should clearly state the claims made, including the contributions made in the paper and important assumptions and limitations. A No or NA answer to this question will not be perceived well by the reviewers. 
        \item The claims made should match theoretical and experimental results, and reflect how much the results can be expected to generalize to other settings. 
        \item It is fine to include aspirational goals as motivation as long as it is clear that these goals are not attained by the paper. 
    \end{itemize}

\item {\bf Limitations}
    \item[] Question: Does the paper discuss the limitations of the work performed by the authors?
    \item[] Answer: \answerYes{} 
    We have discussed the limitations of our work in Section~\ref{sec:conclusion}.
    \item[] Guidelines:
    \begin{itemize}
        \item The answer NA means that the paper has no limitation while the answer No means that the paper has limitations, but those are not discussed in the paper. 
        \item The authors are encouraged to create a separate "Limitations" section in their paper.
        \item The paper should point out any strong assumptions and how robust the results are to violations of these assumptions (e.g., independence assumptions, noiseless settings, model well-specification, asymptotic approximations only holding locally). The authors should reflect on how these assumptions might be violated in practice and what the implications would be.
        \item The authors should reflect on the scope of the claims made, e.g., if the approach was only tested on a few datasets or with a few runs. In general, empirical results often depend on implicit assumptions, which should be articulated.
        \item The authors should reflect on the factors that influence the performance of the approach. For example, a facial recognition algorithm may perform poorly when image resolution is low or images are taken in low lighting. Or a speech-to-text system might not be used reliably to provide closed captions for online lectures because it fails to handle technical jargon.
        \item The authors should discuss the computational efficiency of the proposed algorithms and how they scale with dataset size.
        \item If applicable, the authors should discuss possible limitations of their approach to address problems of privacy and fairness.
        \item While the authors might fear that complete honesty about limitations might be used by reviewers as grounds for rejection, a worse outcome might be that reviewers discover limitations that aren't acknowledged in the paper. The authors should use their best judgment and recognize that individual actions in favor of transparency play an important role in developing norms that preserve the integrity of the community. Reviewers will be specifically instructed to not penalize honesty concerning limitations.
    \end{itemize}

\item {\bf Theory Assumptions and Proofs}
    \item[] Question: For each theoretical result, does the paper provide the full set of assumptions and a complete (and correct) proof?
    \item[] Answer: \answerYes{} 
    We have stated the explicit assumptions and provide complete proofs for main results in Appendix~\ref{app:proofs}.
    \item[] Guidelines:
    \begin{itemize}
        \item The answer NA means that the paper does not include theoretical results. 
        \item All the theorems, formulas, and proofs in the paper should be numbered and cross-referenced.
        \item All assumptions should be clearly stated or referenced in the statement of any theorems.
        \item The proofs can either appear in the main paper or the supplemental material, but if they appear in the supplemental material, the authors are encouraged to provide a short proof sketch to provide intuition. 
        \item Inversely, any informal proof provided in the core of the paper should be complemented by formal proofs provided in appendix or supplemental material.
        \item Theorems and Lemmas that the proof relies upon should be properly referenced. 
    \end{itemize}

    \item {\bf Experimental Result Reproducibility}
    \item[] Question: Does the paper fully disclose all the information needed to reproduce the main experimental results of the paper to the extent that it affects the main claims and/or conclusions of the paper (regardless of whether the code and data are provided or not)?
    \item[] Answer: \answerYes{} 
    The pseudo code of our algorithms can be found in Appendix~\ref{app:algorithms}, and we have stated detailed instructions in Appendix~\ref{app:details}. We have also provided code to reproduce the results of main experiments in supplementary material.
    \item[] Guidelines:
    \begin{itemize}
        \item The answer NA means that the paper does not include experiments.
        \item If the paper includes experiments, a No answer to this question will not be perceived well by the reviewers: Making the paper reproducible is important, regardless of whether the code and data are provided or not.
        \item If the contribution is a dataset and/or model, the authors should describe the steps taken to make their results reproducible or verifiable. 
        \item Depending on the contribution, reproducibility can be accomplished in various ways. For example, if the contribution is a novel architecture, describing the architecture fully might suffice, or if the contribution is a specific model and empirical evaluation, it may be necessary to either make it possible for others to replicate the model with the same dataset, or provide access to the model. In general. releasing code and data is often one good way to accomplish this, but reproducibility can also be provided via detailed instructions for how to replicate the results, access to a hosted model (e.g., in the case of a large language model), releasing of a model checkpoint, or other means that are appropriate to the research performed.
        \item While NeurIPS does not require releasing code, the conference does require all submissions to provide some reasonable avenue for reproducibility, which may depend on the nature of the contribution. For example
        \begin{enumerate}
            \item If the contribution is primarily a new algorithm, the paper should make it clear how to reproduce that algorithm.
            \item If the contribution is primarily a new model architecture, the paper should describe the architecture clearly and fully.
            \item If the contribution is a new model (e.g., a large language model), then there should either be a way to access this model for reproducing the results or a way to reproduce the model (e.g., with an open-source dataset or instructions for how to construct the dataset).
            \item We recognize that reproducibility may be tricky in some cases, in which case authors are welcome to describe the particular way they provide for reproducibility. In the case of closed-source models, it may be that access to the model is limited in some way (e.g., to registered users), but it should be possible for other researchers to have some path to reproducing or verifying the results.
        \end{enumerate}
    \end{itemize}

\item {\bf Open access to data and code}
    \item[] Question: Does the paper provide open access to the data and code, with sufficient instructions to faithfully reproduce the main experimental results, as described in supplemental material?
    \item[] Answer: \answerYes{} 
    We have provided our code to reproduce the results of main experiments in supplementary material.
    
    \item[] Guidelines:
    \begin{itemize}
        \item The answer NA means that paper does not include experiments requiring code.
        \item Please see the NeurIPS code and data submission guidelines (\url{https://nips.cc/public/guides/CodeSubmissionPolicy}) for more details.
        \item While we encourage the release of code and data, we understand that this might not be possible, so “No” is an acceptable answer. Papers cannot be rejected simply for not including code, unless this is central to the contribution (e.g., for a new open-source benchmark).
        \item The instructions should contain the exact command and environment needed to run to reproduce the results. See the NeurIPS code and data submission guidelines (\url{https://nips.cc/public/guides/CodeSubmissionPolicy}) for more details.
        \item The authors should provide instructions on data access and preparation, including how to access the raw data, preprocessed data, intermediate data, and generated data, etc.
        \item The authors should provide scripts to reproduce all experimental results for the new proposed method and baselines. If only a subset of experiments are reproducible, they should state which ones are omitted from the script and why.
        \item At submission time, to preserve anonymity, the authors should release anonymized versions (if applicable).
        \item Providing as much information as possible in supplemental material (appended to the paper) is recommended, but including URLs to data and code is permitted.
    \end{itemize}

\item {\bf Experimental Setting/Details}
    \item[] Question: Does the paper specify all the training and test details (e.g., data splits, hyperparameters, how they were chosen, type of optimizer, etc.) necessary to understand the results?
    \item[] Answer: \answerYes{} 
    All experimental setting is explained in Appendix~\ref{app:details} including hyperparameter choices, training and test details. 
    \item[] Guidelines:
    \begin{itemize}
        \item The answer NA means that the paper does not include experiments.
        \item The experimental setting should be presented in the core of the paper to a level of detail that is necessary to appreciate the results and make sense of them.
        \item The full details can be provided either with the code, in appendix, or as supplemental material.
    \end{itemize}

\item {\bf Experiment Statistical Significance}
    \item[] Question: Does the paper report error bars suitably and correctly defined or other appropriate information about the statistical significance of the experiments?
    \item[] Answer: \answerYes{} 

    We repeated the main experiments in several times and reported several statistics including min, max, and the standard deviations for our numerical results.
    
    \item[] Guidelines:
    \begin{itemize}
        \item The answer NA means that the paper does not include experiments.
        \item The authors should answer "Yes" if the results are accompanied by error bars, confidence intervals, or statistical significance tests, at least for the experiments that support the main claims of the paper.
        \item The factors of variability that the error bars are capturing should be clearly stated (for example, train/test split, initialization, random drawing of some parameter, or overall run with given experimental conditions).
        \item The method for calculating the error bars should be explained (closed form formula, call to a library function, bootstrap, etc.)
        \item The assumptions made should be given (e.g., Normally distributed errors).
        \item It should be clear whether the error bar is the standard deviation or the standard error of the mean.
        \item It is OK to report 1-sigma error bars, but one should state it. The authors should preferably report a 2-sigma error bar than state that they have a 96\% CI, if the hypothesis of Normality of errors is not verified.
        \item For asymmetric distributions, the authors should be careful not to show in tables or figures symmetric error bars that would yield results that are out of range (e.g. negative error rates).
        \item If error bars are reported in tables or plots, The authors should explain in the text how they were calculated and reference the corresponding figures or tables in the text.
    \end{itemize}

\item {\bf Experiments Compute Resources}
    \item[] Question: For each experiment, does the paper provide sufficient information on the computer resources (type of compute workers, memory, time of execution) needed to reproduce the experiments?
    \item[] Answer: \answerYes{} 
    See Application~\ref{app:env} for details of resources.
    \item[] Guidelines:
    \begin{itemize}
        \item The answer NA means that the paper does not include experiments.
        \item The paper should indicate the type of compute workers CPU or GPU, internal cluster, or cloud provider, including relevant memory and storage.
        \item The paper should provide the amount of compute required for each of the individual experimental runs as well as estimate the total compute. 
        \item The paper should disclose whether the full research project required more compute than the experiments reported in the paper (e.g., preliminary or failed experiments that didn't make it into the paper). 
    \end{itemize}
    
\item {\bf Code Of Ethics}
    \item[] Question: Does the research conducted in the paper conform, in every respect, with the NeurIPS Code of Ethics \url{https://neurips.cc/public/EthicsGuidelines}?
    \item[] Answer: \answerYes{} 
    \item[] Guidelines:
    \begin{itemize}
        \item The answer NA means that the authors have not reviewed the NeurIPS Code of Ethics.
        \item If the authors answer No, they should explain the special circumstances that require a deviation from the Code of Ethics.
        \item The authors should make sure to preserve anonymity (e.g., if there is a special consideration due to laws or regulations in their jurisdiction).
    \end{itemize}

\item {\bf Broader Impacts}
    \item[] Question: Does the paper discuss both potential positive societal impacts and negative societal impacts of the work performed?
    \item[] Answer: \answerNA{} 
    \item[] Guidelines:
    \begin{itemize}
        \item The answer NA means that there is no societal impact of the work performed.
        \item If the authors answer NA or No, they should explain why their work has no societal impact or why the paper does not address societal impact.
        \item Examples of negative societal impacts include potential malicious or unintended uses (e.g., disinformation, generating fake profiles, surveillance), fairness considerations (e.g., deployment of technologies that could make decisions that unfairly impact specific groups), privacy considerations, and security considerations.
        \item The conference expects that many papers will be foundational research and not tied to particular applications, let alone deployments. However, if there is a direct path to any negative applications, the authors should point it out. For example, it is legitimate to point out that an improvement in the quality of generative models could be used to generate deepfakes for disinformation. On the other hand, it is not needed to point out that a generic algorithm for optimizing neural networks could enable people to train models that generate Deepfakes faster.
        \item The authors should consider possible harms that could arise when the technology is being used as intended and functioning correctly, harms that could arise when the technology is being used as intended but gives incorrect results, and harms following from (intentional or unintentional) misuse of the technology.
        \item If there are negative societal impacts, the authors could also discuss possible mitigation strategies (e.g., gated release of models, providing defenses in addition to attacks, mechanisms for monitoring misuse, mechanisms to monitor how a system learns from feedback over time, improving the efficiency and accessibility of ML).
    \end{itemize}
    
\item {\bf Safeguards}
    \item[] Question: Does the paper describe safeguards that have been put in place for responsible release of data or models that have a high risk for misuse (e.g., pretrained language models, image generators, or scraped datasets)?
    \item[] Answer: \answerNA{} 
    \item[] Guidelines:
    \begin{itemize}
        \item The answer NA means that the paper poses no such risks.
        \item Released models that have a high risk for misuse or dual-use should be released with necessary safeguards to allow for controlled use of the model, for example by requiring that users adhere to usage guidelines or restrictions to access the model or implementing safety filters. 
        \item Datasets that have been scraped from the Internet could pose safety risks. The authors should describe how they avoided releasing unsafe images.
        \item We recognize that providing effective safeguards is challenging, and many papers do not require this, but we encourage authors to take this into account and make a best faith effort.
    \end{itemize}

\item {\bf Licenses for existing assets}
    \item[] Question: Are the creators or original owners of assets (e.g., code, data, models), used in the paper, properly credited and are the license and terms of use explicitly mentioned and properly respected?
    \item[] Answer: \answerYes{} 
    The overall experiments followed the existing experimental setup which have cited, and the code and data are available for use under the MIT license. For CausalPINNs in \cite{wang2022respecting}, the code is available under the CC-BY-NC-SA 4.0 license.
    \item[] Guidelines:
    \begin{itemize}
        \item The answer NA means that the paper does not use existing assets.
        \item The authors should cite the original paper that produced the code package or dataset.
        \item The authors should state which version of the asset is used and, if possible, include a URL.
        \item The name of the license (e.g., CC-BY 4.0) should be included for each asset.
        \item For scraped data from a particular source (e.g., website), the copyright and terms of service of that source should be provided.
        \item If assets are released, the license, copyright information, and terms of use in the package should be provided. For popular datasets, \url{paperswithcode.com/datasets} has curated licenses for some datasets. Their licensing guide can help determine the license of a dataset.
        \item For existing datasets that are re-packaged, both the original license and the license of the derived asset (if it has changed) should be provided.
        \item If this information is not available online, the authors are encouraged to reach out to the asset's creators.
    \end{itemize}

\item {\bf New Assets}
    \item[] Question: Are new assets introduced in the paper well documented and is the documentation provided alongside the assets?
    \item[] Answer: \answerNA{} 
    \item[] Guidelines:
    \begin{itemize}
        \item The answer NA means that the paper does not release new assets.
        \item Researchers should communicate the details of the dataset/code/model as part of their submissions via structured templates. This includes details about training, license, limitations, etc. 
        \item The paper should discuss whether and how consent was obtained from people whose asset is used.
        \item At submission time, remember to anonymize your assets (if applicable). You can either create an anonymized URL or include an anonymized zip file.
    \end{itemize}

\item {\bf Crowdsourcing and Research with Human Subjects}
    \item[] Question: For crowdsourcing experiments and research with human subjects, does the paper include the full text of instructions given to participants and screenshots, if applicable, as well as details about compensation (if any)? 
    \item[] Answer: \answerNA{} 
    \item[] Guidelines:
    \begin{itemize}
        \item The answer NA means that the paper does not involve crowdsourcing nor research with human subjects.
        \item Including this information in the supplemental material is fine, but if the main contribution of the paper involves human subjects, then as much detail as possible should be included in the main paper. 
        \item According to the NeurIPS Code of Ethics, workers involved in data collection, curation, or other labor should be paid at least the minimum wage in the country of the data collector. 
    \end{itemize}

\item {\bf Institutional Review Board (IRB) Approvals or Equivalent for Research with Human Subjects}
    \item[] Question: Does the paper describe potential risks incurred by study participants, whether such risks were disclosed to the subjects, and whether Institutional Review Board (IRB) approvals (or an equivalent approval/review based on the requirements of your country or institution) were obtained?
    \item[] Answer: \answerNA{} 
    \item[] Guidelines:
    \begin{itemize}
        \item The answer NA means that the paper does not involve crowdsourcing nor research with human subjects.
        \item Depending on the country in which research is conducted, IRB approval (or equivalent) may be required for any human subjects research. If you obtained IRB approval, you should clearly state this in the paper. 
        \item We recognize that the procedures for this may vary significantly between institutions and locations, and we expect authors to adhere to the NeurIPS Code of Ethics and the guidelines for their institution. 
        \item For initial submissions, do not include any information that would break anonymity (if applicable), such as the institution conducting the review.
    \end{itemize}

\end{enumerate}
\clearpage
\newpage
\appendix

\section{Proofs for Section~\ref{sec:methodology}}\label{app:proofs}

\begin{proof}[Proof of Theorem~\ref{thm:dual_cone_region}] Recall that  $\phi_t$ is the angle between $\nabla \cL_r(\theta_t)$ and $\nabla \cL_r(\theta_t)$, and $R = \frac{\|\nabla \cL_r(\theta_t)\|}{\|\nabla \cL_b(\theta_t)\|}$ at each iteration $t$. Consider a cone $K_t$, defined as 
$$
\rmK_t:=\left\{c x | c\geq 0, x \in \{\nabla \cL_r(\theta_t), \nabla \cL_b(\theta_t)\}\right\}.
$$

\paragraph{Case (i).} Suppose that  $\ip{\nabla \cL_b(\theta_t)}{\nabla \cL_r(\theta_t)} \geq 0$. Observe that 
\begin{align*}
\ip{\nabla \cL(\theta_t)}{\nabla \cL_r(\theta_t)} &= \ip{\nabla \cL_b(\theta_t)}{\nabla \cL_r(\theta_t)}+\|\nabla \cL_r(\theta_t)\|^2 \geq 0, \\
\ip{\nabla \cL(\theta_t)}{\nabla \cL_b(\theta_t)} &= \ip{\nabla \cL_b(\theta_t)}{\nabla \cL_r(\theta_t)}+\|\nabla \cL_b(\theta_t)\|^2 \geq 0.
\end{align*}
Therefore, by the definition of the dual cone, we have $\nabla \cL(\theta_t) \in \rmK_t^*$. 

\paragraph{Case (ii).} Suppose that $\ip{\nabla \cL_b(\theta_t)}{\nabla \cL_r(\theta_t)} < 0$ and $ -\cos(\phi_t) \leq R \leq -\frac{1}{\cos(\phi_t)} $. By multiplying $\|\nabla \cL_b\|\|\nabla \cL_r\|$ to $-\cos(\phi_t) \leq R$, we get 
\begin{align}
-\cos(\phi_t) \leq R&\Leftrightarrow -\|\nabla \cL_b(\theta_t)\|\|\nabla \cL_r(\theta_t)\| \cos(\phi_t) \leq \|\nabla \cL_r(\theta_t)\|^2, \nonumber \\
&\Leftrightarrow \ip{\nabla \cL_b(\theta_t)}{\nabla \cL_r(\theta_t)} + \ip{\nabla \cL_r(\theta_t)}{\nabla \cL_r(\theta_t)} \geq 0, \nonumber \\
&\Leftrightarrow \ip{\nabla \cL(\theta_t)}{\nabla \cL_r(\theta_t)} \geq 0. \label{ineq:inner1}
\end{align}
On the other hand, by multiplying $\|\nabla \cL_b(\theta_t)\|^2 \cos(\phi_t)$ to $R \leq -\frac{1}{\cos (\phi_t)}$, we have
\begin{align}
R \leq -\frac{1}{-\cos (\phi_t)}&\Leftrightarrow\|\nabla \cL_b(\theta_t)\|\|\nabla \cL_r(\theta_t)\| \cos(\phi_t) \geq -\|\nabla \cL_b(\theta_t)\|^2, \nonumber \\
&\Leftrightarrow \ip{\nabla \cL_b(\theta_t)}{\nabla \cL_r(\theta_t)} + \ip{\nabla \cL_b(\theta_t)}{\nabla \cL_b(\theta_t)} \geq 0, \nonumber \\
&\Leftrightarrow \ip{\nabla \cL(\theta_t)}{\nabla \cL_b(\theta_t)}\geq 0. \label{ineq:inner2}
\end{align}

Therefore, we conclude that if $\ip{\nabla \cL_b(\theta_t)}{\nabla \cL_r(\theta_t)} < 0$, then $ -\cos(\phi_t) \leq R \leq -\frac{1}{\cos(\phi_t)} $ is equivalent to $\nabla \cL(\theta_t)\in \rmK_t^*$. 

\end{proof}

\begin{proof}[Proof of Proposition~\ref{prop:dual_cone_G}]

Recall that 
\begin{align}
    \rmG_t := \left\{c_1 \nabla_t \cL_{\|\nabla \cL_r^\perp} + c_2 \nabla_t \cL_{\|\nabla \cL_b^\perp} \big| c_1, c_2\geq 0 \right\}.
\end{align}

It is enough to show that we have $\ip{g}{\nabla \cL_r(\theta_t)}\geq 0$ and $\ip{g}{\nabla \cL_b(\theta_t)}\geq 0$ for any $g\in \rmG_t$. By the definition of $\rmG_t$, there exists $c_1, c_2\geq 0$ such that $g = c_1 \nabla_t \cL_{\|\nabla \cL_r^\perp} + c_2 \nabla_t \cL_{\|\nabla \cL_b^\perp}$ for all $g \in \rmG_t$. One can easily check that 
\begin{align*}
        \ip{g}{\nabla \cL_r(\theta_t)} &=  \ip{c_2 \nabla_t \cL_{\|\nabla \cL_b^\perp}}{\nabla \cL_r(\theta_t)} \\
        &= \ip{c_2\left(\nabla \cL_r(\theta_t) - \frac{\ip{\nabla \cL_b(\theta_t)}{\nabla \cL_r(\theta_t)}}{\|\nabla \cL_b(\theta_t)\|^2}\nabla \cL_b(\theta_t) \right)}{\nabla \cL_r(\theta_t)} \\
        &= c_2\left(\|\nabla \cL_r(\theta_t)\|^2-\frac{|\ip{\nabla \cL_b(\theta_t)}{\nabla \cL_r(\theta_t)}|^2}{\|\nabla \cL_b(\theta_t)\|^2} \right) \\
        &= c_2\|\nabla \cL_r(\theta_t)\|^2(1-\cos(\phi_t)) \\
        &\geq 0
\end{align*}
where $\phi_t$ is the angle between $\nabla \cL_r(\theta_t)$ and $\nabla \cL_b(\theta_t)$. One can derive that $\ip{g}{\nabla \cL_r(\theta_t)}\geq 0$ in the same manner. Therefore, we conclude that $\rmG_t \subset \rmK_t^*$. 

\end{proof}

\begin{proof}[Proof of Theorem~\ref{thm:convergence_nonconvex}]
Let $\phi_t$ be the angle between $\nabla \cL_r(\theta_t)$ and $\nabla \cL_b(\theta_t)$, and $\psi_t$ be the angle between $g_t^{\text{dual}}$ and $\nabla \cL(\theta_t)$ at the $t$-th iteration. Note that $\theta_{t+1} = \theta_t - \lambda g_t^{\text{dual}}$ where $g_t^{\text{dual}}$ satisfies the conditions (i), (ii) of Theorem~\ref{thm:convergence_nonconvex}.
    
First of all, DCGD algorithm reaches a Pareto-stationary point if $\phi_t=-1$ by Definition~\ref{def:pareto}, at which the optimization process is stopped. 

Otherwise, we first observe from the differentiability and $L$-Lipschitz continuity condition of $\nabla \cL(\cdot)$ for all $x, y \in \sR^d$:
\begin{align}
        \cL(x) - \cL(y) &= \int_0^1 \ip{\nabla \cL(y + t(x-y))}{x-y} \rd t \nonumber \\ 
        &\leq  \ip{\nabla \cL(y)}{x-y}  + \int_0^1 \ip{\nabla \cL(y + t(x-y))-\nabla \cL(y)}{x-y}\rd t    \nonumber \\
        &\leq  \ip{\nabla \cL(y)}{x-y} +\int_0^1\|\nabla \cL(y + t(x-y))-\nabla \cL(y)\|\|x-y\| \rd t \nonumber \\
        &\leq  \ip{\nabla \cL(y)}{x-y} +\int_0^1 L t \|x-y\|^2 \rd t \nonumber \\
        &= \ip{\nabla \cL(y)}{x-y}  + \frac{L}{2}\|x-y\|^2 \label{ineq:L-smooth},
\end{align}
where we have used Cauchy-Schwarz inequality for the third inequality. Using \Eqref{ineq:L-smooth} and Conditions (i), (ii) of Theorem~\ref{thm:convergence_nonconvex},
one calculates that for $\lambda \leq \frac{1}{2L}$, 
\begin{align}
 \cL(\theta_{t+1}) - \cL(\theta_{t}) &\leq - \lambda \ip{\nabla\cL(\theta_{t})}{g_t^{\text{dual}}} + \frac{L\lambda^2}{2} \| g_t^{\text{dual}} \|^2 \nonumber \\
    &\leq - \lambda \ip{\nabla\cL(\theta_{t})}{g_t^{\text{dual}}} + \frac{\lambda}{4} \| g_t^{\text{dual}} \|^2 \nonumber \\
    &= -\frac{\lambda}{4}\left(2\ip{\nabla \cL(\theta_t)}{g_t^{\text{dual}}}-\|g_t^{\text{dual}}\|^2 + 2\ip{\nabla \cL(\theta_t)}{g_t^{\text{dual}}}  \right) \nonumber \\
    &\leq -\frac{\lambda}{2} \ip{\nabla \cL(\theta_t)}{g_t^{\text{dual}}} \quad \because \text{condition (i)} \nonumber  \\
    &\leq -\frac{\lambda M}{2} \|\nabla \cL(\theta_t)\|^2 \quad \because \text{Cauchy-Swartz inequality and condition (2)}  
\end{align}

By using telescoping sums, we further obtain
\begin{align*}
\sum_{t=0}^{T} \cL(\theta_{t+1})-\cL(\theta_t) &= \cL(\theta_{T+1}) - \cL(\theta_0) \\
&\leq -\frac{\lambda M}{2}\sum_{t=0}^T \|\nabla \cL(\theta_t)\|^2,
\end{align*}
which yields 
\begin{align*}
    \frac{1}{T+1}\sum_{t=0}^T \|\nabla \cL(\theta_t)\|^2 \leq \frac{2\left(\cL(\theta_0)-\cL(\theta_{T+1})\right)}{\lambda M(T+1)} \\
    \leq \frac{2\left(\cL(\theta_0)-\cL(\theta^*)\right)}{\lambda M(T+1)}.
\end{align*}

\end{proof}

\begin{proof}[Proof of Proposition~\ref{prop: DCGD(center)}]\label{pf:center in region}

Note that $g_t^c$ is the angle bisector of $\nabla \cL_r(\theta_t)$ and $\nabla \cL_b(\theta)$. From the formula of vector projection, $g_t^{\text{dual}}$ of DCGD (Center) is the projection of $\nabla \cL(\theta_t)$ on to $g_t^c$. Thus, it is enough to show that $g_t^c$ is included in $\rmG_t$. 

We observe that 
\begin{align}
    \nabla_t \cL_{\| \nabla \cL_r^\perp} &= \nabla \cL(\theta_t) - \ip{\nabla \cL(\theta_t)}{\nabla \cL_r(\theta_t)}\frac{\nabla \cL_r(\theta_t)}{\|\nabla \cL_r(\theta_t)\|^2} \label{eq:grad_r_perp}\\
    &=\nabla \cL_b(\theta_t) - \ip{\nabla \cL_b(\theta_t)}{\nabla \cL_r(\theta_t)}\frac{\nabla \cL_r(\theta_t)}{\|\nabla \cL_r(\theta_t)\|^2},
\end{align}
and 
\begin{align}
    \nabla_t \cL_{\| \nabla \cL_b^\perp} &= \nabla \cL(\theta_t) - \ip{\nabla \cL(\theta_t)}{\nabla \cL_b(\theta_t)}\frac{\nabla \cL_b(\theta_t)}{\|\nabla \cL_b(\theta_t)\|^2} \label{eq:grad_b_perp}\\
    &=\nabla \cL_r(\theta_t) - \ip{\nabla \cL_r(\theta_t)}{\nabla \cL_b(\theta_t)}\frac{\nabla \cL_b(\theta_t)}{\|\nabla \cL_b(\theta_t)\|^2}.
\end{align}
Then, by defining $c_1 = \frac{1}{\|\nabla \cL_b(\theta_t)\|(1-\cos(\phi_t))}$ and $c_2 = \frac{1}{\|\nabla \cL_r(\theta_t)\|(1-\cos(\phi_t))}$, one can easily see that 
\begin{align*}
    c_1 \nabla_t \cL_{\| \nabla \cL_r^\perp} + c_2 \nabla_t \cL_{\| \nabla \cL_b^\perp} &= \nabla \cL_b(\theta_t) \left(c_1 - c_2 \frac{\ip{\nabla \cL_r(\theta_t)}{\nabla \cL_b(\theta_t)}}{\|\nabla \cL_b(\theta_t)\|^2}\right)\\
    &+\nabla \cL_r(\theta_t) \left(c_2 - c_1 \frac{\ip{\nabla \cL_r(\theta_t)}{\nabla \cL_b(\theta_t)}}{\|\nabla \cL_r(\theta_t)\|^2}\right) \\
    &= \frac{\nabla \cL_b(\theta_t)}{\|\nabla \cL_b (\theta_t)\|}+ \frac{\nabla \cL_r(\theta_t)}{\|\nabla \cL_r(\theta_t)\|} \\
    &=g_t^c.
\end{align*}
That is, $g_t^c$ can be expressed as $c_1\nabla_t \cL_{\| \nabla \cL_r^\perp} + c_2\nabla_t \cL_{\| \nabla \cL_b^\perp}$ for some $c_1,c_2\geq 0$. Therefore, $g_t^c$ is in $\rmG_t$.
\end{proof}

\begin{proof}[Proof of Corollary~\ref{cor:convergence_algorithm}]

We will show that $g_t^{\text{dual}}$ of each DCGD algorithm satisfies the conditions (i), (ii) of Theorem~\ref{thm:convergence_nonconvex}. Three algorithms are summarized in Algo.~\ref{alg:proj}, Algo.~\ref{alg:avg}, 
and Algo.~\ref{alg:center}. We note that a conflict threshold $\alpha$ is introduced as a stopping condition for DCGD algorithms, as they can reach a Pareto-stationary point characterized by $\phi_t= \pi$. That is, the algorithm stops when the parameter converges close to a Pareto-stationary point such that $|\cos(\phi_t)-\pi|<\alpha$. Here, we assume $\alpha\geq0$ is fixed. 

\paragraph{1. DCGD (Projection):} Note that it is trival to show that $g_t^{\text{dual}} = \nabla \cL(\theta_t)$, when $\ip{\nabla \cL (\theta_t)}{\nabla \cL_b (\theta_t)} \geq 0$, satisfies the conditions (i), (ii). Thus, we focus on the case when $\ip{\nabla \cL (\theta_t)}{\nabla \cL_b (\theta_t)} < 0$. 

First of all, we need to show the condition (i)
\begin{align*}
2 \ip{\nabla \cL(\theta_t)}{g_t^{\text{dual}}} - \|g_t^{\text{dual}}\|^2=  \|\nabla \cL(\theta_t)\|^2 -\|g_t^{\text{dual}}-\nabla \cL(\theta_t)\|^2 \geq 0,
\end{align*}
which is equivalent to that $\|\nabla \cL(\theta_t)\| \geq \|g_t^{\text{dual}}-\nabla \cL(\theta_t)\|$. Using \Eqref{eq:grad_r_perp}, one directly calculates that 
\begin{align*}
\|2\nabla_t \cL_{\|\cL_r^{\perp}} - \nabla\cL(\theta_t)\|^2 &= \left\| \nabla \cL(\theta_t)- \ip{\nabla \cL(\theta_t)}{\nabla \cL_r(\theta_t)}\frac{\nabla \cL_r(\theta_t)}{\|\nabla \cL_r(\theta_t)\|^2}\right\|^2 \\
&= \|\nabla \cL(\theta_t)\|^2.
\end{align*}
In the same manner, we have $\|2\nabla_t \cL_{\|\cL_b^{\perp}} - \nabla\cL(\theta_t)\|^2 = \|\nabla\cL(\theta_t)\|^2$.  Since $g_t^{\text{dual}}$ is chosen in $\rmG_t$, specifically $c_1=1, c_2=0$ or $c_1=0, c_2=0$, we can write 
\begin{align}
    \|g_t^{\text{dual}}-\nabla\cL(\theta_t)\| &= \left\| \left(c_1 \nabla \cL_{\|\cL_r^{\perp}}(\theta_t) + c_2 \nabla
    \cL_{\|\cL_b^{\perp}}(\theta_t) \right) - \nabla\cL(\theta_t)\right\|\nonumber \\
    &\leq \left \| \left(c_1 \nabla \cL_{\|\cL_r^{\perp}}(\theta_t) + c_2 \nabla
    \cL_{\|\cL_b^{\perp}}(\theta_t) \right) - \frac{c_1+c_2}{2} \nabla\cL(\theta_t) \right\| +\left\| \left(\frac{c_1+c_2}{2}-1\right)\nabla\cL(\theta_t)  \right\| \nonumber\\
    &\leq \left \| \frac{c_1}{2} \left(2\nabla \cL_{\|\cL_r^{\perp}}(\theta_t)- \nabla\cL(\theta_t)\right) \right \| +\left \| \frac{c_2}{2} \left(2\nabla \cL_{\|\cL_b^{\perp}}(\theta_t)- \nabla\cL(\theta_t)\right) \right \| + \left \| \left(\frac{c_1+c_2}{2}-1\right)\nabla\cL(\theta_t)  \right\| \nonumber\\
    &= \frac{c_1}{2} \| \nabla \cL (\theta_t) \| + \frac{c_2}{2} \| \nabla \cL (\theta_t) \| + \left | \frac{c_1+c_2}{2} -1 \right | \| \nabla \cL (\theta_t) \| \nonumber\\
    &= \left( \frac{c_1+c_2}{2} + \left| \frac{c_1+c_2}{2} - 1\right| \right) \| \nabla \cL (\theta_t) \| \label{eq:condition1_c1_c2}\\
    &= \| \nabla \cL (\theta_t) \|.\nonumber
\end{align}
where we have used $c_1+c_2=1$ for obtaining the last inequality. Therefore, the condition (i) is satisfied.

We further suppose that $\ip{\nabla \cL (\theta_t)}{\nabla \cL_b (\theta_t)} < 0$. Then, $g_t^{\text{dual}} = \nabla_t \cL_{\|\cL_b^{\perp}}$. Let $\phi_t$ be the angle between $\nabla \cL_b(\theta_t)$ and $\nabla \cL_r(\theta_t)$, and $\psi_t$ be the angle between $g_t^{\text{dual}}$ and $\nabla \cL(\theta_t)$. Note that $\phi_t\leq \pi - \alpha$ where $\alpha$ is conflict threshold. Otherwise, the algorithm stops when $\pi-\alpha<\phi_t\leq \pi$ (see Algo.~\ref{alg:proj}). Then, since $\psi_t = \phi_t - \frac{\pi}{2}$, we have
\begin{align*}
        \|g_t^{\text{dual}}\| &= \|\nabla_t \cL_{\|\cL_b^{\perp}} \| \\
        &= \|\nabla \cL (\theta_t)\| \cos(\psi_t) \\
        &= \|\nabla \cL (\theta_t)\| \cos\left(\phi_t-\frac{\pi}{2} \right) \\
        &\geq \|\nabla \cL (\theta_t)\| \cos\left(\frac{\pi}{2}-\alpha \right).
\end{align*}
Thus, by choosing $M=\cos\left(\frac{\pi}{2}-\alpha \right)$, the condition (ii) is satisfied. We repeat the same analysis for the case when $\ip{\nabla \cL (\theta_t)}{\nabla \cL_b (\theta_t)} < 0$.

\paragraph{2. DCGD (Average):} Similarly to DCGD (Projection), we focus on the case where DCGD (Average) specifies $c_1=c_2=\frac{1}{2}$, given by 
$$
g_t^{\text{dual}} = \frac{1}{2}\left(\nabla_t \cL_{\|\nabla \cL_r^{\perp}} + \nabla_t \cL_{\|\nabla \cL_b^{\perp}}\right),
$$
when $\nabla \cL(\theta_t) \notin \rmK_t^*$.  Eq~\ref{eq:condition1_c1_c2} with $c_1=c_2=1/2$ directly leads to 
$$
\|g_t^{\text{dual}} - \nabla \cL(\theta_t)\| \leq \|\nabla \cL(\theta_t)\|,
$$
implying that the condition (i) is satisfied.

Next, suppose  $\ip{\nabla \cL (\theta_t)}{\nabla \cL_b (\theta_t)} < 0$. Then, the condition (ii) is satisfied with $M= \frac{1}{2}\cos \left(\frac{\pi}{2}-\alpha\right)$ since 
\begin{align*}
         \|g_t^{\text{dual}}\| &= \frac{1}{2}\left\|\nabla_t \cL_{\|\cL_r^{\perp}} +   \nabla_t \cL_{\|\cL_b^{\perp}}\right\| \\         
         &\geq \frac{1}{2}\|\nabla_t \cL_{\|\cL_b^{\perp}} \| \\
         &= \frac{1}{2}\|\nabla \cL (\theta_t)\| \cos\left(\phi_t-\frac{\pi}{2} \right) \\
         &\geq \frac{1}{2}\cos\left(\frac{\pi}{2}-\alpha \right)\|\nabla \cL (\theta_t)\| .
\end{align*}

When $\ip{\nabla \cL (\theta_t)}{\nabla \cL_r (\theta_t)} < 0$, the condition (ii) is also satisfied with $M= \frac{1}{2}\cos \left(\frac{\pi}{2}-\alpha\right)$.

\paragraph{3. DCGD (Center):} the updated vecotr of DCGD (Center) is given by 
$$
g_t^{\text{dual}} = \frac{\ip{g_t^c}{\nabla \cL(\theta_t)}}{\|g_t^c\|^2}g_t^c
$$
where $g_t^c = \frac{\nabla \cL_b(\theta_t)}{\|\nabla \cL_b(\theta_t)\|}+\frac{\nabla \cL_r(\theta_t)}{\|\nabla \cL_r(\theta_t)\|}$. Since $g_t^{\text{dual}}$ is the angle bisector of $\nabla \cL_r(\theta_t)$ and $\nabla \cL_b(\theta_t)$ (see Proof of Proposition~\ref{prop: DCGD(center)}), $\psi_t$, the angle between $\nabla \cL(\theta_t)$ and $g_t^{\text{dual}}$, is less or equal to $\phi_t/2$, i.e., $\psi_t \leq \frac{\phi_t}{2}\leq \frac{\pi}{2}$.  From the fact that $g_t^{\text{dual}}$ is the projection of $\nabla \cL(\theta_t)$ onto $g_t^c$, we have 
\begin{align*}
\|\nabla \cL(\theta_t) - g_t^{\text{dual}}\| &= \|\nabla \cL(\theta_t)\| \sin(\psi_t) \\
&\leq \| \nabla \cL (\theta_t) \| \sin \left(\frac{\phi_t}{2}\right) \\
&\leq \| \nabla \cL (\theta_t) \| \sin\left(\frac{\pi - \alpha}{2}\right) \\
&\leq \|\nabla \cL (\theta_t)\|,
\end{align*}
and
\begin{align*}
    \|g_t^{\text{dual}}\| &= \|\nabla \cL(\theta_t)\| \cos(\psi_t) \\
    &\geq \| \nabla \cL (\theta_t) \| \cos\left(\frac{\phi_t}{2}\right) \\
    &\geq \| \nabla \cL (\theta_t) \| \cos\left(\frac{\pi - \alpha}{2}\right).
\end{align*}

Consequently, the conditions (i) and (ii) are satisfied for DCGD (Center). 

\end{proof}

\begin{remark}
Suppose that one employs a decaying scheme for the conflict threshold $\alpha_t$ such that $\alpha_t=\cO(t^{-\gamma})$ with where $0\leq\gamma<1$, for example, $\alpha_t=t^{-\gamma}$. In this case, the convergence rate of the DCGD algorithm to a stationary point becomes $\cO\left(\frac{1}{T^{1-\gamma}}\right)$, as $M$ in condition (ii) may depend on the conflict threshold $\alpha_t$. For all our experiments, we set $\alpha$ to be fixed. 
\end{remark}

\section{Unification of MTL algorithms within the DCGD framework}\label{app:uni}
In this section, we prove that several MTL algorithms can be understood as special cases of the DCGD framework  under the PINN's formulation.

\begin{proof}
\textbf{1. MGAD~\cite{desideri2012mgda}:} The updated gradient $g_t^{\text{MGDA}}$ of MGDA is defined by selecting the minimum-norm element from the convex combinations of $\nabla \cL_r(\theta_t)$ and $\nabla \cL_b(\theta_t)$ if there is gradient conflict \( \ip{\nabla \cL_r(\theta_t)}{\nabla \cL_b(\theta_t)} < 0 \):
\begin{align*}
g_t^{\text{MGDA}} &:= \argmin_{\alpha_1,\alpha_2\geq 0} \|u\|,\
&\text{s.t. } u =\alpha_1\nabla \cL_r(\theta_t) + \alpha_2 \nabla \cL_b(\theta_t),\
&\alpha_1+\alpha_2=1.
\end{align*}

One can easily show that \( \ip{g_t^{\text{MGDA}}}{g_t^{\text{MGDA}}} = \ip{g_t^{\text{MGDA}}}{\nabla \cL_r(\theta_t)} = \ip{g_t^{\text{MGDA}}}{\nabla \cL_b(\theta_t)} \geq 0 \).
Thus,
\[ g_t^{\text{MGDA}} \in \rmK_{t}^{*}. \]

\textbf{2. PCGrad~\cite{Yu2020PCGrad}: } PCGrad uses the same update direction with DCGD (Average) when \( \ip{\nabla \cL_r(\theta_t)}{\nabla \cL_b(\theta_t)} < 0 \),
\[ g_t^{\text{PCGrad}} = \frac{1}{2}\left(\nabla_t \cL_{\|\nabla \cL_r^{\perp}} + \nabla_t \cL_{\|\nabla \cL_b^{\perp}}\right) \in \rmK_t^*. \]
and takes \( \nabla \cL_r(\theta_t) + \nabla \cL_b(\theta_t)\) when \( \ip{\nabla \cL_r(\theta_t)}{\nabla \cL_b(\theta_t)} \geq 0 \). The latter case is also contained in $\rmK_t^*$. Therefore, $g_t^{\text{PCGrad}}\in \rmK_t^*$.


\textbf{3. Nash-MTL~\cite{navon2022nash}: } Nash-MTL considers a Nash bargaining solution to balance the loss gradients. The update gradient \(g_t^{\text{Nash-MTL}}\) is be defined by
\begin{align}
    &g_t^{\text{Nash-MTL}} := G_t v_t,  \\
    &\text{s.t. }  G_t^\top G_t v_t = v_t^{-1}. \label{eq:nash condition}
\end{align}

where \( G_t = [\nabla \cL_r(\theta_t), \nabla \cL_b(\theta_t)]\). We can find the solution \(v_t\) satisfying \Eqref{eq:nash condition} as following. By letting \( v_t = \begin{bmatrix}
        v_1   \\
        v_2
    \end{bmatrix}  \),  we have
\begin{align*}
    \begin{bmatrix}
        \Vert\nabla \cL_r(\theta_t)  \Vert^2 & \ip{\nabla \cL_r(\theta_t)}{\nabla \cL_b(\theta_t)} \\
        \ip{\nabla \cL_r(\theta_t)}{\nabla \cL_b(\theta_t)} & \Vert\nabla \cL_b(\theta_t)  \Vert^2
    \end{bmatrix} \begin{bmatrix}
        v_1   \\
        v_2
    \end{bmatrix} =
    \begin{bmatrix}
        \frac{1}{v_1}   \\
        \frac{1}{v_2}
    \end{bmatrix},
\end{align*}
which is equivalent to
\begin{align}\label{eq:nash2}
    \begin{cases}
    \Vert\nabla \cL_r(\theta_t)  \Vert^2 v_1^2 + \ip{\nabla \cL_r(\theta_t)}{\nabla \cL_b(\theta_t)} v_1v_2 = 1 \\
    \Vert\nabla \cL_b(\theta_t)  \Vert^2 v_2^2 + \ip{\nabla \cL_r(\theta_t)}{\nabla \cL_b(\theta_t)} v_1v_2 = 1
    \end{cases}.
\end{align}

Therefore, we can derive \(v_2 = \frac{\Vert\nabla \cL_r(\theta_t)\Vert}{\Vert\nabla \cL_b(\theta_t)  \Vert}v_1 \). Substituting \( v_2 = \frac{\Vert\nabla \cL_r(\theta_t)\Vert}{\Vert\nabla \cL_b(\theta_t)  \Vert}v_1  \) back into the first equation of \Eqref{eq:nash2} leads to

\begin{align*}
     &\Vert\nabla \cL_r(\theta_t)  \Vert^2 v_1^2 + \Vert\nabla \cL_r(\theta_t)\Vert^2 \cos(\phi_t)  v_1^2 = 1 \\
     &\Leftrightarrow v_1 = \sqrt{\frac{1}{1+\cos{\phi_t}}}\frac{1}{\Vert \nabla \cL_r(\theta_t)   \Vert} \\
     &\Leftrightarrow v_2 = \frac{\Vert\nabla \cL_r(\theta_t)\Vert}{\Vert\nabla \cL_b(\theta_t)  \Vert}v_1  = \sqrt{\frac{1}{1+\cos{\phi_t}}}\frac{1}{\Vert \nabla \cL_b(\theta_t) \Vert} \\
     &\Leftrightarrow G_tv_t = \sqrt{\frac{1}{1+\cos{\phi_t}}} \left( \frac{\nabla \cL_r(\theta_t)}{\|\nabla \cL_r(\theta_t)\|} + \frac{\nabla \cL_b(\theta_t)}{\|\nabla \cL_b(\theta_t)\|} \right)
\end{align*}

where \(\phi_t \) is the angle between \( \nabla \cL_r(\theta_t)\) and \(\nabla \cL_b(\theta_t)\). Thus, the update gradient \(g_t^{\text{Nash-MTL}} \) has same direction with DCGD (Center). That is, $g_t^{\text{Nash-MTL}}\in \rmK_t^*$. 

\end{proof}

\section{Experimental details}\label{app:details}

\subsection{Software and hardware environments}\label{app:env} We conduct all experiments with \textsc{Python} 3.10.9 and \textsc{Pytorch} 1.13.1, CUDA 11.6.2, NVIDIA Driver 510.10 on Ubuntu 22.04.1 LTS server which equipped with AMD Ryzen Threadripper PRO 5975WX, NVIDIA A100 80GB and NVIDIA RTX A6000. 

\subsection{Toy example}\label{app:toyexample}
We slightly modify the toy example in \cite{Liu2021CAGrad} to show our proposed method can expand the region of initial points that converge to the Pareto set. Consider the following loss functions with $\theta = (\theta_1, \theta_2) \in \R^2$:
\begin{align*}
    &L_0(\theta) = L_1(\theta) + L_2(\theta) \text{ where } \\
    &L_1(\theta) = 2c_1(\theta)f_1(\theta) + c_2(\theta)g_1(\theta) \text{ and } L_2(\theta) = c_1(\theta)f_2(\theta) + c_2(\theta)g_2(\theta),\\
    &f_1(\theta) = \log(\max(0.5(-\theta_1 - 7) - \tanh(-\theta_2), 0.000005)) + 6,\\
    &f_2(\theta) = \log(\max(0.5(-\theta_1 + 3) - \tanh(-\theta_2) + 2, 0.000005)) + 6,\\
    &g_1(\theta) = ((-\theta_1 + 7)^2 + 0.1 \cdot (-\theta_2 - 8)^2)/10 - 20,\\
    &g_2(\theta) = ((-\theta_1 - 7)^2 + 0.1 \cdot (-\theta_2 - 8)^2)/10 - 20,\\
    &c_1(\theta) = \max(\tanh(0.5 \cdot \theta_2), 0),\\
    &c_2(\theta) = \max(\tanh(-0.5 \cdot \theta_2), 0).
\end{align*}


\begin{figure}[htb!]
    \centering 
    \begin{subfigure}[b]{0.40\textwidth}
        \includegraphics[width=\textwidth]{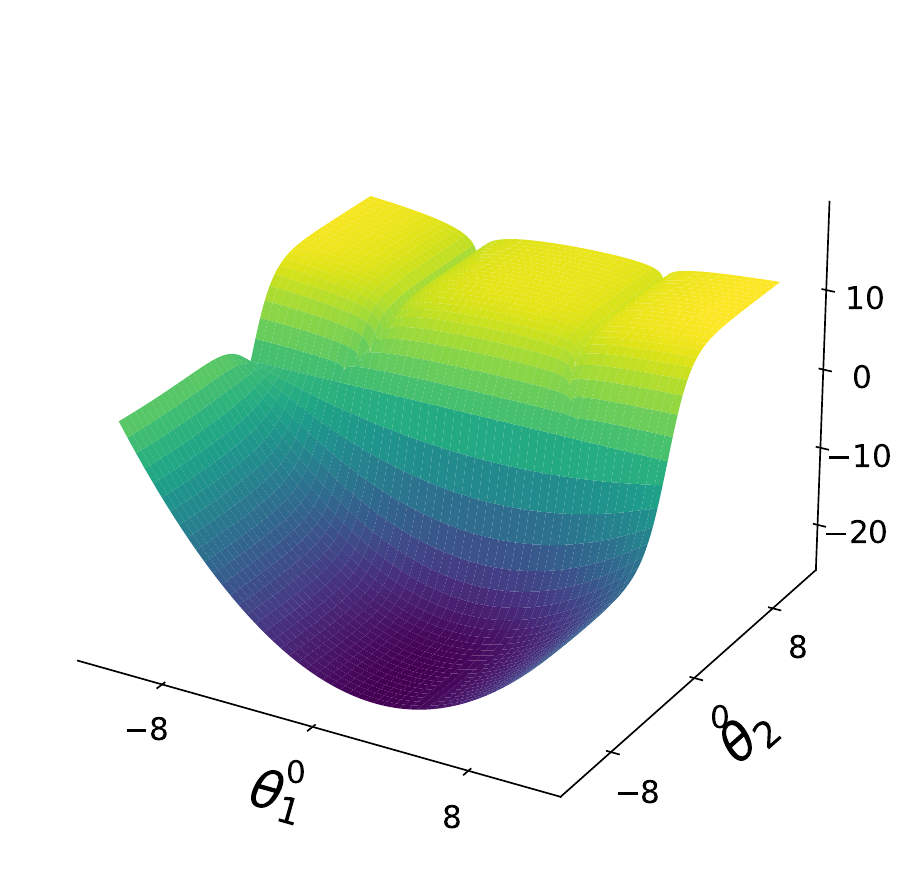}
        \caption{Loss landscape}
    \end{subfigure}
    \begin{subfigure}[b]{0.40\textwidth}
        \includegraphics[width=\textwidth]{figures/Toyexample/dcgd_convegence_region.pdf}
        \caption{Contour map}
        \label{Contour map}
    \end{subfigure}
    \caption{The loss landscape and contour map of the toy example. }
    \label{fig:toy_loss_landscape}
\end{figure}

The landscape and contour map of above loss function are shown in Figure~\ref{fig:toy_loss_landscape}. The Pareto set is highlighted in gray in Figure~\ref{Contour map}. We solve the above problem using Adam, DCGD (Projection), DCGD (Average), DCGD (Center) for 100,000 epochs with different initial points. The initial points are selected as $1,600$ uniform grid points within $[-10, 10]\times [-10,10]$. Then, we mark with a red dot the point at which the optimizer fails to converge to the Pareto set. 

\subsection{Details for Figure~\ref{fig:conf_dom_grads}, Figure~\ref{fig:adverse_training}, and Figure~\ref{fig:cos_dist}}

In this experiment, we use the $7$-layer fully connected neural network with 20 neurons per layer and a hyperbolic tangent activation function. We train PINN models using SGD with the learning rate of $0.01$ for $10,000$ epochs. In addition, 100 data points are sampled in boundaries and 10,000 points in the domain. 

\subsection{Details for Section~\ref{subsec:benchmark}}\label{app:benchmark}

\paragraph{Benchmark equations}We consider Helmholtz equation, viscous Burgers’ equation, and Klein-Gordon equation as the benchmark equations. 

The Helmholtz equation is described by 
\begin{align*}
    &\Delta u(x,y) +k^2u(x,y) = f(x,y),  \quad (x,y) \in \Omega, \\ 
    &u(x,y) = 0, \quad (x,y) \in \partial \Omega, \\
    &\Omega = [-1,1] \times [-1,1].
\end{align*}
The solution is given by 
$u^*(x,y) = \sin(a_1\pi x)\sin(a_2 \pi y)$
where
\begin{align*}
f(x,y) =(k^2-a_1^2\pi^2-a_2^2\pi^2)\sin(a_1\pi x)\sin(a_2 \pi y)
\end{align*}
In our experiment, we choose parameters: $k=1, a_1=1, a_2=4$ as in \cite{wang2021understanding}.

The Viscous Burgers' equation is given by
\begin{align*}
    &u_t(t,x) +uu_x(t,x) - \nu u_{xx}(t,x) = 0, (x,t) \in [0,1] \times \Omega,\\ 
    &u(0,x) = -\sin(\pi x),   x \in \Omega,  \\
    &u(t,-1) = u(t,1) = 0,  t \in [0,1], \\
    &\Omega = [-1,1]
\end{align*}
where $\nu = \frac{0.01}{\pi}$. 

The Klein-Gordon equation is 
\begin{align*}
&\Delta u(t,x) + \gamma u^k(t,x) = f(t,x), (t,x) \in [0,T] \times \Omega, \\ 
&u(0,x) = g_1(x), x \in \Omega\\
&u_t(0,x) = g_2(x), x \in \Omega \\
&u(t,x) = h(t,x), (t,x) \in [0,T] \times \partial \Omega \\
& \Omega = [0,1]
\end{align*}
We set parameters to $k=3, \gamma = 1, T=1$ and the initial conditions, $g_1(x) = x, g_2(x) = 0$ for all $x \in \Omega$ following \cite{wang2021understanding}. Then we can use the solution $u^*(t,x) = x\cos(5\pi t) + (tx)^3$ where $f(t,x)$ is derived by given equation .

We employ a 3-layer fully connected neural network with 50 neurons per layer and use the hyperbolic tangent activation function for all experiments in Section~\ref{subsec:benchmark}. At each iteration, 128 points are randomly sampled in boundaries and 10 times more points in the domain as the collocation points. We just randomly sample the points in the boundaries if there exists an analytic solution, otherwise the points were resampled from a pre-generated set for each iteration. More specifically, for the case of Viscous Burger's equation, there is pre-determined 456 boundary points and we randomly sample in this set of points. We train PINNs for 50,000 epochs with Glorot normal initialization \cite{glorot2010understanding} using DCGD algorithms, Adam~\cite{kingma2014adam}, LRA \cite{wang2021understanding}, NTK \cite{wang2022NTK}, PCGrad \cite{Yu2020PCGrad}, MultiAdam \cite{pmlr-v202-yao23c}, and DPM \cite{kim2021dpm}. 

We search for the initial learning rate among $\lambda = \{10^{-3}, 10^{-4}, 10^{-5}\}$ and use a exponential decay scheduler with a decay rate of $0.9$ and a decay step = $1,000$. For Adam, we use the default parameters: $\beta_1=0.9$, $\beta_2=0.999$, $\epsilon=10^{-8}$ as in \cite{kingma2014adam}. For LRA, we set $\alpha=0.1$, which is the best hyperparameter reported in \cite{wang2021understanding}. For MultiAdam, we use $\beta_1,\beta_2 = 0.99$ as recommended in \cite{pmlr-v202-yao23c}. For DPM, we test $\delta = \{10^{-1}, 10^{-2}, 10^{-3}\}, \epsilon = \{10^{-1}, 10^{-2}, 10^{-3}\}, w = \{1, 1.01, 1.001\}$.

To compute the effectiveness of various optimization algorithms, we evaluate the accuracy of the PINN solutions $u(\cdot;\theta)$ using the relative $L^2$-error defined as:
$$ \text{Relative } L^2 \text{ error } = \frac{\sqrt{\sum^{N}_{i=1} |u(\vx_i;\theta)-u(\vx_i)|^2}}{\sqrt{\sum^{N}_{i=1} |u(\vx_i)|^2}} $$
where $u(\cdot)$ is the true solution and $\{\vx_i\}_{i=1}^N$ is the set of test samples. Unless the equation has an analytic solution, we use the numerical reference solution for $u(\vx)$, which solved by finite element method~\cite{raissi2019physics}.

In Table~\ref{tab:MaxMinBenchmarks}, we report the best and worst-case relative $L^2$ errors of each method across 10 independent trials. 

\begin{table*}[htb!]
\begin{center} 
\begin{tabular}{lcccccc}
\toprule
Equation           & \multicolumn{2}{c}{Helmholtz} & \multicolumn{2}{c}{Burgers'} & \multicolumn{2}{c}{Klein-Gordon} \\ \midrule
Optimizer          & Max        & Min        & Max        & Min        & Max        & Min        \\ \midrule
Adam               & 0.1053      & 0.0315     & 0.1413      & 0.0413     & 0.1586    & 0.0376     \\
LRA                & \underline{0.0108} & \underline{0.0032} & 0.0391     & \textbf{0.0080} & \underline{0.0166} & \textbf{0.0037} \\
NTK                & 0.0532     & 0.0225     & 0.0358     & 0.0148     & 0.0581     & 0.0078     \\
PCGrad             & 0.0170     & 0.0070     & \underline{0.0322} & 0.0091     & 0.0399     & 0.0156     \\
MGDA               & 1.0000      & 0.3441      & 1.0617      & 0.9037      & 1.0245     & 0.2168      \\
CAGrad             & 0.1550      & 0.0330     & 0.0485     & 0.0235     & 0.2845      & 0.0872    \\
Aligned-MTL        & 0.7784      & 0.5062      & 0.0640     & 0.0152     & 0.9133     & 0.2922     \\
MultiAdam          & 0.0249     & 0.0149     & 0.1506      & 0.0537     & 0.0273     & 0.0160     \\ 
DCGD (Center)      & \textbf{0.0038} & \textbf{0.0019} & \textbf{0.0016} & \underline{0.0096} & \textbf{0.0112} & \underline{0.0042} \\ \midrule\midrule
DCGD (Center) + LRA & \textcolor{red}{0.0036} & \textcolor{red}{0.0013} & \textcolor{red}{0.0150} & \textcolor{red}{0.0056} & \textcolor{red}{0.0068} & 0.0036 \\
DCGD (Center) + NTK & 0.0157     & 0.0033     & 0.0175     & 0.0065     & 0.0079     & \textcolor{red}{0.0035} \\    
\bottomrule
\end{tabular}
\caption{\label{tab:MaxMinBenchmarks}
Maximum and minimum of relative $L^2$ errors in 10 independent trials for each algorithm. 
}
\end{center}
\end{table*}

We plot the exact solution, PINN solution, and its error for each benchmark equation in Figures~\ref{fig:Helmholtz_error}, ~\ref{fig:Klein_Gordon_error}, and ~\ref{fig:Burgers_error}

\begin{figure}[htb!]
    \centering 
    \begin{subfigure}[b]{0.3\textwidth}
        \includegraphics[width=\textwidth]{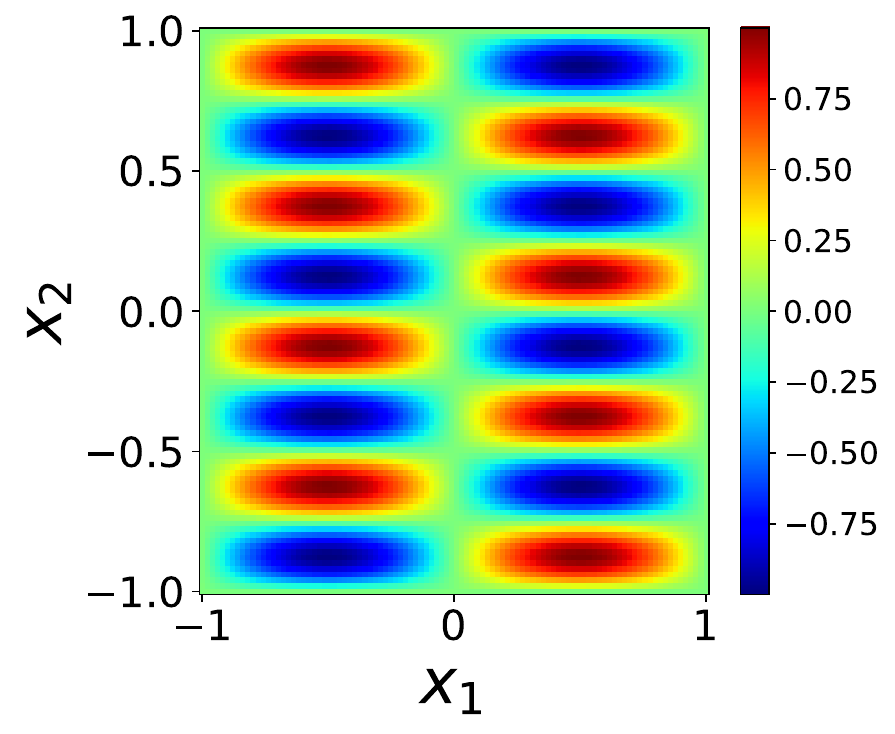}
        \caption{Exact solution}
    \end{subfigure}
    \begin{subfigure}[b]{0.3\textwidth}
        \includegraphics[width=\textwidth]{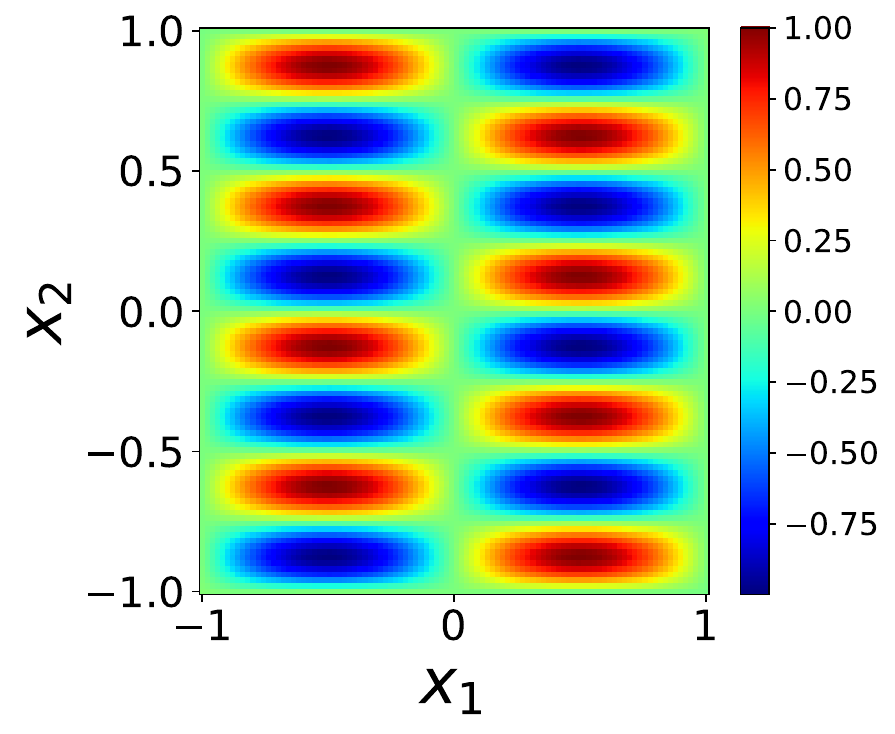}
        \caption{Prediction}
    \end{subfigure} 
    \begin{subfigure}[b]{0.3\textwidth}
        \includegraphics[width=\textwidth]{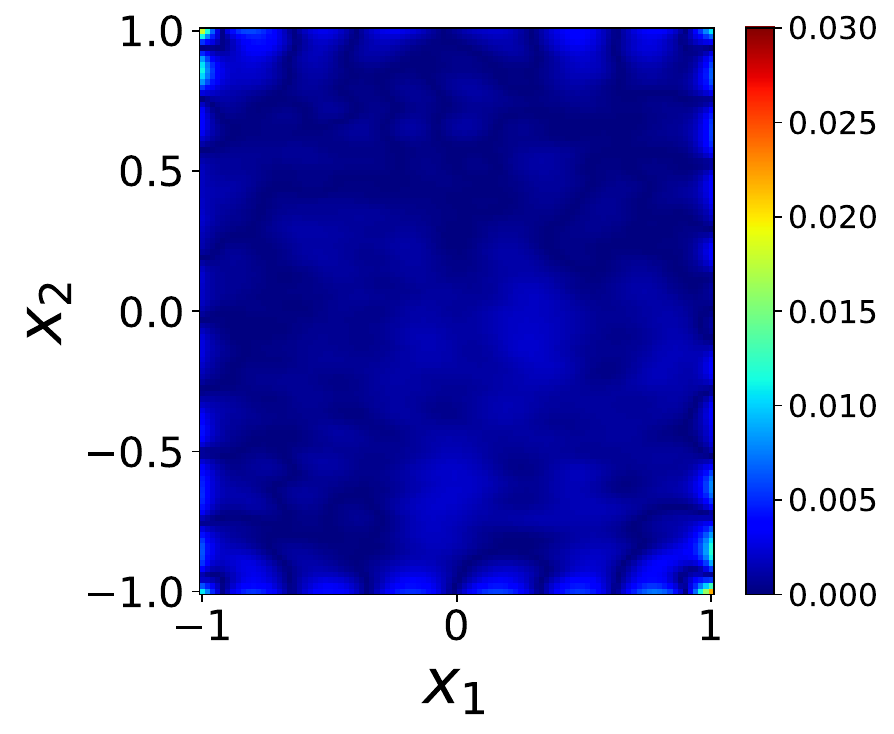}
        \caption{Absolute error}
    \end{subfigure}
    \caption{Helmholtz equation: approximated solution versus the reference solution.}
    \label{fig:Burgers_error}
\end{figure}

\begin{figure}[htb!]
    \centering 
    \begin{subfigure}[b]{0.3\textwidth}
        \includegraphics[width=\textwidth]{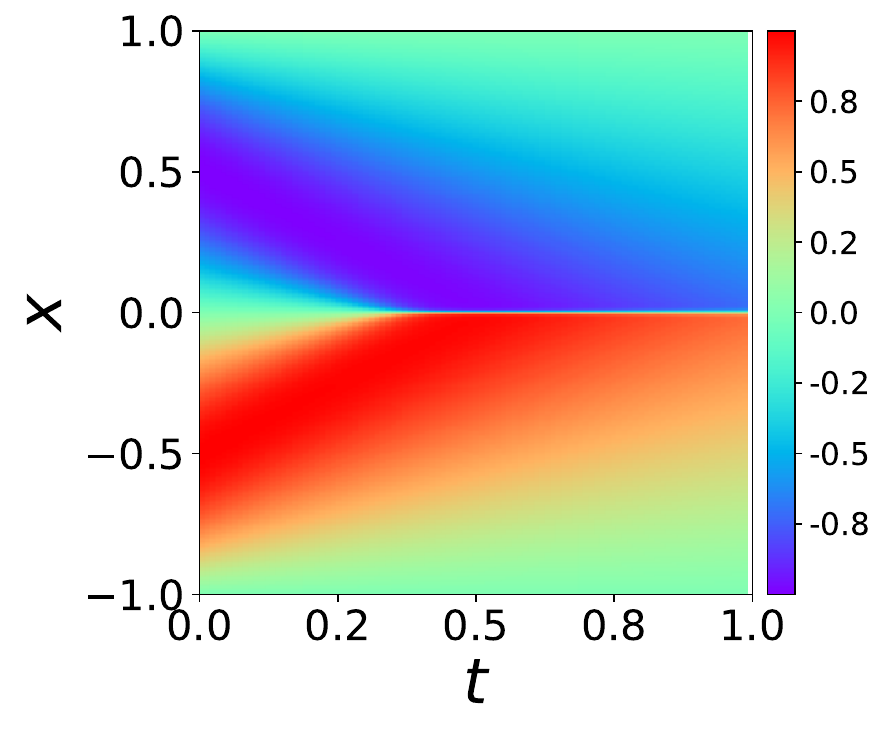}
        \caption{Exact solution}
    \end{subfigure}
    \begin{subfigure}[b]{0.3\textwidth}
        \includegraphics[width=\textwidth]{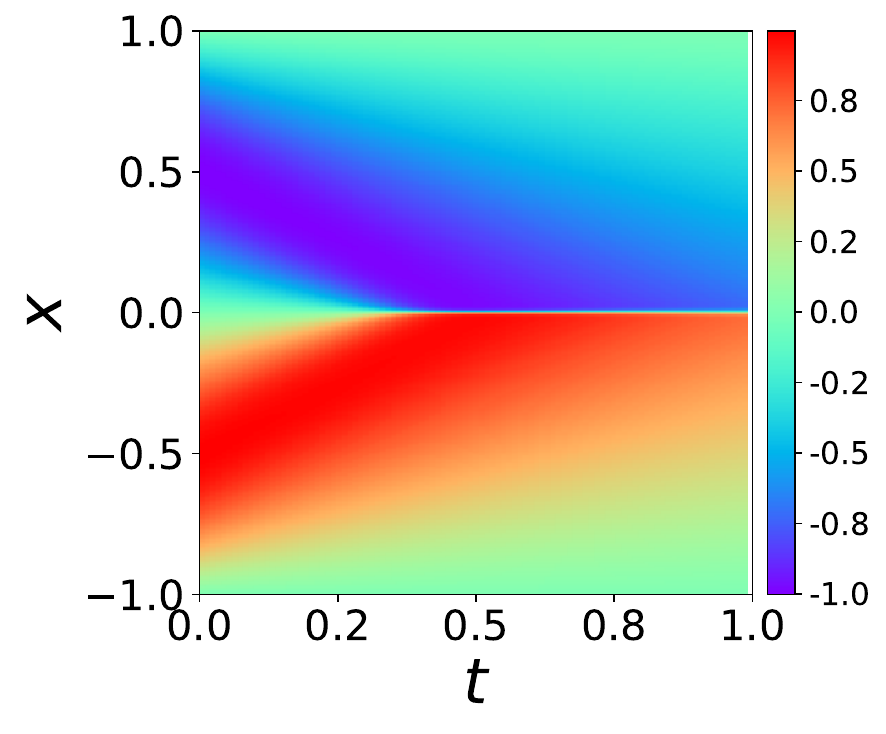}
        \caption{Prediction}
    \end{subfigure} 
    \begin{subfigure}[b]{0.3\textwidth}
        \includegraphics[width=\textwidth]{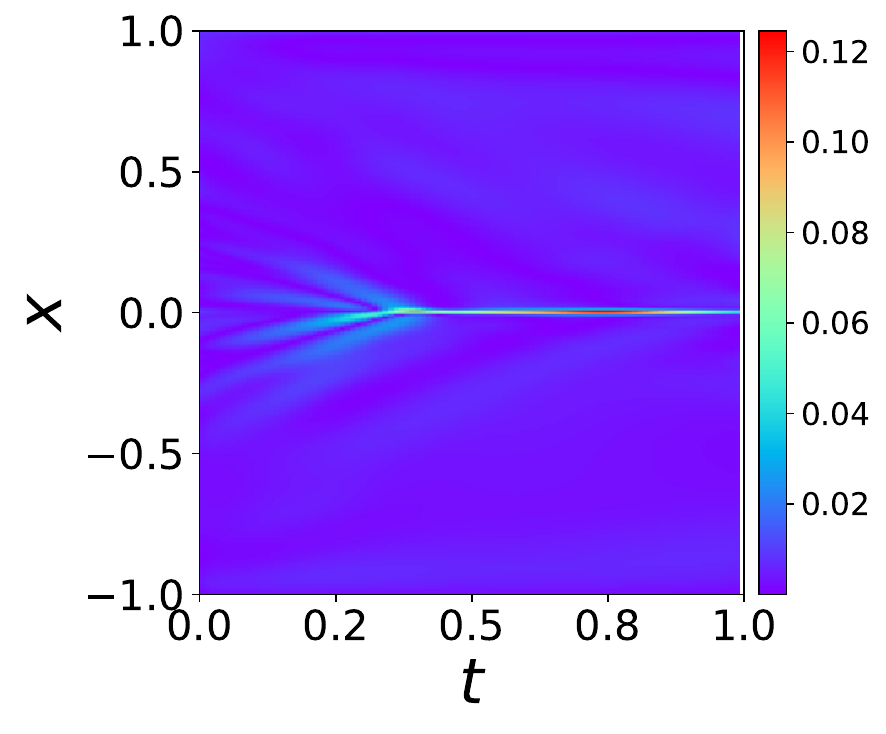}
        \caption{Absolute error}
    \end{subfigure}
    \caption{Burgers' equation: approximated solution versus the reference solution.}
    \label{fig:Helmholtz_error}
\end{figure}

\begin{figure}[htb!]
    \centering 
    \begin{subfigure}[b]{0.3\textwidth}
        \includegraphics[width=\textwidth]{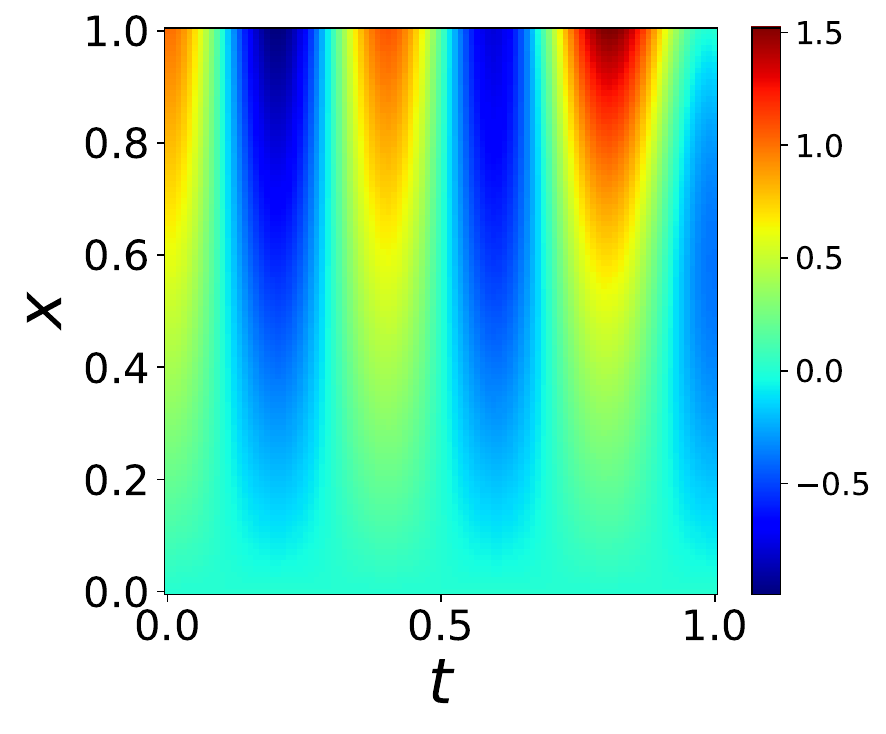}
        \caption{Exact solution}
    \end{subfigure}
    \begin{subfigure}[b]{0.3\textwidth}
        \includegraphics[width=\textwidth]{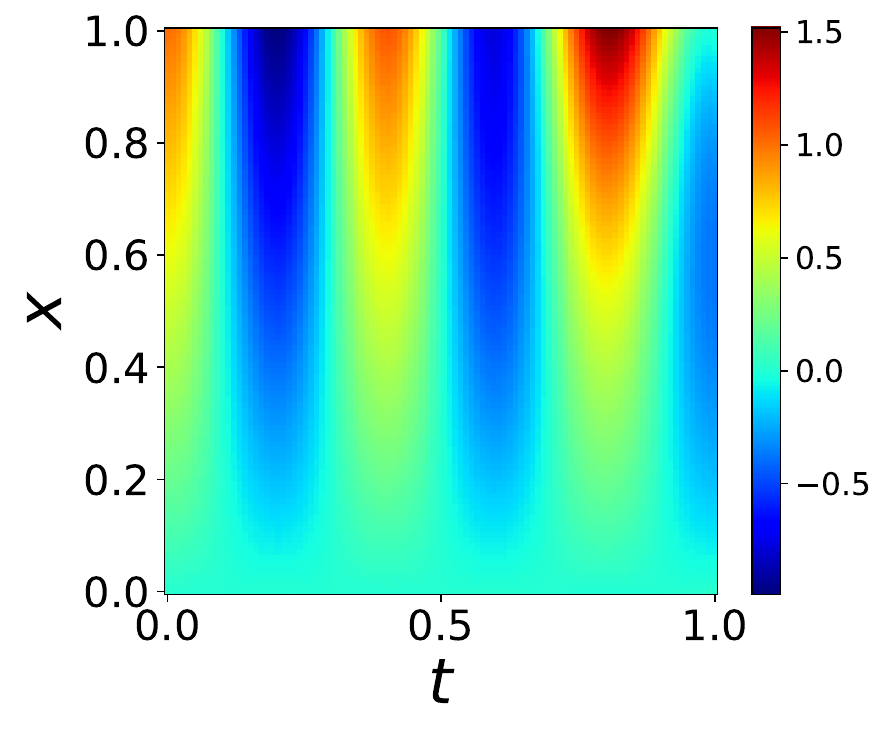}
        \caption{Prediction}
    \end{subfigure} 
    \begin{subfigure}[b]{0.3\textwidth}
        \includegraphics[width=\textwidth]{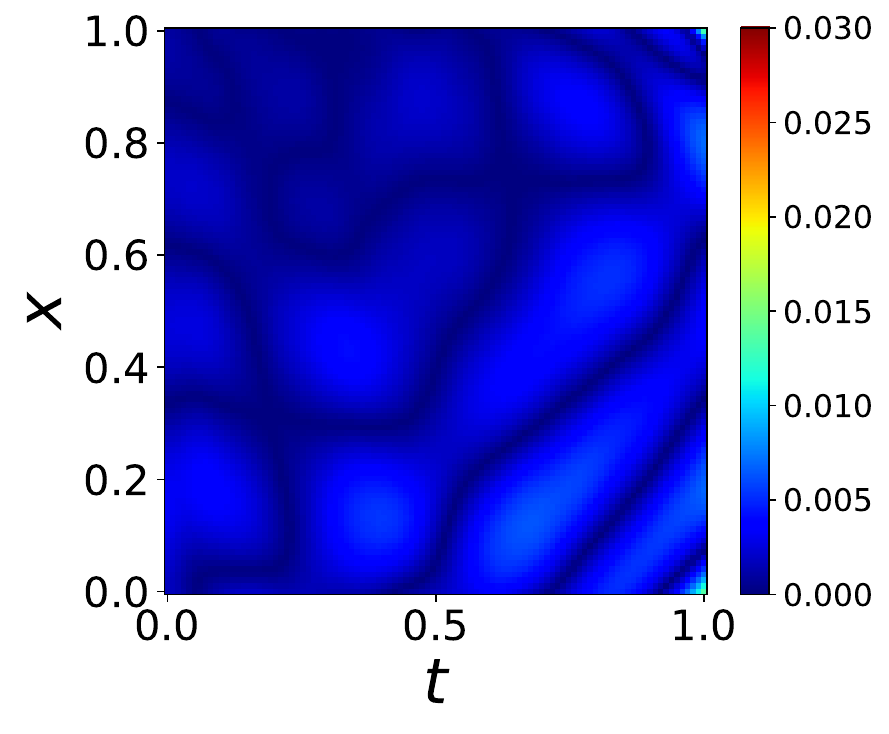}
        \caption{Absolute error}
    \end{subfigure}
    \caption{Klein-Gordon equation: approximated solution versus the reference solution.}
    \label{fig:Klein_Gordon_error}
\end{figure}

\paragraph{High-dimensional equations}

We consider the following $3$-dimensional Helmholtz equation
\begin{align*}
    &\Delta u(x,y,z) +k^2u(x,y,z) = f(x,y,z),  \quad &(x,y,z) \in \Omega, \\ 
    &u(x,y,z) = 0, \quad &(x,y,z) \in \partial \Omega, \\
    &\Omega = [-1,1]^3.
\end{align*}
The solution is given by 
$u^*(x,y) = \sin(a_1\pi x)\sin(a_2 \pi y)\sin(a_3 \pi z)$
where
\begin{align*}
f(x,y,z) =(k^2-a_1^2\pi^2-a_2^2\pi^2-a_3^2\pi^2)\sin(a_1\pi x)\sin(a_2 \pi y)\sin(a_3 \pi z)
\end{align*}
with $k=1, a_1=4, a_2=4, a_3=3$. 

We employ a 5-layer fully connected neural network with 128 neurons per layer and use the hyperbolic tangent activation function. At each iteration, 128 points are randomly sampled in boundaries and 500 times more points in the domain as the collocation points. We train PINNs for 30,000 epochs with Glorot normal initialization.

We use initial learning rate among $\lambda = 10^{-3}$ and use a exponential decay scheduler with a decay rate of $0.9$ and a decay step = $1,000$.

For $5$-dimensional Heat equation, we follow the experiment setting in \citet{hao2023pinnacle}. The PDE can be expressed as following: 

\begin{align*}
&u_t = k\Delta u + f(x,t), & x \in \Omega \times [0, 1] \\
&\mathbf{n} \cdot \nabla u = g(x,t), & x \in \partial\Omega \times [0, 1] \\
&u(x,0) = g(x,0), & x \in \Omega
\end{align*}
where the geometric domain $\Omega = \{x: ||x|| \leq 1\}$ and 

\begin{align*}
    f(x,t) &:= -\frac{1}{d} ||x||^2 \exp\left( -\frac{1}{2} ||x||^2 + t \right) \\
    g(x,t) &:= \exp\left( -\frac{1}{2} ||x||^2 + t \right)
\end{align*}

\subsection{Details for Section~\ref{subsec:complex PDEs}}\label{app:complex PDEs}

\paragraph{Double pendulum problem}\label{app:pendulum}
Consider the double pendulum which have two point mass pendulums with masses $m_1, m_2$, two rod with length $l_1, l_2$. Let $\theta_1, \theta_2$ is the angle that the pendulums each make with the vertical and $\Delta \theta = \theta_1 - \theta_2$. Set the gravitational acceleration $g = 9.81$.  (see \hyperref[fig:doublependulum]{Figure~\ref*{fig:doublependulum}}) 

\begin{figure}[htb!]
    \centering 
    \begin{tabular}{cc}
        \includegraphics[width=0.4\textwidth ]{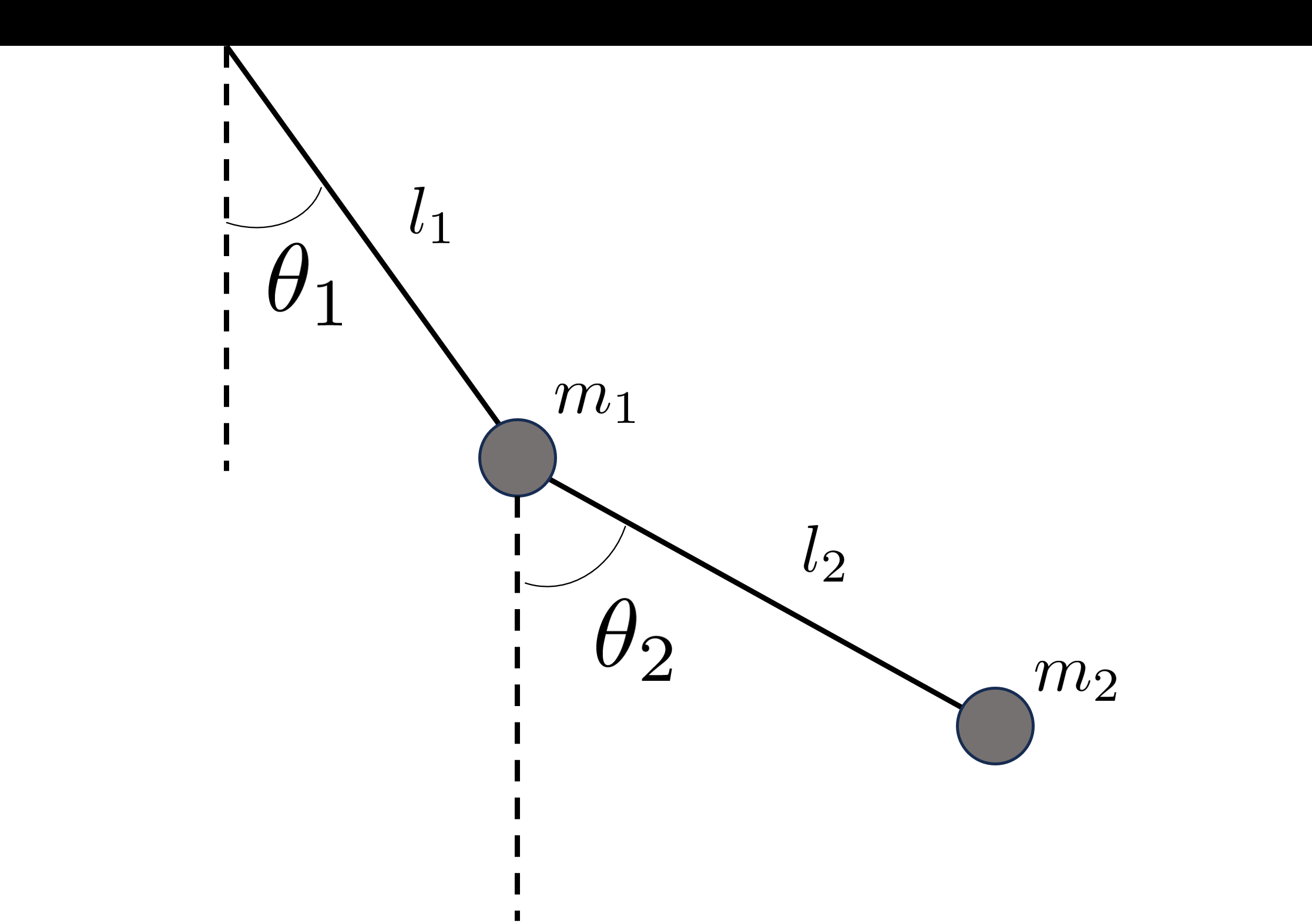}
    \end{tabular}
\caption{Simple double pendulum example}\label{fig:doublependulum}
\end{figure}

Then the dynamics of double pendulum can be described by following nonlinear differential equation system with $y=[\theta_1, \theta_2]^T$:
\begin{equation}\label{eq:double_pendulum}
y'' = \begin{bmatrix}
f_1(y, y') \\
f_2(y, y') \\
\end{bmatrix} \; \text{subject to} \;
y(t_0)= \begin{bmatrix}
\theta_1(t_0)\\
\theta_2(t_0)\\
\end{bmatrix},
y'(t_0)= \begin{bmatrix}
\omega_1(t_0)\\
\omega_2 (t_0)\\
\end{bmatrix},
\end{equation}
where
\begin{align*}
&\omega_1 = \dot{\theta}_1, \\
&\omega_2 = \dot{\theta}_2, \\
&f_1(y,y'') = \frac{m_2 l_1 \omega_1^2 \sin(2\Delta\theta) + 2m_2 l_2 \omega_2^2 \sin \Delta\theta + 2g m_2 \cos \theta_2 \sin \Delta\theta + 2g m_1 \sin \theta_1}{2l_1(m_1 + m_2 \sin^2 \Delta\theta)}, \\
&f_2(y,y'') = \frac{m_2 l_2 \omega_2^2 \sin(2\Delta\theta) + 2(m_1 + m_2) l_1 \omega_1^2 \sin \Delta\theta + 2g(m_1 + m_2) \cos \theta_1 \sin \Delta\theta}{2l_2(m_1 + m_2 \sin^2 \Delta\theta)}.
\end{align*}

In this experiment, the initial conditions are $\theta_1(t_0) = \theta_2(t_0) = \theta_0 = 150^\circ $, $\omega_1(t_0) = \omega_2(t_0) = 0$. 

We replicate the experiment setup done in \cite{steger2022how}. We use a six-layer feed-forward network of 30 neurons on each layer with the swish activation function. We train this model with Adam optimizer with the default hyperparameters for 20,000 epochs. We also train the same model using DCGD (Center). Figure~\ref{fig:pendulum_loss_curve} shows the training curves of ADAM and DCGD. 

\begin{figure}[htb!]
    \centering 
    \begin{subfigure}[b]{0.45\textwidth}
        \includegraphics[width=\textwidth]{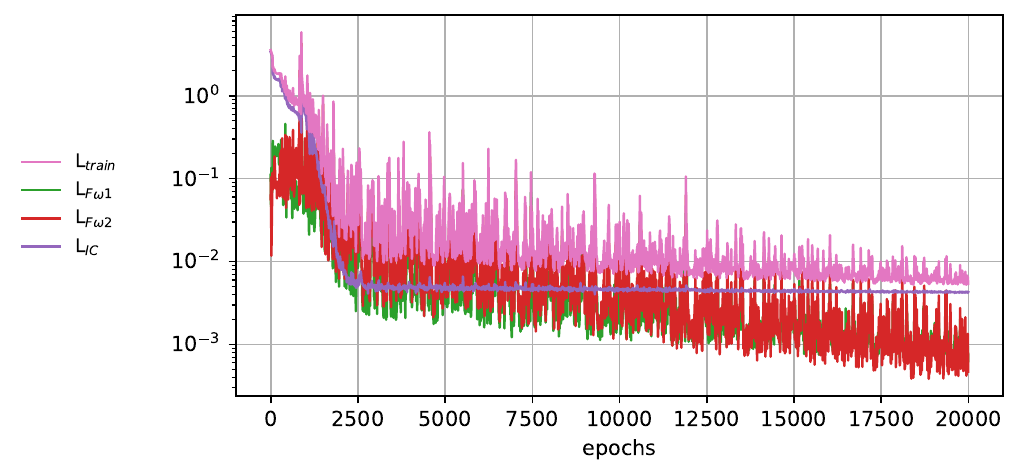}
        \caption{ADAM}
        \label{fig:pendulum_loss_curve_adam}
    \end{subfigure}
        \begin{subfigure}[b]{0.45\textwidth}
        \includegraphics[width=\textwidth]{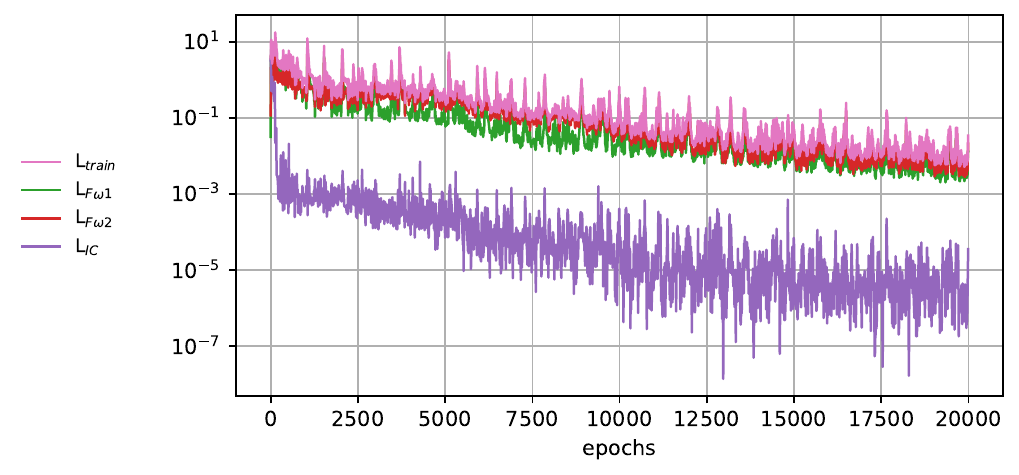}
        \caption{DCGD (Center)}
        \label{fig:pendulum_loss_curve_dcgd}
    \end{subfigure}
\caption{Loss trajectory of each method in the double pendulum problem.}\label{fig:pendulum_loss_curve}
\end{figure}

\paragraph{Convection equation}
We train PINNsFormer in \citet{zhao2023pinnsformer} to solve convection equation which can expressed as following: 
\begin{align*}
     &u_t + \beta u_x = 0, \quad \forall x \in [0, 2\pi], t \in [0, 1] \\
     &u(x,0) = \sin(x), \\
     &u(0,t) = u(2\pi,t)
\end{align*}
Where $\beta=50$. we follow the default setting of \cite{zhao2023pinnsformer} and train the model by 500 epochs.

\paragraph{chotic Kuramoto-Sivashinsky equation}
We use causal training in \citet{wang2022respecting} to solve chaotic Kuramoto-Sivashinsky equation. We use 5 layers modifed-MLP with 64 neurons per layer. and train this model 50,000 epochs for each tolerance.

\begin{align*}
     &u_t + \alpha uu_x + \beta u_{xx} + \gamma u_{xxxx}= 0, \quad \forall x \in [0, 2\pi], t \in [0, 0.5] \\
     &u(0,x) = \cos(x)(1+\sin(x))
\end{align*}
where $\alpha=100/16, \beta=100/16^2, \gamma=100/16^4$.

\begin{figure}[htb!]
    \centering 
    \begin{subfigure}[b]{0.3\textwidth}
        \includegraphics[width=\textwidth]{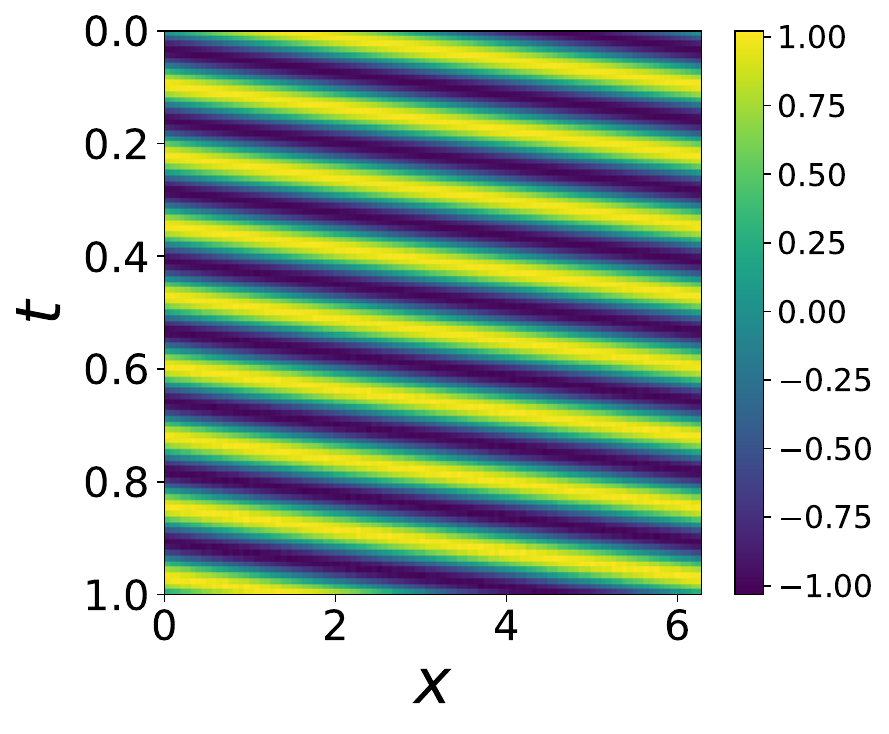}
        \caption{Exact solution}
    \end{subfigure}
    \begin{subfigure}[b]{0.3\textwidth}
        \includegraphics[width=\textwidth]{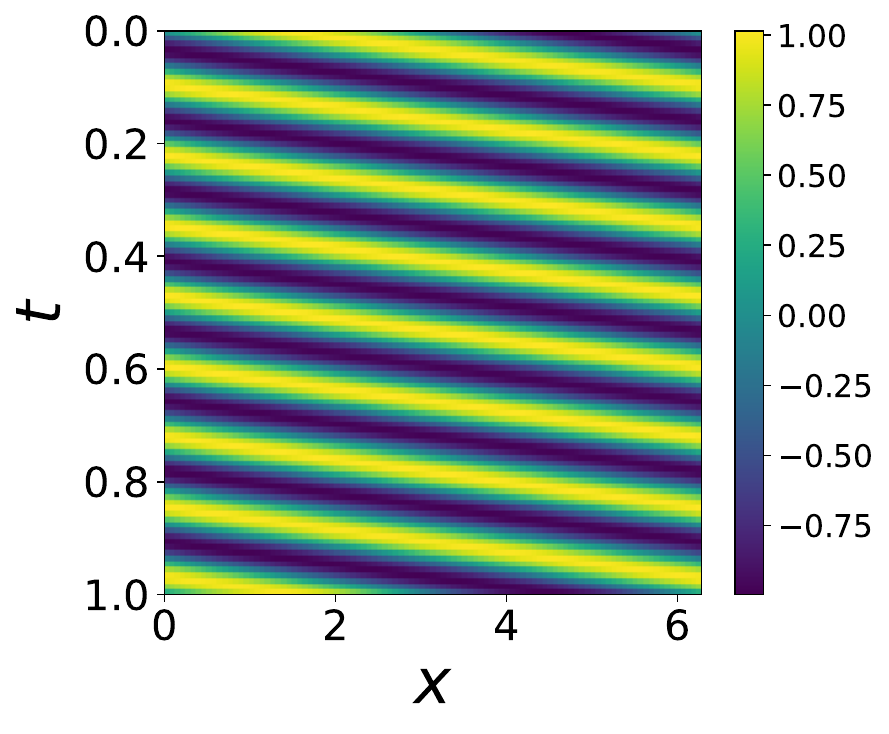}
        \caption{Prediction}
    \end{subfigure} 
    \begin{subfigure}[b]{0.3\textwidth}
        \includegraphics[width=\textwidth]{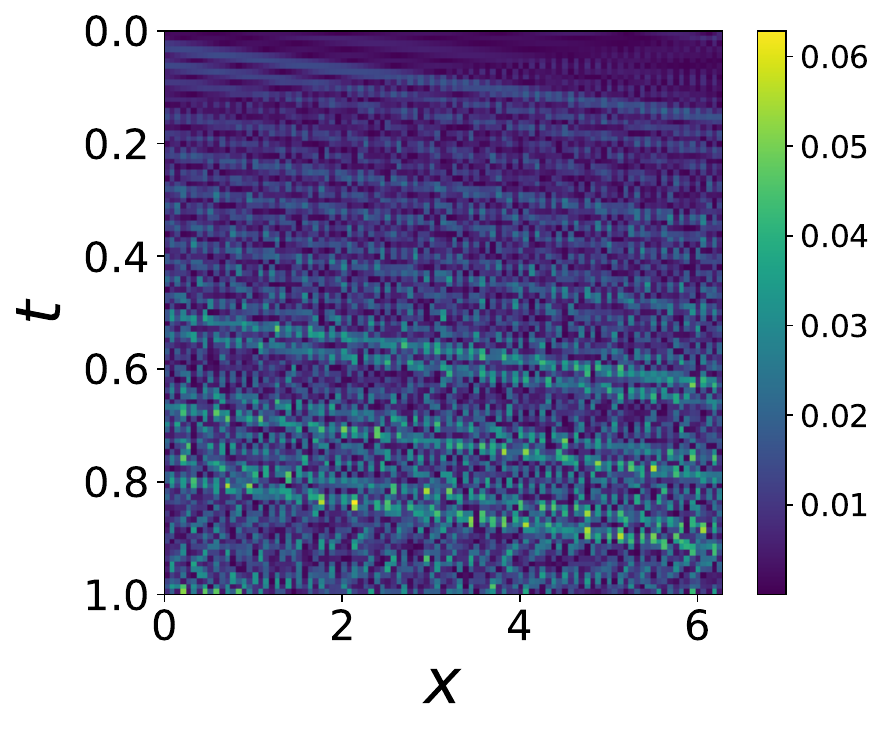}
        \caption{Absolute error}
    \end{subfigure}
    \caption{Convection equation: approximated solution versus the reference solution.}
    \label{fig:Convection_error}
\end{figure}

\paragraph{Auxiliary-PINN: Nonlinear integro-differential equation}\label{app:ide}

\begin{figure}[htb!]
    \centering 
    \begin{subfigure}[b]{0.3\textwidth}
        \includegraphics[width=\textwidth]{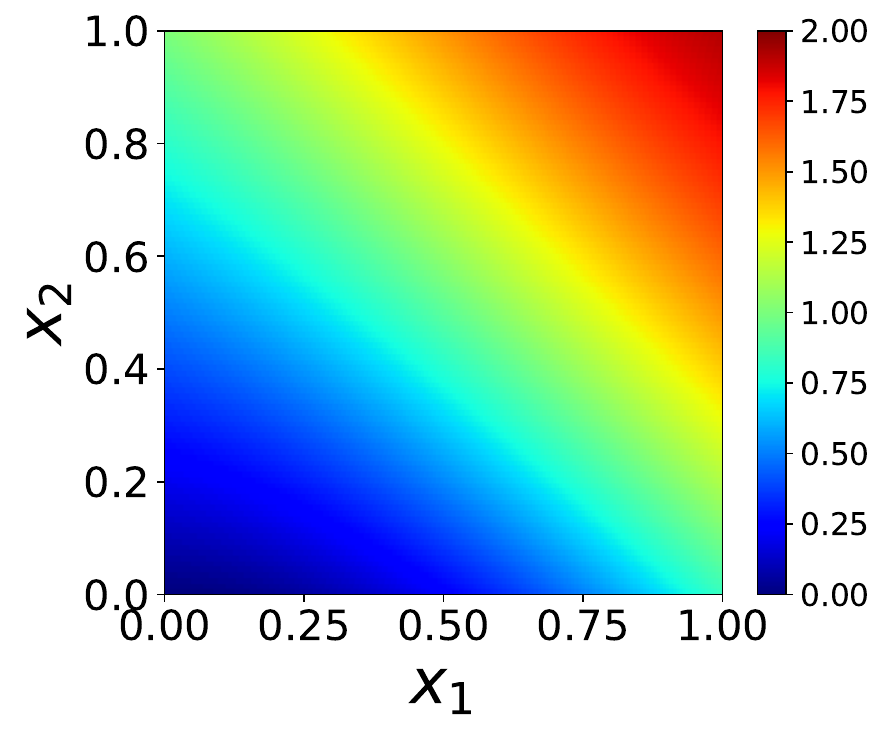}
        \caption{Exact solution}
    \end{subfigure}
    \begin{subfigure}[b]{0.3\textwidth}
        \includegraphics[width=\textwidth]{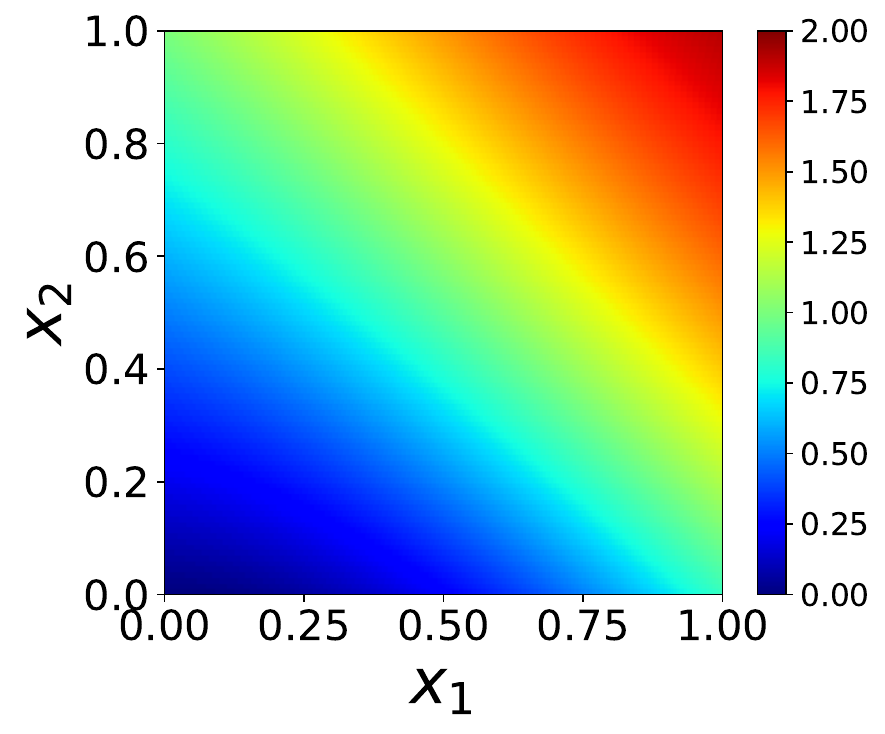}
        \caption{Prediction}
    \end{subfigure} 
    \begin{subfigure}[b]{0.3\textwidth}
        \includegraphics[width=\textwidth]{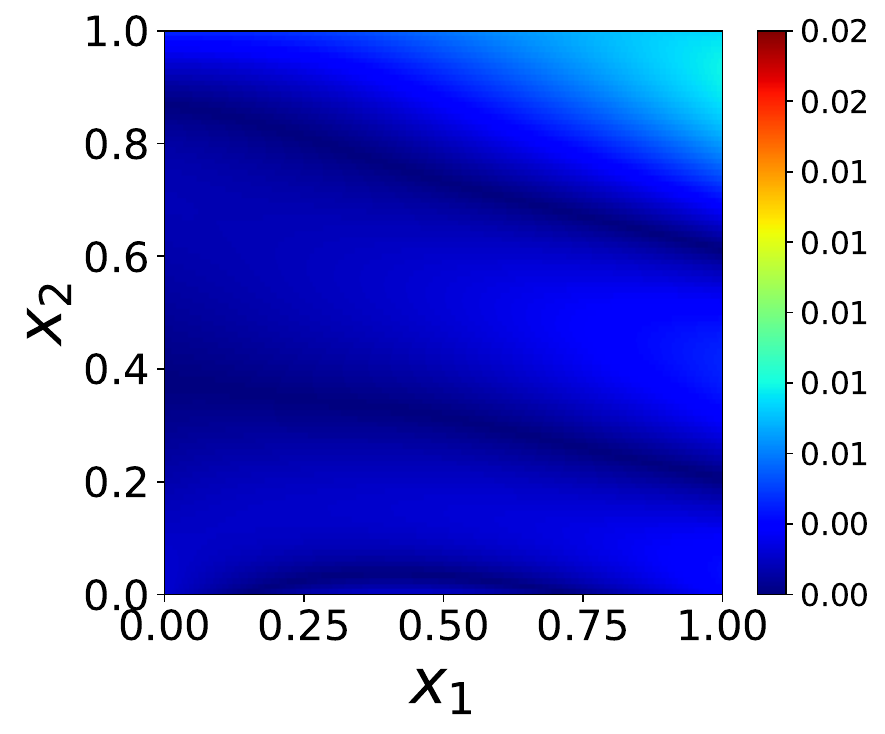}
        \caption{Absolute error}
    \end{subfigure}
    \caption{2D-Volterra equation: approximated solution versus the reference solution.}
    \label{fig:Volterra_error}
\end{figure}

A-PINN is a variant of PINNs, designed to solve integro-differential equations \cite{yuan2022pinn}. We apply our DCGD algorithms to A-PINN for solving the following nonlinear 2-dimensional Volterra IDE:
\begin{align*}
    \frac{\partial^2 u(t, x)}{\partial t^2} = \frac{\partial u(t, x)}{\partial x} - \frac{\partial u(t, x)}{\partial t} - u(t, x) + g(t, x) + \lambda \int_{0}^{x} \int_{0}^{t} f\cos(y_1 - y_2)u(y_1, y_2)dy_1dy_2
\end{align*}
where the boundary conditions are $u(0, x) = x$, $\frac{\partial u(0, x)}{\partial t} = \sin(x)$, and $u(t, 0) = t \sin(t)$,  $0 \leq t,\, x \leq 1$. The analytic solution is $u^*(t,x)= x+t\sin(t+x)$ with $\lambda =1$ where $g(t,x)$ is derived by given equation. 

Within the framework of A-PINN, it converts the above integral equation into the following equation by representing integrals as auxiliary output variables:
\begin{align*}
&\frac{\partial^2 u(t,x)}{\partial t^2} = \frac{\partial u(t,x)}{\partial x} - \frac{\partial u(t,x)}{\partial t} - u(t,x) + g(t,x) + \lambda v(t,x), \\
&\frac{\partial v(t,x)}{\partial x} = \int_{0}^{t} f \cos(y_1 - x) u(y_1,x) dy_1 = t w(t,x), \\
&\frac{\partial w(t,x)}{\partial t} = \cos(t - x) u(t,x),
\end{align*}
where the new variables $v$ and $w$ satisfies the boundary condition $v(t,0) = 0$, $w(0,x)=0$.

For A-PINNs, we employ a 3-layer fully connected neural network with 50 neurons per layer and a hyperbolic tangent activation function. For training, 128 points are randomly sampled in boundaries and 10 times more points in the domain as the collocation points in each epochs. We train A-PINN models for $5,000$ epochs.


     

\paragraph{Separable PINN: 3-dimensional Helmholtz equation}\label{app:spinn}


SPINN is a a novel architecture designed to effectively reduce the computational cost of PINNs, especially when addressing high-dimensional PDEs  \cite{Cho2023Separable}. To test the performance of DCGD for SPINNs, we consider the following $3$-dimensional Helmholtz equation
\begin{align*}
    &\Delta u(x,y,z) +k^2u(x,y,z) = f(x,y,z),  \quad (x,y,z) \in \Omega, \\ 
    &u(x,y,z) = 0, \quad (x,y,z) \in \partial \Omega, \\
    &\Omega = [-1,1]^3.
\end{align*}
The solution is given by 
$u^*(x,y) = \sin(a_1\pi x)\sin(a_2 \pi y)\sin(a_3 \pi z)$
where
\begin{align*}
f(x,y,z) =(k^2-a_1^2\pi^2-a_2^2\pi^2-a_3^2\pi^2)\sin(a_1\pi x)\sin(a_2 \pi y)\sin(a_3 \pi z)
\end{align*}
with $k=1, a_1=4, a_2=4, a_3=3$. 

We follow the optimal hyperparameter setting reported in \cite{Cho2023Separable}. For ADAM and DCGD (Center), the learning rate is  $0.001$. The input points are resampled every 100 epochs. Regarding model architecture, we use the SOTA model, so called (SPINN + Modified MLP). We record the mean and standard deviation of relative $L^2$ errors from $3$ independent trials in Table~\ref{tab:SPINN}, indicating that the performance of SPINN can be significantly improved when trained with DCGD for a varying number of collocation points. 

\begin{table*}[htb!]
\begin{center}
\begin{tabular}{lccc}
\toprule
Method                         & $N_c$  & Relative $L^2$ error     &  Training speed    \\ \midrule

\multirow{5}{*}{SPINN}         & $16^3$  & 0.0578 (0.0039)   & 1.65 (ms/iter) \\
                               & $32^3$  & 0.0352 (0.0035)   & 1.78 (ms/iter) \\
                               & $64^3$  & 0.0280 (0.0066)   & 2.38 (ms/iter) \\
                               & $128^3$ & 0.0294 (0.0123)   & 2.71 (ms/iter) \\
                               & $256^3$ & 0.0319 (0.0026)   & 5.12 (ms/iter) \\ \midrule
\multirow{5}{*}{SPINN + DCGD (Center)} & $16^3$  & 0.0447 (0.0176)   & 1.76 (ms/iter) \\
                                      & $32^3$  & 0.0104 (0.0048)   & 1.90 (ms/iter) \\
                                      & $64^3$  & 0.0032 (0.0002)   & 2.59 (ms/iter) \\
                                      & $128^3$ & \textbf{0.0015 (0.0003)}  & 2.85 (ms/iter) \\
                                      & $256^3$ & 0.0020 (0.0009)   & 5.34 (ms/iter) \\ \midrule
\end{tabular}
\end{center}
\caption{Helmholtz Equation (3d): Relative $L^2$ errors and training speed. $N_c$ is the number of collocation points.}
\label{tab:SPINN}
\end{table*}

\begin{figure}[htb!]
    \centering 
    \begin{subfigure}[b]{0.3\textwidth}
        \includegraphics[width=\textwidth]{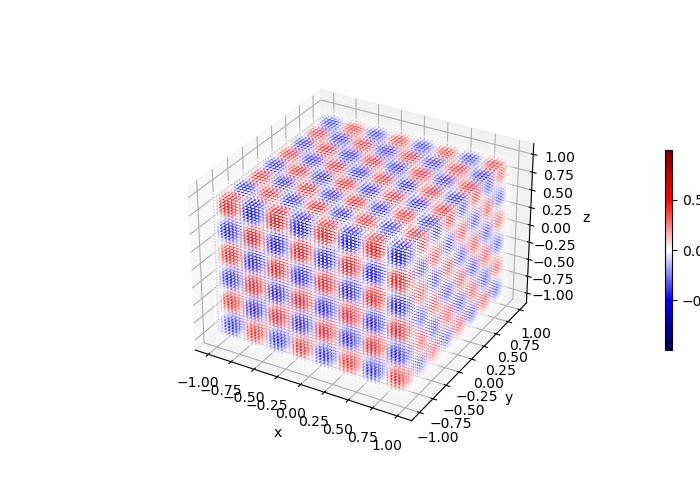}
        \caption{Exact solution}
    \end{subfigure}
    \begin{subfigure}[b]{0.3\textwidth}
        \includegraphics[width=\textwidth]{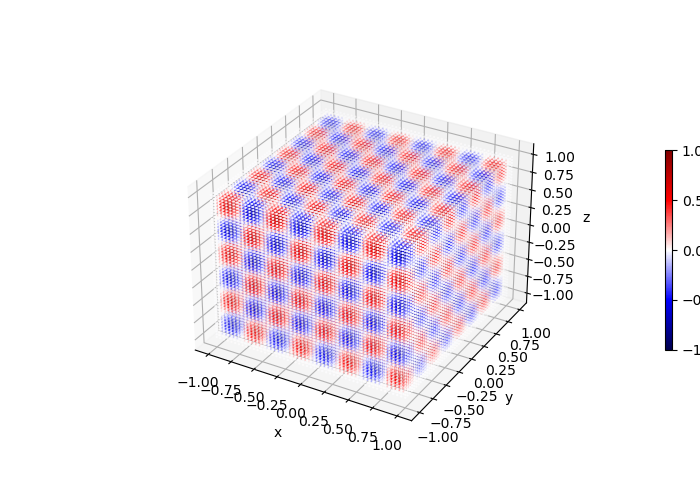}
        \caption{Prediction}
    \end{subfigure} 
    \begin{subfigure}[b]{0.3\textwidth}
        \includegraphics[width=\textwidth]{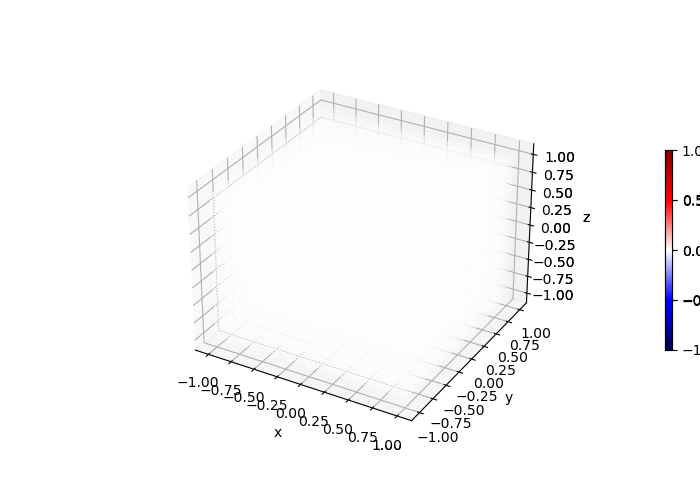}
        \caption{Absolute error}
    \end{subfigure}
    \caption{3D-Helmholtz equation: approximated solution versus the reference solution.}
    \label{fig:3dHelmholtz_error}
\end{figure}

\section{Supplemental results}
\subsection{Ablation study}\label{app:ablation study}

In Section~\ref{subsec:algorithms}, we introduce three specific algorithms: DCGD (Projection), DCGD (Average), and DCGD (Center). We conduct an ablation study to investigate the impact of different updated gradient schemes within the dual cone region. More specifically, we compare the performance of these three algorithms on three benchmark equations and nonlinear 2D Volterra integral equations. Detailed experimental settings can be found in Appendices~\ref{app:benchmark} and \ref{app:ide}.

Tables~\ref{tab:benchmark_ablation} and \ref{tab:MaxMin_ablation} demonstrate that DCGD (Center) outperforms the other DCGD algorithms across all experiments. Therefore, we consider DCGD (Center) as the default DCGD algorithm.

\begin{table*}[htb!]
\begin{center}
\begin{tabular}{lccc}\toprule
Equation           & Helmholtz              & Burgers'              & Klein-Gordon        \\ \midrule
Optimizer          & Mean (std)             & Mean (std)            & Mean (std)          \\ \midrule
Projection         & 0.0089 (0.0022)        & 0.0139 (0.0035)       & 0.0216 (0.0130)     \\
Average            & 0.0166 (0.0124)        & 0.0156 (0.0032)       & 0.0292 (0.0088)     \\
Center             & \textbf{0.0029 (0.0005)} & \textbf{0.0124 (0.0046)} & \textbf{0.0069 (0.0027)}   \\ \bottomrule
\end{tabular}
\caption{\label{tab:benchmark_ablation}
Average and standard deviation of relative $L^2$ errors in 10 independent trials for each DCGD algorithm.
}
\end{center}
\end{table*}

\begin{table*}[htb!]
\begin{center} 
\begin{tabular}{lcccccc}\toprule
Equation          & \multicolumn{2}{c}{Helmholtz} & \multicolumn{2}{c}{Burgers'} & \multicolumn{2}{c}{Klein-Gordon} \\ \midrule
Optimizer         & Max       & Min      & Max       & Min       & Max       & Min       \\ \midrule
Projection        & 0.0138    & 0.0062   & 0.0217    & 0.0097    & 0.0573    & 0.0120    \\
Average           & 0.0469    & 0.0078   & 0.0209    & 0.0115    & 0.0442    & 0.0178    \\ 
Center            & \textbf{0.0038} & \textbf{0.0019} & \textbf{0.0163} & \textbf{0.0096} & \textbf{0.0112} & \textbf{0.0042} \\ \bottomrule
\end{tabular}
\caption{
Min and Max Relative $L^2$ errors in 10 independent trials for each DCGD algorithm.
}\label{tab:MaxMin_ablation}
\end{center}
\end{table*}

\subsection{Computational cost}\label{app:compute}
In this section, we acknowledge that our proposed method incurs higher computational costs due to the need for backpropagation for each individual loss. Nonetheless, through a comparison of training speeds, we empirically demonstrate that DCGD achieves superior performance with computational costs comparable to those of existing competitors.

\begin{table*}[htb!]
\begin{center}
\begin{tabular}{lcc} 
\toprule
& \multicolumn{2}{c}{PDE equation} \\ \midrule
Optimizer    & Heat (5D) & Helmholtz (3D)    \\ \midrule
Adam         & 11.1 (iter/s)    & 9.05 (iter/s)   \\
L-BFGS       & 0.53 (iter/s)    & 1.19 (iter/s)   \\
LRA          & 3.39 (iter/s)    & 5.43 (iter/s)  \\
NTK          & 3.92 (iter/s)    & 7.49 (iter/s) \\
MultiAdam    & 4.10 (iter/s)    & 6.85 (iter/s) \\
PCGrad       & 5.98 (iter/s)    & 8.94 (iter/s)  \\
MGDA         & 3.46 (iter/s)      & 6.16 (iter/s)  \\
CAGrad       & 4.22 (iter/s)      & 8.80 (iter/s)\\
Aligned-MTL       & 4.10 (iter/s) & 7.97 (iter/s)\\
DCGD (Average)    & 3.78 (iter/s) & 5.42 (iter/s) \\
DCGD (Projection) & 3.70 (iter/s) & 5.64 (iter/s) \\
DCGD (Center)     & 4.35 (iter/s) &  8.90 (iter/s)\\ \bottomrule

\end{tabular}
\caption{\label{tab:highdim_speed}
Training speed in higher dimensional equations example}
\end{center}
\end{table*}

\section{Pseudo codes of algorithms}\label{app:algorithms}

This section provides pseudo codes for the proposed DCGD algorithms. 

Firstly, DCGD (Projection) uses the projection of the total gradient $\nabla \cL (\theta_t)$ onto $\rmG_t$ when $\nabla \cL (\theta_t) \notin \rmK^*_t$. Otherwise, $\nabla \cL (\theta_t)$ is used. Then the update vector $g_t^{\text{dual}}$ can be defined as follow:
\begin{align}\label{eq:gt_projection}
 \mbox{DCGD (Projection)} \qquad    g_t^{\text{dual}} =
    \begin{cases}
        \nabla \cL (\theta_t), & \mbox{if $\nabla \cL(\theta_t)\in \rmK_t^*$} \\
        \nabla_t \cL_{\|\nabla \cL_r^{\perp}}, & \mbox{if $\nabla \cL(\theta_t)\notin \rmK_t^*$ and $\ip{\nabla \cL (\theta_t)}{ \nabla \cL_r (\theta_t) < 0}$} \\
        \nabla_t \cL_{\|\nabla \cL_b^{\perp}}. & \mbox{if $\nabla \cL(\theta_t)\notin \rmK_t^*$ and $\ip{\nabla \cL (\theta_t)}{ \nabla \cL_b (\theta_t) < 0}$}
    \end{cases}
\end{align}

Secondly, DCGD (Average) uses the the average of $\nabla_t \cL_{\|\nabla \cL_r^{\perp}}$ and $\nabla_t \cL_{\|\nabla \cL_b^{\perp}}$ when the total gradient is outside $\rmK^*_t$. Otherwise, $\nabla \cL (\theta_t)$ is used. The update vector $g_t^{\text{dual}}$ of DCGD (Average) is defined as follow:
\begin{align}\label{eq:gt_average}
    \mbox{DCGD (Average)} \qquad g_t^{\text{dual}} =
    \begin{cases}
        \nabla \cL (\theta_t) & \mbox{if $\nabla \cL(\theta_t)\in \rmK_t^*$} \\
        \frac{1}{2}(\nabla_t \cL_{\|\nabla \cL_r^{\perp}} + \nabla_t \cL_{\|\nabla \cL_b^{\perp}}),  &\mbox{if $\nabla \cL(\theta_t)\notin \rmK_t^*$}         
    \end{cases} 
\end{align}

Thirdly, DCGD (Center) employs the following update vector $g_t^{\text{dual}}$, regardless of whether the total gradient $\nabla \cL (\theta_t)$ is included in $\rmK^*_t$:
\begin{align}
    \mbox{DCGD (Center)} \qquad g_t^{\text{dual}} = \frac{\ip{g_t^c}{\nabla \cL(\theta_t)}}{\|g_t^c\|^2}g_t^c \text{  where } g_t^c = \frac{\nabla \cL_b(\theta_t)}{\|\nabla \cL_b(\theta_t)\|}+\frac{\nabla \cL_r(\theta_t)}{\|\nabla \cL_r(\theta_t)\|}
\end{align}

Here, pseudo codes of these algorithms are summarized in  Algorithms~ \ref{alg:proj}, \ref{alg:avg}, and \ref{alg:center}. Note that we introduce a conflict threshold $\alpha$ as a stopping condition for DCGD algorithms, as they can reach a Pareto-stationary point characterized by $\phi_t= \pi$. That is, the algorithm stops when the parameter converges close to a Pareto-stationary point such that $|\cos(\phi_t)-\pi|<\alpha$. Throughout our experiments, we set $\alpha=10^{-8}$.

\begin{algorithm}[htb!]
   \caption{DCGD (Projection) }
   \label{alg:proj}
\begin{algorithmic}
   \STATE {\bfseries Require:} learning rate $\lambda$, max epoch $T$, initial point $\theta_0$, gradient threshold $\varepsilon$,  conflict threshold $\alpha$
   \FOR{$t=1$ {\bfseries to} $T$}
   \IF{ $\pi-\alpha < \phi_t \leq \pi$ or $\|\nabla \cL(\theta_t)\|<\varepsilon$}
   \STATE \textbf{break}
   \ENDIF
   
   \IF{$\nabla \cL(\theta_t) \notin \rmK^* $}
   \STATE $g^{dual}_t =\nabla \cL(\theta_t)$
   \ELSIF{$\nabla \cL(\theta_t) \in \rmK^* $ and $\ip{\nabla \cL(\theta_t)}{\nabla \cL_r(\theta_t)} < 0$}
   \STATE $g^{dual}_t =\nabla_{t} \cL_{\|\nabla \cL_r^\perp} $
   \ELSIF{$\nabla \cL(\theta_t) \in \rmK^* $ and $\ip{\nabla \cL(\theta_t)}{\nabla \cL_b(\theta_t)} < 0$}
   \STATE $g^{dual}_t =\nabla_{t} \cL_{\|\nabla \cL_b^\perp} $
   \ENDIF
   \ENDFOR
\end{algorithmic}
\end{algorithm}

\begin{algorithm}[htb!]
   \caption{DCGD (Average) }
   \label{alg:avg}
\begin{algorithmic}
   \STATE {\bfseries Require:}  learning rate $\lambda$, max epoch $T$, initial point $\theta_0$, gradient threshold $\varepsilon$,  conflict threshold $\alpha$

   \FOR{$t=1$ {\bfseries to} $T$}
   \IF{ $\pi-\alpha < \phi_t \leq \pi$ or $\|\nabla \cL(\theta_t)\|<\varepsilon$}
   \STATE \textbf{break}
   \ENDIF   
  
   \IF{$\nabla \cL (\theta_t) \notin \rmK^* $}
   \STATE $g^{dual}_t = \frac{1}{2}\nabla_{t} \cL_{\|\nabla \cL_r^\perp} + \frac{1}{2} \nabla_{t} \cL_{\|\nabla \cL_b^\perp}$
   \ELSE
   \STATE $g^{dual}_t = \nabla \cL (\theta_t) $
   \ENDIF
   \STATE $\theta_t = \theta_{t-1} - \lambda g^{dual}_t$

   \ENDFOR
\end{algorithmic}
\end{algorithm}

\begin{algorithm}[htb!]
   \caption{DCGD (Center)}
   \label{alg:center}
\begin{algorithmic}
   \STATE {\bfseries Require:}  learning rate $\lambda$, max epoch $T$, initial point $\theta_0$, gradient threshold $\varepsilon$,  conflict threshold $\alpha$

   \FOR{$t=1$ {\bfseries to} $T$}
   \IF{ $\pi-\alpha < \phi_t \leq \pi$ or $\|\nabla \cL(\theta_t)\|<\varepsilon$}
   \STATE \textbf{break}
   \ENDIF      
    
   \STATE $g_t^c =  \frac{\nabla\cL_b(\theta_t)}{\|\nabla\cL_b(\theta_t)  \|} + \frac{\nabla\cL_r(\theta_t) }{\|\nabla\cL_r(\theta_t) \|}$
   \STATE $g^{dual}_t = \frac{\ip{g_t^c}{\nabla \cL (\theta_t)}}{\|g_t^c\|^2}g_t^c$
   \STATE $\theta_t = \theta_{t-1} - \lambda g^{dual}_t$

   \ENDFOR
\end{algorithmic}
\end{algorithm}

DCGD algorithms can be easily combined with other optimizers or strategies thanks to its flexible framework. For example, one can design a DCGD algorithm combined with ADAM to leverage advantages of adaptive gradient methods, see Algo.~\ref{alg:dcgd_adam}. For our experiments, we consider the DCGD (center) combined with ADAM as the default. Algorithm~\ref{alg:DCGD_lossbalancing} presents the psedo code for DCGD combined with a loss balancing method such as LRA and NTK.

\begin{algorithm}[htb!]
   \caption{DCGD with Adam }
   \label{alg:dcgd_adam}
\begin{algorithmic}
   \STATE {\bfseries Require:} learning rate $\lambda$, max epoch $T$, betas $\beta_1, \beta_2$, DCGD operator $\text{DCGD}(\cdot)$
   \FOR{$t=1$ {\bfseries to} $T$}
   \STATE $g^{dual}_t = \text{DCGD}(\cL_r(\theta),\cL_b(\theta))$
   \STATE $g_t \leftarrow g^{dual}_t$
   \STATE $m_t \leftarrow \beta_1 m_{t-1} + (1 - \beta_1)g_t$
   \STATE $v_t \leftarrow \beta_2 v_{t-1} + (1 - \beta_2)g_t^2$
   \STATE $\hat{m}_t \leftarrow \frac{m_t}{1 - \beta_1^t}$
   \STATE $\theta_t \leftarrow \theta_{t-1} - \gamma_t \frac{\hat{m}_t}{\sqrt{\hat{v}_t} + \epsilon}$
   \ENDFOR
\end{algorithmic}
\end{algorithm}

\begin{algorithm}[htb!]
   \caption{DCGD with a loss balancing method}
   \label{alg:DCGD_lossbalancing}
\begin{algorithmic}
   \STATE {\bfseries Require:} learning rate $\lambda$, max epoch $T$, loss balancing operator $\text{LB}(\cdot)$
   \FOR{$t=1$ {\bfseries to} $T$}
   \STATE $(\beta_r,\beta_b)_t = \text{LB}(\cL_r(\theta_t), \cL_b(\theta_t))$
   \STATE $\cL_b(\theta_t)  \leftarrow \beta_b\cL_b(\theta_t)$
   \STATE $\cL_r(\theta_t)  \leftarrow \beta_r\cL_r(\theta_t) $
   \STATE Choose $g_t^{\text{dual}} \in \rmK^*_t$
   \STATE $\theta_t = \theta_{t-1} - \lambda g_t^{\text{dual}}$
   \ENDFOR
\end{algorithmic}
\end{algorithm}

\end{document}